\title{Distributed Matrix Completion\\ and Robust Factorization}
\author{
Lester Mackey\textsuperscript{a, $\dagger$}\\
\And
Ameet Talwalkar\textsuperscript{b, $\dagger$}\\
\And
Michael I. Jordan\textsuperscript{b, c}\\
}
\begin{document}

\ifdefined\jmlrformat
\title{Distributed Matrix Completion and Robust Factorization}

\author{\name Lester Mackey\textsuperscript{$\dagger$} \email lmackey@stanford.edu \\
       \addr Stanford University\\
       Department of Statistics \\
       390 Serra Mall \\
       Stanford, CA 94305
       \AND
       \name Ameet Talwalkar\textsuperscript{$\dagger$} \email ameet@cs.berkeley.edu \\
       \addr University of California, Berkeley\\
       Department of Electrical Engineering and Computer Science\\
       465 Soda Hall\\
       Berkeley, CA 94720
       \AND
       \name Michael I. Jordan \email jordan@cs.berkeley.edu \\
       \addr University of California, Berkeley\\
       Department of Electrical Engineering and Computer Science and Department of Statistics\\
       465 Soda Hall\\
       Berkeley, CA 94720
}

\editor{TBD}

\maketitle
{
\vspace*{-0.8cm}
\noindent\textsuperscript{$\dagger$} These authors contributed equally.\par\bigskip
\par
}
\else
\maketitle
{\centering
\vspace*{-0.8cm}
\noindent\textsuperscript{a} Department of Statistics, Stanford \\ 
\textsuperscript{b} Department of Engineering and Computer Science, UC Berkeley \\
\textsuperscript{c} Department of Statistics, UC Berkeley \\
\textsuperscript{$\dagger$} These authors contributed equally.\par\bigskip
\par
}
\fi

\begin{abstract}%
If learning methods are to scale to the massive sizes of modern datasets,
it is essential for the field of machine learning to embrace parallel and distributed computing.
Inspired by the recent
development of matrix factorization methods with rich theory
but poor computational complexity and by the relative
ease of mapping matrices onto distributed architectures, we introduce 
a scalable divide-and-conquer framework for noisy matrix factorization.  
We present a thorough theoretical analysis of this framework in 
which we characterize the statistical errors introduced by the ``divide'' 
step and control their magnitude in the ``conquer'' step, so that the 
overall algorithm enjoys high-probability estimation guarantees comparable 
to those of its base algorithm.  We also present experiments in collaborative 
filtering and video background modeling that demonstrate the near-linear 
to superlinear speed-ups attainable with this approach.
\end{abstract}

\ifdefined\jmlrformat
\begin{keywords}
Collaborative filtering, 
divide-and-conquer,  
matrix completion, 
matrix factorization,
parallel and distributed algorithms,
randomized algorithms,
robust matrix factorization, 
video surveillance
\end{keywords}
\fi

\section{Introduction}
The scale of modern scientific and technological datasets poses major new 
challenges for computational and statistical science.  
Data analyses and learning algorithms suitable
for modest-sized datasets are often entirely infeasible for the terabyte 
and petabyte datasets that are fast becoming the norm.  There are two basic 
responses to this challenge.  One response is to abandon algorithms 
that have superlinear complexity, focusing attention on simplified 
algorithms that---in the setting of massive data---may achieve satisfactory
results because of the statistical strength of the data.  While this
is a reasonable research strategy, it requires developing suites of 
algorithms of varying computational complexity for each inferential 
task and calibrating statistical and computational efficiencies.  
There are many open problems that need to be solved if such an effort
is to bear fruit.

The other response to the massive data problem is to retain existing
algorithms but to apply them to subsets of the data.  To obtain useful
results under this approach, one embraces parallel and distributed 
computing architectures, applying existing base algorithms to multiple 
subsets of the data in parallel and then combining the results.  Such a divide-and-conquer 
methodology has two main virtues: (1) it builds directly on algorithms 
that have proven their value at smaller scales
and that often have strong theoretical guarantees, 
and (2) it requires little in the way of new algorithmic 
development.  The major challenge, however, is in preserving the theoretical 
guarantees of the base algorithm once one embeds the algorithm in a 
computationally-motivated divide-and-conquer procedure.  Indeed, the 
theoretical guarantees often refer to subtle statistical properties of 
the data-generating mechanism (e.g., sparsity, information spread,
and near low-rankedness).  These may or may not be retained under the ``divide'' 
step of a putative divide-and-conquer solution.  In fact, we generally 
would expect subsampling operations to damage the relevant statistical 
structures. 
Even if these properties are preserved, we face the difficulty of 
combining the intermediary results of the ``divide'' step
into a final consilient solution to the original problem.
The question, therefore, is whether we can design divide-and-conquer 
algorithms that manage the tradeoffs relating these statistical properties 
to the computational degrees of freedom such that the overall algorithm 
provides a scalable solution that retains the theoretical guarantees of 
the base algorithm.

In this paper,\footnote{A preliminary form of this work appears in \citet{MackeyTaJo11}.} 
we explore this issue in the context of an important class of
machine learning algorithms---the matrix factorization algorithms underlying a
wide variety of practical applications, including collaborative filtering for
recommender systems~(e.g., \citep{KorenBeVo09} and the references therein),
link prediction for social networks~\citep{Hoff05}, click prediction for web
search~\citep{DasDaGaRa07}, video surveillance~\citep{CandesLiMaWr09},
graphical model selection~\citep{ChandrasekaranSaPaWi09}, document
modeling~\citep{MinZhWrMa10}, and image alignment~\citep{PengGaWrXuMa10}.  We
focus on two instances of the general matrix factorization problem: noisy
matrix completion~\citep{CandesPl10}, where the goal is to recover a low-rank
matrix from a small subset of noisy entries, and noisy robust matrix
factorization~\citep{CandesLiMaWr09,ChandrasekaranSaPaWi09}, where the aim is
to recover a low-rank matrix from corruption by noise and outliers of arbitrary
magnitude.  These two classes of matrix factorization problems have attracted
significant interest in the research community. 

Various approaches have been proposed for scalable noisy matrix factorization
problems, in particular for noisy matrix completion, though the vast majority
tackle rank-constrained non-convex formulations of these problems with no
assurance of finding optimal
solutions~\citep{ZhouWiScPa08,GemullaNiHaSi11,RechtRe11,NiuReReWr11,YuHsSiDh12}.  In
contrast, convex formulations of noisy matrix factorization relying on the
nuclear norm have been shown to admit strong theoretical estimation
guarantees~\citep{AgarwalNeWa11,CandesLiMaWr09,CandesPl10,NegahbanWa12}, and a
variety of algorithms~\citep[e.g.,][]{LinGaWrWuChMa09,MaGoCh11,TohYu10} have
been developed for solving both matrix completion and robust matrix
factorization via convex relaxation.  Unfortunately, however, all of these
methods are inherently sequential, and all rely on the repeated and costly
computation of truncated singular value decompositions (SVDs), factors that
severely limit the scalability of the algorithms. Moreover, previous attempts
at reducing this computational burden have introduced approximations without
theoretical justification~\citep{MuDoYuYa11}.

To address this key problem of noisy matrix factorization in a scalable and
theoretically sound manner, we propose a divide-and-conquer framework for
large-scale matrix factorization. Our framework, entitled Divide-Factor-Combine
(\fastmf), randomly divides the original matrix factorization task into cheaper
subproblems, solves those subproblems in parallel using a base matrix
factorization algorithm for nuclear norm regularized formulations, and combines
the solutions to the subproblems using efficient techniques from randomized
matrix approximation.  We develop a thoroughgoing theoretical analysis for the
\fastmf framework, linking statistical properties of the underlying matrix to
computational choices in the algorithms and thereby providing conditions under
which statistical estimation of the underlying matrix is possible.  We also
present experimental results for several \fastmf variants demonstrating that
\fastmf can provide near-linear to superlinear speed-ups in practice.

The remainder of the paper is organized as follows. In
Sec.~\ref{sec:fast_algs}, we define the setting of noisy matrix factorization
and introduce the components of the \fastmf framework.
Secs.~\ref{sec:roadmap}, \ref{sec:theory}, and \ref{sec:theory-spike}
present our theoretical analysis of \fastmf, along with a 
new analysis of convex noisy matrix completion and 
a novel characterization of randomized matrix approximation algorithms.
To illustrate the practical speed-up and robustness of \fastmf, 
we present experimental results on collaborative filtering, 
video background modeling, and simulated
data in Sec.~\ref{sec:experiments}. 
Finally, we conclude in Sec.~\ref{sec:conclusion}.

\paragraph{Notation}
For a matrix $\M \in \reals^{m \times n}$, 
we define $\M_{(i)}$ as the $i$th row vector, $\M^{(j)}$ as the $j$th column vector,
and $\M_{ij}$ as the $ij$th entry. 
If $\rank{\M}=r$, we write the compact singular value decomposition (SVD) of
$\M$ as $\U_{M} \mSigma_{M} \V_{M}^\top$, where $\mSigma_M$ is diagonal and
contains the $r$ non-zero singular values of $\M$, and $\U_M \in \reals^{m
\times r}$ and $\V_M \in \reals^{n \times r}$ are the corresponding left and
right singular vectors of $\M$.  We define $\pinv{\M} =\V_{M} \mSigma_{M}^{-1}
\U_{M}^\top$ as the Moore-Penrose pseudoinverse of $\M$ and $\P_M=\M\pinv{\M}$
as the orthogonal projection onto the column space of $\M$.  We let $\norm{
\cdot }_2$, $\norm{ \cdot }_F$, and $\norm{\cdot}_*$ respectively denote the
spectral, Frobenius, and nuclear norms of a matrix, 
$\norm{\cdot}_\infty$ denote the maximum entry of a matrix,
and $\norm{\cdot}$ represent the $\ell_2$ norm of a vector.

\section{The Divide-Factor-Combine Framework}
\label{sec:fast_algs}
In this section, we present a general divide-and-conquer framework for scalable noisy
matrix factorization.  We begin by defining the problem setting of interest.
\subsection{Noisy Matrix Factorization (MF)}
In the setting of noisy matrix factorization, we observe a subset of the
entries of a matrix $\M = \L_0 + \S_0 + \Z_0 \in\reals^{m\times n}$, where
$\L_0$ has rank $r\ll m, n$, $\S_0$ represents a sparse matrix of outliers
of arbitrary magnitude, and $\Z_0$ is a dense noise matrix.  We let $\obsset$
represent the locations of the observed entries and $\obsproj$ be the orthogonal
projection onto the space of $m \times n$ matrices with support $\obsset$, so
that $$(\obsproj(\M))_{ij} = \M_{ij},\ \text{if}\ (i,j)\in\Omega\quad
\text{and}\quad (\obsproj(\M))_{ij} = 0\ \text{otherwise}.%
\footnote{When $\Q$ is a submatrix of $\M$ we abuse notation
and let $\obsproj(\Q)$ be the corresponding submatrix of
$\obsproj(\M)$.}$$ 
Our goal is to
estimate the low-rank matrix $\L_0$ from $\obsproj(\M)$ with error proportional
to the noise level $\Delta\defeq\norm{\Z_0}_F$.  We will focus on two specific
instances of this general problem:
\begin{itemize}
\item{\bf Noisy Matrix Completion (MC):} $s\defeq|\obsset|$ entries of $\M$ are
revealed uniformly without replacement, along with their locations.  There are
no outliers, so that $\S_0$ is identically zero.
\item{\bf Noisy Robust Matrix Factorization (RMF):} $\S_0$ is identically zero
save for $s$ outlier entries of arbitrary magnitude with unknown locations
distributed uniformly without replacement.  All entries of $\M$ are observed,
so that $\obsproj(\M)=\M$.
\end{itemize}

\subsection{Divide-Factor-Combine}

The Divide-Factor-Combine (\fastmf) framework divides the expensive task of matrix
factorization into smaller subproblems, executes those subproblems in parallel, and then efficiently
combines the results into a final low-rank estimate of $\L_0$.  
We highlight three variants of this general framework in Algorithms \ref{alg:fastmf-proj}, \ref{alg:fastmf-rp}, and \ref{alg:fastmf-nys}. 
These algorithms, which we refer to as \projmf, \rpmf, and \gnysmf, differ in
their strategies for division and recombination
but adhere to a common pattern of three simple steps:
\ifdefined\jacmformat
\begin{describe}{{\bf(D step)}}
\else
\begin{description}%
\fi
\item[{\bf(D step)}] \textbf{Divide input matrix into submatrices:} \projmf and \rpmf randomly
partition $\obsproj(\M)$ into $t$ $l$-column submatrices,
$\{\obsproj(\C_1),\ldots,\obsproj(\C_t)\}$,\footnote{For ease of discussion, we
assume that $t$ evenly divides $n$ so that $l = n /t$.  
In general, $\obsproj(\M)$ can always be partitioned into $t$
submatrices, each with either $\lfloor n / t \rfloor$ or $\lceil n / t \rceil$
columns.} while \gnysmf selects an $l$-column submatrix, $\obsproj(\C)$, and a
$d$-row submatrix, $\obsproj(\R)$, uniformly at random.

\item[{\bf(F step)}] \textbf{Factor each submatrix in parallel using any base MF algorithm:} \projmf
and \rpmf perform $t$ parallel submatrix factorizations, while \gnysmf performs two such
parallel factorizations.  Standard base MF algorithms output the following low-rank
approximations: $\{\hat \C_1,\ldots,\hat \C_t\}$ for \projmf and \rpmf; $\hat \C$ 
and $\hat \R$ for \gnysmf.  All matrices are retained in factored form.

\item[{\bf(C step)}] \textbf{Combine submatrix estimates:} \projmf generates a
final low-rank estimate $\Lprojmf$ by projecting $[\hat\C_1,\ldots,\hat\C_t]$
onto the column space of $\hat \C_1$, \rpmf uses random projection to compute a
rank-$k$ estimate $\Lrpmf$ of $[\hat\C_1 \cdots \hat\C_t]$ where $k$ is the
median rank of the returned subproblem estimates, and \gnysmf forms the
low-rank estimate $\Lgnysmf$ from $\hat \C$ and $\hat \R$ via the generalized
\nys method. These matrix approximation techniques are described in more detail
in Sec.~\ref{sec:rand_mat_approx}.
\ifdefined\jacmformat
\end{describe}
\else
\end{description}
\fi

\begin{figure*}[h]
\renewcommand*\footnoterule{}
\begin{minipage}[t]{.47\textwidth}
\begin{algorithm}[H]
   \caption{\projmf}
   \label{alg:fastmf-proj}
\begin{algorithmic}
	 \STATE {\bfseries Input:} $\obsproj(\M)$, $t$
	 \STATE $\{\obsproj(\C_i)\}_{1\le i \le t}$ = \textsc{SampCol}($\obsproj(\M)$, $t)$ \\
   \textbf{do in parallel}
   \STATE\hspace{5mm} $\hat{\C}_1$ = \textsc{Base-MF-Alg}$(\obsproj(\C_1))$ \\
   \STATE\hspace{30mm} \vdots \\
   \vspace{1mm}
   \STATE\hspace{5mm} $\hat{\C}_t$ = \textsc{Base-MF-Alg}$(\obsproj(\C_t))$\\ 
   \textbf{end do}
	 \STATE $\Lprojmf$ = \textsc{ColProjection}($\hat{\C}_1, \ldots, \hat{\C}_t)$
\end{algorithmic}
\end{algorithm}
\end{minipage}
\hfill
\begin{minipage}[t]{.5\textwidth}
\begin{algorithm}[H]
   \caption{\rpmf}
   \label{alg:fastmf-rp}
\begin{algorithmic}
	 \STATE {\bfseries Input:} $\obsproj(\M)$, $t$
	 \STATE $\{\obsproj(\C_i)\}_{1\le i \le t}$ = \textsc{SampCol}($\obsproj(\M)$, $t)$ \\
   \textbf{do in parallel}
   \STATE\hspace{5mm} $\hat{\C}_1$ = \textsc{Base-MF-Alg}$(\obsproj(\C_1))$ \\
   \STATE\hspace{30mm} \vdots \\
   \vspace{1mm}
   \STATE\hspace{5mm} $\hat{\C}_t$ = \textsc{Base-MF-Alg}$(\obsproj(\C_t))$\\ 
   \textbf{end do}
   \STATE $k = \text{median}_{i\in\{1\ldots t\}}\big(\text{rank} (\hat\C_i)\big)$
	 \STATE $\Lprojmf$ = \textsc{RandProjection}($\hat{\C}_1, \ldots, \hat{\C}_t, k)$
\end{algorithmic}
\end{algorithm}
\end{minipage}
\hfill
\begin{center}
\begin{minipage}[t]{.54\textwidth}
\begin{algorithm}[H]
\caption{\gnysmf}
   \label{alg:fastmf-nys}
\begin{algorithmic}
	 \STATE {\bfseries Input:} $\obsproj(\M)$, $l$, $d$
	 \STATE $\obsproj(\C) \,, \obsproj(\R)$ = \textsc{SampColRow}($\obsproj(\M)$, $l$, $d)$ \\
   \textbf{do in parallel}
   \STATE\hspace{5mm} $\hat{\C}$ = \textsc{Base-MF-Alg}$(\obsproj(\C))$
   \STATE\hspace{5mm} $\hat{\R}$ = \textsc{Base-MF-Alg}$(\obsproj(\R))$\\ 
   \textbf{end do}
	 \STATE $\Lgnysmf$ = \textsc{Gen\nys}($\hat{\C}$, $\hat{\R})$
\end{algorithmic}
\end{algorithm}
\end{minipage}
\end{center}
\end{figure*}

\subsection{Randomized Matrix Approximations}
\label{sec:rand_mat_approx}
Underlying the C step of each DFC algorithm is a method for generating randomized
low-rank approximations to an arbitrary matrix $\M$. %

\paragraph{Column Projection} \projmf (Algorithm~\ref{alg:fastmf-proj}) uses the column projection method
of \citet{FriezeKaVe98}.  
Suppose that $\C$ is a matrix of $l$ columns sampled uniformly and without
replacement from the columns of $\M$.
Then, column projection generates a ``matrix projection'' approximation \citep{KumarMoTa09} of $\M$ via
\begin{equation}
\label{eq:mat_proj_defined}
\Lproj = \C \pinv{\C} \M  = \U_{C}\U_{C}^\top \M.
\end{equation}
In practice, we do not reconstruct $\Lproj$ but rather maintain low-rank
factors, e.g., $\U_{C}$ and $\U_{C}^{\top}\M$.

\paragraph{Random Projection}
The celebrated result of \citet{JohnsonLi84} shows
that random low-dimensional embeddings preserve Euclidean geometry.  Inspired
by this result, several random projection algorithms
\cite[e.g.,][]{PapadimitriouHiRaVe98, Liberty09,RokhlinSzTy09} have been introduced
for approximating a matrix by projecting it onto a random low-dimensional subspace
(see~\citet{HaMaTr11} for further discussion).  
\rpmf (Algorithm~\ref{alg:fastmf-rp}) utilizes such a random projection method due to
\citet{HaMaTr11}.
Given a target low-rank parameter $k$, let $\G$ be an $n \times (k+p)$ standard Gaussian matrix $\G$, where $p$ is an
oversampling parameter. 
Next, let $\Y=(\M\M^\top)^q\M\G$, and define $\Q
\in \reals^{m \times k}$ as the top $k$ left singular vectors of $\Y$.  The
random projection approximation of $\M$ is then given by
\begin{equation}
\label{eq:rp_defined}
\Lrp = \Q \pinv{\Q} \M.
\end{equation}
We work with an implementation \citep{Tygert09} of a numerically stable variant
of this algorithm described in Algorithm $4.4$ of \citet{HaMaTr11}.  Moreover,
the parameters $p$ and $q$ are typically set to small positive constants
\citep{Tygert09, HaMaTr11}, and we set $p=5$ and $q=2$.  

\paragraph{Generalized \nys Method} 
The \nys method was developed for the discretization of integral equations~\citep{Nystrom30} and has
since been used to speed up large-scale learning applications involving 
symmetric positive semidefinite matrices~\citep{WilliamsSe00}. 
\gnysmf (Algorithm~\ref{alg:fastmf-nys}) makes use of a generalization of the \nys method for arbitrary real
matrices~\citep{GoreinovTyZa97}. 
Suppose that $\C$ consists of $l$ columns of $\M$, sampled uniformly without replacement,
and that $\R$ consists of $d$ rows of $\M$, independently sampled uniformly and without replacement.
Let $\W$ be the $d\times l$ matrix formed by sampling the corresponding rows of $\C$.\footnote{This choice is arbitrary: $\W$ could also be defined as a submatrix of $\R$.}
Then, the generalized \nys\ method computes
a ``spectral reconstruction'' approximation \citep{KumarMoTa09} of $\M$ via
\begin{equation}
\label{eq:gnys_defined}
	\Lgnys = \C\pinv{\W}\R = \C\V_{W}\pinv{\mSigma_{W}}\U_{W}^{\top}\R\,.
\end{equation}
As with $\Mproj$, we store low-rank factors of $\Lgnys$, such as
$\C\V_{W}\pinv{\mSigma_{W}}$ and $\U_{W}^{\top}\R$. 

\subsection{Running Time of \fastmf} 
Many state-of-the-art MF algorithms have $\Omega(mnk_M)$ per-iteration time
complexity due to the rank-$k_M$ truncated SVD performed on each iteration.
\fastmf significantly reduces the per-iteration complexity to O$(mlk_{C_i})$
time for $\C_i$ (or $\C$) and O$(ndk_{R})$ time for $\R$. The cost of combining
the submatrix estimates is even smaller when using column projection or the
generalized \nys method, since the outputs of standard MF algorithms are
returned in factored form.  Indeed, if we define $k' \defeq\max_i k_{C_i}$,
then the column projection step of \projmf requires only O$(mk'^2+lk'^2)$ time:
O$(mk'^2+lk'^2)$ time for the pseudoinversion of $\hat\C_1$ and O$(m
k'^2+lk'^2)$ time for matrix multiplication with each $\hat\C_i$ in parallel.
Similarly, the generalized \nys step of \gnysmf requires only O$(l\bar
k^2+d\bar k^2+\minarg{m,n}\bar k^2)$ time, where $\bar
k\defeq\maxarg{k_C,k_R}$.   

\rpmf also benefits from the factored form of the outputs of standard MF
algorithms.  Assuming that $p$ and $q$ are positive constants, the random
projection step of \rpmf requires O($mkt + mkk' + nk$) time where $k$ is the
low-rank parameter of $\Q$: O($nk$) time to generate $\G$, O($mkk'+ mkt$) to
compute $\Y$ in parallel, O($mk^2$) to compute the SVD of $\Y$, and O$(m
k'^2+lk'^2)$ time for matrix multiplication with each $\hat\C_i$ in parallel in
the final projection step. Note that the running time of the random projection step
depends on $t$ (even when executed in parallel) and thus has a larger 
complexity than the column projection and generalized \nys variants.
Nevertheless, the random projection step need be performed only once and thus
yields a significant savings over the repeated computation of SVDs
required by typical base algorithms.

\subsection{Ensemble Methods}
Ensemble methods  have been shown to improve performance of matrix
approximation algorithms, while straightforwardly leveraging the parallelism of modern
many-core and distributed architectures \citep{KumarMoTa09b}.  As such, we
propose ensemble variants of the \fastmf algorithms that demonstrably reduce
estimation error while introducing a negligible cost to the parallel running time.  
For \projmfens, rather than projecting only onto the column space of
$\hat\C_1$, we project $[\hat\C_1,\ldots,\hat\C_t]$ onto the column space of
each $\hat\C_i$ in parallel and then average the $t$ resulting low-rank
approximations. For \rpmfens, rather than projecting only onto a column space
derived from a single random matrix $\G$, we project
$[\hat\C_1,\ldots,\hat\C_t]$ onto $t$ column spaces derived from $t$ random
matrices in parallel and then average the $t$ resulting low-rank
approximations.  For \gnysmfens, we choose a random $d$-row submatrix
$\obsproj(\R)$ as in \gnysmf and independently partition the columns of
$\obsproj(\M)$ into $\{\obsproj(\C_1),\ldots,\obsproj(\C_t)\}$ as in \projmf
and \rpmf.  After running the base MF algorithm on each submatrix, we apply the
generalized \nys method to each $(\hat\C_i,\hat\R)$ pair in parallel and
average the $t$ resulting low-rank approximations.  Sec.~\ref{sec:experiments}
highlights the empirical effectiveness of ensembling.

\section{Roadmap of Theoretical Analysis} %
\label{sec:roadmap}
While \fastmf in principle can work with any base matrix factorization algorithm,
it offers the greatest benefits when united with accurate but computationally expensive
base procedures.
Convex optimization approaches to matrix completion and robust matrix factorization 
\citep[e.g., ][]{LinGaWrWuChMa09,MaGoCh11,TohYu10} are prime examples
of this class, since they admit strong theoretical estimation guarantees~\citep{AgarwalNeWa11,CandesLiMaWr09,CandesPl10,NegahbanWa12}
but suffer from poor computational complexity due to the repeated and 
costly computation of truncated SVDs.  
Sec.~\ref{sec:experiments} will provide empirical evidence that \fastmf 
provides an attractive framework to improve the scalability of these algorithms, 
but we first present a thorough theoretical analysis of the estimation properties of \fastmf.

Over the course of the next three sections, we will show that the same assumptions
that give rise to strong estimation guarantees for standard MF formulations also 
guarantee strong estimation properties for \fastmf.
In the remainder of this section, we first introduce these standard assumptions 
and then present simplified bounds to build intuition for our
theoretical results and our underlying proof techniques. 

\subsection{Standard Assumptions for Noisy Matrix Factorization}

Since not all matrices can be recovered from missing entries or gross outliers,
recent theoretical advances have studied sufficient conditions for accurate
noisy MC~\citep{CandesPl10,KeshavanMoOh10b,NegahbanWa12} and
RMF~\citep{AgarwalNeWa11,ZhouLiWrCaMa10}. Informally, these conditions
capture the degree to which information about a single entry is ``spread out'' across a matrix. 
The ease of matrix estimation is correlated with 
this spread of information. 
The most prevalent set of conditions are \emph{matrix coherence}
conditions, which limit the extent to which the singular vectors of a matrix
are correlated with the standard basis.  However, there exist classes of
matrices that violate the coherence conditions but can nonetheless be recovered
from missing entries or gross outliers. \citet{NegahbanWa12} define an
alternative notion of \emph{matrix spikiness} in part to handle these classes. 

\subsubsection{Matrix Coherence}
\label{sec:coherence}

Letting $\evec_i$ be the $i$th column of the standard basis, we define two
standard notions of coherence \citep{Recht11}:
\begin{definition}[$\mu_0$-Coherence]
\label{defn:coherence_0}
Let $\V \in \reals^{n \times r}$ contain orthonormal columns with $r \leq n$.
Then the $\mu_0$-coherence of $\V$ is:
\begin{equation*}
\textstyle \mu_0(\V) \defeq \frac{n}{r} \max_{1 \le i \le n} \norm{\P_V \evec_i}^2 =
\frac{n}{r} \max_{1 \le i \le n} \norm{\V_{(i)}}^2  \,. 
\end{equation*} 
\end{definition}

\begin{definition}[$\mu_1$-Coherence]
\label{defn:coherence_1}
Let $\L \in \reals^{m \times n}$ have rank $r$.  Then, the $\mu_1$-coherence of
$\L$ is: 
\begin{equation*}
\textstyle \mu_1(\L) \defeq \sqrt{\frac{mn}{r}} \max_{ij} |\evec_i^\top\U_L\V_L^\top\evec_j| \,. 
\end{equation*} 
\end{definition}
For conciseness, we extend the definition of $\mu_0$-coherence to an
arbitrary matrix $\L \in \reals^{m \times n}$ with rank $r$ via
$\mu_0(\L) \defeq \maxarg{\mu_0(\U_L),\mu_0(\V_L)}.$
Further, for any $\mu>0$, we will call a matrix $\L$ \emph{$(\mu,r)$-coherent} if
$\rank{\L}=r$, $\mu_0(\L) \leq\mu$, and
$\mu_1(\L)\leq\sqrt{\mu}$.  Our analysis in Sec.~\ref{sec:theory} will focus on base MC and RMF
algorithms that express their estimation guarantees in terms of the
$(\mu,r)$-coherence of the target low-rank matrix $\L_0$.  For such algorithms,
lower values of $\mu$ correspond to better estimation properties.

\subsubsection{Matrix Spikiness}
\label{sec:spikiness}
The matrix spikiness condition of \citet{NegahbanWa12} captures the intuition
that a matrix is easier to estimate if its maximum entry is not much larger than
its average entry (in the root mean square sense):
\begin{definition}[Spikiness]
\label{defn:coherence_0}
The spikiness of $\L \in \reals^{m \times n}$ is: 
\begin{equation*}
\textstyle \spikiness(\L) \defeq \sqrt{mn}\norm{\L}_\infty/\norm{\L}_F.
\end{equation*} 
We call a matrix $\spikiness$-spiky if $\spikiness(\L) \leq \spikiness$.
\end{definition}
Our analysis in Sec.~\ref{sec:theory-spike} will focus on base MC algorithms
that express their estimation guarantees in terms of the $\spikiness$-spikiness
of the target low-rank matrix $\L_0$.  For such algorithms, lower values of
$\spikiness$ correspond to better estimation properties.
\subsection{Prototypical Estimation Bounds}

We now present a prototypical estimation bound for \fastmf. 
Suppose that a base MC algorithm solves the \emph{noisy nuclear norm heuristic}, studied in \citet{CandesPl10}:
\begin{align*}
\text{minimize}_{\L} \quad \norm{\L}_* \quad \text{subject\, to}\quad
\norm{\obsproj(\M-\L)}_F \leq \Delta,
\end{align*}
and that, for simplicity, $\M$ is square.
The following prototype bound, derived from a new noisy MC guarantee in Thm.~\ref{thm:convex-mc-noise}, 
describes the behavior of this estimator under matrix coherence assumptions.
Note that the bound implies exact recovery in the noiseless setting, i.e., when $\Delta = 0$.

\begin{proto}[MC under Incoherence]
\label{ex:proto-mc}
Suppose that $\L_0$ is $(\mu,r)$-coherent, $s$
entries of $\M \in \reals^{n \times n}$ are observed uniformly at random where $s = \Omega(\mu r n \log^2(n))$, and
$\norm{\M - \L_0}_F \le \Delta$. If $\hat \L$ solves the noisy nuclear norm
heuristic, then
$$\norm{\L_0 - \hat \L}_F \leq f(n) \Delta$$
with high probability, where $f$ is a function of $n$.
\end{proto}

Now we present a corresponding prototype bound for \projmf, a simplified version of 
our Cor.~\ref{cor:fast-mc-noise}, under precisely the same coherence assumptions.
Notably, this bound i) preserves accuracy with a flexible $(2+\epsilon)$ degradation in estimation
error over the base algorithm, ii) allows for speed-up by requiring only a vanishingly small fraction of columns to be sampled (i.e., $l/n \rightarrow 0$) whenever $s = \omega(n\log^2(n))$ entries are revealed, and iii) maintains exact recovery in the
noiseless setting. %
\begin{proto}[\fastmf-MC under Incoherence]
\label{ex:fast-mc-noise}
Suppose that $\L_0$ is $(\mu,r)$-coherent, $s$ entries of $\M \in \reals^{n \times n}$ are
observed uniformly at random, and $\norm{\M - \L_0}_F \le \Delta$. Then
\begin{align*}
	l = O\bigg(\frac{\mu^2 r^2n^2\log^2(n)}{s\epsilon^2}\bigg) 
\end{align*}
random columns suffice to have
$$\norm{\L_0 - \Lprojmf}_F \leq (2+\epsilon)f(n) \Delta$$
with high probability when the noisy nuclear norm heuristic is used as a base
algorithm, where $f$ is the same function of $n$ defined in Proto.~\ref{ex:proto-mc}.
\end{proto}
The proof of Proto.~\ref{ex:fast-mc-noise}, and indeed of each of our main \fastmf results, consists of three high-level steps:
\begin{enumerate}
\item{\emph{Bound information spread of submatrices}}: 
Recall that the F step of \fastmf operates by applying a base MF algorithm to submatrices.
We show that,  with high probability, uniformly sampled submatrices are only moderately more coherent and moderately more spiky than the matrix from which they are drawn.
This allows for accurate estimation of submatrices using base algorithms with standard coherence or spikiness requirements.
The conservation of incoherence result is summarized in Lem.~\ref{lem:sub-coh}, while the conservation of non-spikiness is presented in Lem.~\ref{lem:sub-spike}.
\item{\emph{Bound error of randomized matrix approximations}}: 
The error introduced by the C step of \fastmf depends on the framework variant.
Drawing upon tools from randomized $\ell_2$ regression~\citep{DrineasMaMu08}, randomized matrix multiplication~\citep{DrineasKaMa06a,DrineasKaMa06b}, and matrix concentration~\citep{HsuKaZh12},
we show that the same assumptions on the spread of information responsible for accurate MC and RMF also yield high fidelity reconstructions for column projection (Cor.~\ref{cor:proj-main} and Thm.~\ref{cor:proj-main-spike}) and the \nys method (Cor.~\ref{cor:gnys-main} and Cor.~\ref{cor:gnys-low-rank}).
We additionally present general approximation guarantees for random projection due to \citet{HaMaTr11}
in Cor.~\ref{cor:rp-main}.    
These results give rise to ``master theorems'' for coherence (Thm.~\ref{thm:master})
and spikiness (Thm.~\ref{thm:master-spike}) that generically relate the estimation error of \fastmf to the error of any base algorithm.
\item{\emph{Bound error of submatrix factorizations}}: 
The final step combines a master theorem with a base estimation guarantee applied to each
\fastmf subproblem.
We study both new (Thm.~\ref{thm:convex-mc-noise}) and established bounds (Thm.~\ref{thm:rpca-noise} and Cor.~\ref{cor:mc-spike})
 for MC and RMF and prove that
\fastmf submatrices satisfy the base guarantee preconditions with high probability.
We present the resulting coherence-based estimation guarantees for DFC in 
Cor.~\ref{cor:fast-mc-noise} and Cor.~\ref{cor:fast-rpca-noise} and the spikiness-based
estimation guarantee in Cor.~\ref{cor:fast-mc-spike}.
\end{enumerate}

The next two sections present the main results contributing to each of these proof steps,
as well as their consequences for MC and RMF.
Sec.~\ref{sec:theory} presents our analysis under coherence assumptions,
while Sec.~\ref{sec:theory-spike} contains our spikiness analysis.

\section{Coherence-based Theoretical Analysis}
\label{sec:theory}
\subsection{Coherence Analysis of Randomized Approximation Algorithms}
\label{sec:rand_mat_analysis}
We begin our coherence-based analysis by characterizing the behavior of randomized approximation
algorithms under standard coherence assumptions. 
The derived properties will aid us in deriving \fastmf estimation guarantees.  
Hereafter,  $\epsilon\in (0,1]$ represents a prescribed error
tolerance, and $\delta,\delta'\in(0,1]$ denote target failure probabilities.

\subsubsection{Conservation of Incoherence}
Our first result bounds the $\mu_0$ and $\mu_1$-coherence of a uniformly sampled
submatrix in terms of the coherence of the full matrix.
This conservation of incoherence allows for accurate submatrix completion or
submatrix outlier removal when using standard MC and RMF algorithms.  
Its proof is given in Sec.~\ref{sec:sub-coh}.

\begin{lemma}[Conservation of Incoherence]
\label{lem:sub-coh}
Let $\L\in\reals^{m\times n}$ be a rank-$r$ matrix and define $\L_C\in\reals^{m\times
l}$ as a matrix of $l$ columns of $\L$ sampled uniformly without replacement.
If $l \geq cr\mu_0(\V_{L})\log(n)\log(1/\delta)/\epsilon^2,$ where $c$ is a fixed
positive constant defined in Cor.~\ref{cor:proj-main}, then 
\begin{enumerate}
\item[i)]$\rank{\L_C} = \rank{\L}$
\item[ii)]$\mu_0(\U_{L_C}) = \mu_0(\U_L)$
\item[iii)]$\displaystyle\mu_0(\V_{L_C}) \le \frac{\mu_0(\V_L)}{1-\epsilon/2}$
\item[iv)]$\displaystyle\mu_1^2(\L_C) \le \frac{r\mu_0(\U_L)\mu_0(\V_L)}{1-\epsilon/2}$
\end{enumerate}
all hold jointly with probability at least $1-\delta/n$.
\end{lemma}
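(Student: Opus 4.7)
My plan is to reduce all four conclusions to a single concentration statement about a sampled submatrix of $\V_L$, and then deduce (i)--(iv) from standard linear algebra. Write $\L=\U_L\mSigma_L\V_L^\top$ and let $S\in\reals^{n\times l}$ be the column-selection matrix so that $\L_C=\L S=\U_L\mSigma_L(S^\top\V_L)^\top$. Set $W\defeq S^\top\V_L\in\reals^{l\times r}$; then $W$ is obtained by sampling $l$ rows of $\V_L$ uniformly without replacement, and everything in the lemma can be read off from the spectrum of $W^\top W$ together with the row norms of $\V_L$.

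First I would establish the key concentration bound: with probability at least $1-\delta/n$,
\[
\bigl(1-\tfrac{\epsilon}{2}\bigr)\tfrac{l}{n}\,\mathbf{I}_r \preceq W^\top W \preceq \bigl(1+\tfrac{\epsilon}{2}\bigr)\tfrac{l}{n}\,\mathbf{I}_r,
\]
whenever $l\ge c\,r\mu_0(\V_L)\log(n)\log(1/\delta)/\epsilon^2$. Writing $W^\top W=\sum_{i=1}^l (\V_L)_{\sigma(i)}^\top(\V_L)_{\sigma(i)}$ as a sum of rank-one positive semidefinite matrices with $\Ex[W^\top W]=\tfrac{l}{n}\mathbf{I}_r$ and per-summand spectral norm at most $\tfrac{r\mu_0(\V_L)}{n}$, this follows from a matrix Chernoff/Bernstein inequality; the sampling-without-replacement version can be handled by invoking the standard result that such sums are at least as concentrated as their with-replacement counterparts. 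The constant $c$ is chosen to match that of Cor.~\ref{cor:proj-main}.

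Given this bound, the rest is short. Part (i) follows because $W^\top W\succ 0$ forces $W$ to have rank $r$, and hence $\L_C=\U_L\mSigma_L W^\top$ inherits rank $r$. For (ii), the column space of $\L_C$ coincides with that of $\U_L$, so $\U_{L_C}=\U_L Q$ for some orthogonal $Q\in\reals^{r\times r}$; since orthogonal transformations preserve row norms, $\|\U_{L_C,(i)}\|=\|\U_{L,(i)}\|$ for all $i$, giving equality of $\mu_0$. For (iii), the row space of $\L_C$ coincides with the column space of $W$, so $\P_{V_{L_C}}=W(W^\top W)^{-1}W^\top$. Using $(W^\top W)^{-1}\preceq \tfrac{n}{l(1-\epsilon/2)}\mathbf{I}_r$ together with $\|W_{(i)}\|^2\le r\mu_0(\V_L)/n$,
\[
\|\P_{V_{L_C}}\evec_i\|^2 = W_{(i)}(W^\top W)^{-1}W_{(i)}^\top \le \frac{r\mu_0(\V_L)}{l(1-\epsilon/2)},
\]
which gives the claimed bound on $\mu_0(\V_{L_C})$.

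Finally, (iv) is a purely deterministic consequence of (ii) and (iii): by Cauchy--Schwarz,
\[
\bigl|\evec_i^\top\U_{L_C}\V_{L_C}^\top\evec_j\bigr| \le \|\U_{L_C,(i)}\|\,\|\V_{L_C,(j)}\| \le \sqrt{\tfrac{r\mu_0(\U_{L_C})}{m}}\sqrt{\tfrac{r\mu_0(\V_{L_C})}{l}},
\]
so $\mu_1^2(\L_C)\le r\mu_0(\U_{L_C})\mu_0(\V_{L_C})\le r\mu_0(\U_L)\mu_0(\V_L)/(1-\epsilon/2)$. All four conclusions hold on the single event that the Bernstein bound succeeds, so the overall failure probability is at most $\delta/n$. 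The main technical obstacle is calibrating the matrix concentration inequality for without-replacement sampling so that the constant and logarithmic factors align precisely with those of Cor.~\ref{cor:proj-main}; the remaining steps are elementary.
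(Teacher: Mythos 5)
Your proof is correct and follows essentially the same route as the paper: the central concentration bound on $W^\top W$ is exactly what the paper establishes in Lemma~\ref{lem:eps} (via the Hsu--Kakade--Zhang matrix concentration result of Lemma~\ref{lem:mat-mult}, with the same Hoeffding reduction from without- to with-replacement sampling), and the deterministic deductions of (i)--(iv) from that bound match the paper's step by step. Your direct argument for (ii) reproduces the cited lemma of Mohri and Talwalkar, and your operator-inequality bound in (iii) is a minor streamlining of the paper's Schatten--H\"older step, but the underlying mathematics is identical.
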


\subsubsection{Column Projection Analysis}
Our next result shows that projection based on uniform column sampling 
leads to near optimal estimation in matrix regression when the covariate matrix has small coherence.
This statement will immediately give rise to estimation guarantees for column projection and the generalized \nys method. 
\begin{theorem}[Subsampled Regression under Incoherence]
\label{thm:regress-main}
Given a target matrix $\B\in\reals^{p\times n}$ and a rank-$r$ matrix of covariates
$\L\in\reals^{m\times n}$, choose $l\geq 3200r\mu_0(\V_{L})\log(4n/\delta)/\epsilon^2,$
let $\B_C\in\reals^{p\times l}$ be a matrix of $l$ columns of $\B$
sampled uniformly without replacement,
and let $\L_C\in\reals^{m\times l}$ consist of the corresponding columns of $\L$. 
Then, $$\norm{\B - \B_C\pinv{\L_C}\L}_F \leq
(1+\epsilon)\norm{\B - \B\pinv{\L}\L}_F$$ with probability at least $1-\delta-0.2$.
\end{theorem}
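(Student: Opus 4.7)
The plan is to adapt the randomized $\ell_2$ regression argument of \citet{DrineasMaMu08} to uniform column sampling under an incoherence condition on $\V_L$. The opening move is a Pythagorean decomposition. Note that $\pinv{\L}\L = \V_L\V_L^T$ is the orthogonal projection onto the row space of $\L$, so $\B\pinv{\L}\L = \B\V_L\V_L^T$. The rows of $\B_C\pinv{\L_C}\L$ always lie in the row space of $\L$ (each is a linear combination of rows of $\L$), while the rows of $\B(\mathbf{I}-\V_L\V_L^T)$ lie in its orthogonal complement, so
\begin{equation*}
\|\B - \B_C\pinv{\L_C}\L\|_F^2 \;=\; \|\B(\mathbf{I}-\V_L\V_L^T)\|_F^2 \;+\; \|\B\V_L\V_L^T - \B_C\pinv{\L_C}\L\|_F^2.
\end{equation*}
Since the first summand equals the optimal squared residual $\|\B - \B\pinv{\L}\L\|_F^2$, it suffices to bound the second summand by $((1+\epsilon)^2 - 1)\|\B(\mathbf{I}-\V_L\V_L^T)\|_F^2$.

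Next, I would rewrite the second summand in a form suitable for matrix concentration. Writing $\L = \U_L\mSigma_L\V_L^T$, letting $\S\in\{0,1\}^{n\times l}$ be the column-selection matrix for the sampled indices, and setting $\widetilde\V \defeq \V_L^T\S$, on the event $\rank{\L_C}=r$ the matrix $\widetilde\V$ has full row rank and $\pinv{\L_C} = \widetilde\V^T(\widetilde\V\widetilde\V^T)^{-1}\mSigma_L^{-1}\U_L^T$. Splitting $\B = \B\V_L\V_L^T + \B(\mathbf{I}-\V_L\V_L^T)$ and using $(\mathbf{I}-\V_L\V_L^T)\V_L = 0$ then yields, after a short calculation, the key factorization
\begin{equation*}
\B\V_L\V_L^T - \B_C\pinv{\L_C}\L \;=\; -\,\B(\mathbf{I}-\V_L\V_L^T)\,\S\widetilde\V^T(\widetilde\V\widetilde\V^T)^{-1}\V_L^T,
\end{equation*}
so, since $\V_L^T$ has orthonormal rows, submultiplicativity gives
\begin{equation*}
\|\B\V_L\V_L^T - \B_C\pinv{\L_C}\L\|_F \;\leq\; \sigma_{\min}(\widetilde\V\widetilde\V^T)^{-1}\,\|\B(\mathbf{I}-\V_L\V_L^T)\S\widetilde\V^T\|_F.
\end{equation*}
This decouples the problem into two probabilistic estimates.

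The singular-value term is handled by matrix concentration: $\widetilde\V\widetilde\V^T$ is a sum of $l$ rank-one PSD matrices (one per sampled column, each being the outer product of the corresponding row of $\V_L$ with itself), each with operator norm at most $r\mu_0(\V_L)/n$ by the coherence bound, and with overall mean $(l/n)\mathbf{I}_r$. A matrix Chernoff/Bernstein bound then yields $\sigma_{\min}(\widetilde\V\widetilde\V^T) \geq (l/n)(1-\epsilon/2)$ with probability at least $1-\delta$ under the stated lower bound on $l$; this event simultaneously supplies $\rank{\L_C}=r$ and essentially proves conclusion (iii) of Lem.~\ref{lem:sub-coh}. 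For the cross term, the identity $\B(\mathbf{I}-\V_L\V_L^T)\V_L = 0$ forces its mean to vanish at leading order, and a second-moment calculation using the coherence bound on the row norms of $\V_L$ gives
\begin{equation*}
\mathbb{E}\,\|\B(\mathbf{I}-\V_L\V_L^T)\S\widetilde\V^T\|_F^2 \;\leq\; \frac{l\,r\,\mu_0(\V_L)}{n^2}\,\|\B(\mathbf{I}-\V_L\V_L^T)\|_F^2.
\end{equation*}
Markov's inequality at failure level $0.2$ then converts this into a high-probability bound and is the source of the additive $0.2$ in the failure probability of the theorem.

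Combining these two estimates and unioning over their failure events yields $\|\B\V_L\V_L^T - \B_C\pinv{\L_C}\L\|_F^2 \leq ((1+\epsilon)^2 - 1)\|\B(\mathbf{I}-\V_L\V_L^T)\|_F^2$, and hence the theorem. The main obstacle I anticipate is the tracking of constants: the matrix Chernoff step, the second-moment computation under sampling \emph{without} replacement (subtler than its with-replacement analog because of the negative correlations between sampled indices), and the Markov step must together absorb enough slack into the sample-complexity constant $3200$ to preserve a clean $(1+\epsilon)$-factor in the final bound. A secondary subtlety is that the $\widetilde\V\widetilde\V^T$ concentration step, not the Markov step, is what drives the characteristic $\epsilon^{-2}$ (rather than merely $\epsilon^{-1}$) dependence in the required sample size.
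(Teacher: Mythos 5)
The proposal is correct and follows essentially the same route as the paper, which defers the core of the argument to Thm.~5 of \citet{DrineasMaMu08} (via the strengthened Prop.~\ref{prop:regress}) and the matrix-multiplication concentration of \citet{HsuKaZh12}. Your blind reconstruction spells out the steps the paper cites rather than reproves---Pythagorean split, the cross-term factorization through $(\widetilde\V\widetilde\V^T)^{-1}$, matrix Chernoff for the sampled Gram matrix, and a second-moment/Markov bound at level $0.2$---working directly with uniform sampling where the paper routes through the leverage-score framework with slack $\beta=1/\mu_0(\V_{L})$ and the rescaling $\D=\I\sqrt{n/l}$.
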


Fundamentally, Thm.~\ref{thm:regress-main} links the notion of coherence, common in
matrix estimation communities, to the randomized approximation concept of
\emph{leverage score sampling}~ \citep{MahoneyDr09}.
The proof of Thm.~\ref{thm:regress-main}, given in Sec.~\ref{sec:regress-proof}, builds upon the randomized $\ell_2$ regression
work of \citet{DrineasMaMu08} and the matrix concentration results of
\citet{HsuKaZh12} to yield a subsampled regression guarantee with better sampling complexity
than that of \citet[Thm.~5]{DrineasMaMu08}.

A first consequence of Thm.~\ref{thm:regress-main} shows that, with high probability, 
column projection produces an estimate nearly as good as a given rank-$r$ target by 
sampling a number of columns proportional to the coherence and $r\log n$. 
\begin{corollary}[Column Projection under Incoherence]
\label{cor:proj-main}
Given a matrix $\M\in\reals^{m\times n}$ and a rank-$r$ approximation
$\L\in\reals^{m\times n}$, choose $l \geq
cr\mu_0(\V_{L})\log(n)\log(1/\delta)/\epsilon^2,$ where $c$ is a fixed positive
constant, 
and let $\C\in\reals^{m\times l}$ be a matrix of $l$ columns of $\M$
sampled uniformly without replacement.  Then, $$\norm{\M - \C\pinv{\C}\M}_F \leq
(1+\epsilon)\norm{\M - \L}_F$$ with probability at least $1-\delta$.
\end{corollary}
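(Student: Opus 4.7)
The plan is to derive Corollary~\ref{cor:proj-main} as a direct consequence of Theorem~\ref{thm:regress-main} on subsampled regression, together with two standard optimality properties of orthogonal projection and a boosting argument that converts the additive-log sample complexity of the theorem into the multiplicative-log form of the corollary.

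First, I would invoke Theorem~\ref{thm:regress-main} with target $\B = \M$ and rank-$r$ covariate matrix $\L$, noting that because the same column indices define $\B_C$ and $\L_C$, we have $\B_C = \C$. The reduction then rests on the chain of inequalities
\begin{align*}
\norm{\M - \C\pinv{\C}\M}_F
&\;\leq\; \norm{\M - \C\pinv{\L_C}\L}_F \\
&\;\leq\; (1+\epsilon)\norm{\M - \M\pinv{\L}\L}_F \\
&\;\leq\; (1+\epsilon)\norm{\M - \L}_F.
\end{align*}
The first inequality holds because $\C\pinv{\C}\M$ is the nearest matrix to $\M$ whose columns lie in the column span of $\C$, and $\C\pinv{\L_C}\L$ is such a matrix. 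The second inequality is the conclusion of Theorem~\ref{thm:regress-main}. The third inequality follows from the fact that $\M\pinv{\L}\L$ minimizes $\norm{\M - \mathbf{Y}\L}_F$ over all $\mathbf{Y}$, with the choice $\mathbf{Y}$ equal to the identity yielding $\norm{\M - \L}_F$.

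The remaining issue is probability amplification, since Theorem~\ref{thm:regress-main} carries an unavoidable additive slack of $0.2$ in its success probability, whereas the corollary demands probability at least $1-\delta$ for arbitrarily small $\delta$. I would run the argument over $t = \Theta(\log(1/\delta))$ independent groups of sampled columns, each of size on the order of $r\mu_0(\V_L)\log(n)/\epsilon^2$, so that Theorem~\ref{thm:regress-main} applies to each group with a fixed constant failure probability (say $1/2$). By the same column-span monotonicity that underlies the first inequality above, the error $\norm{\M - \C\pinv{\C}\M}_F$ is no larger than the projection error using only the columns from any single group, so it suffices for the regression guarantee to succeed on at least one of the $t$ groups. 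Independence and a union bound then give total failure probability at most $2^{-t} \leq \delta$, and summing per-group sample sizes yields a total of $l = O(r\mu_0(\V_L)\log(n)\log(1/\delta)/\epsilon^2)$ columns, matching the corollary's hypothesis once the constant $c$ is chosen to absorb the numerical factors.

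The main obstacle is a technical bookkeeping issue in the boosting step: the corollary specifies sampling uniformly without replacement, so a deterministic partition of a single uniform draw does not immediately yield independent groups satisfying the theorem's hypotheses. I would handle this by drawing the $t$ groups as independent uniform-without-replacement samples and observing that their union still consists of at most $l$ column indices, to which the column-span-monotonicity argument applies. Beyond this bookkeeping, the remainder of the proof is immediate from the optimality of orthogonal projection and the statement of Theorem~\ref{thm:regress-main}.
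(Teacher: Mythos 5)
Your proposal is correct and follows essentially the same route as the paper: the paper likewise reduces to the subsampled-regression result (Prop.~\ref{prop:regress}, which underlies Thm.~\ref{thm:regress-main}), uses the two optimality inequalities for $\pinv{\C}\M$ and $\M\pinv{\L}$, and boosts the constant success probability by partitioning the $l$ sampled columns into $b = \Theta(\log(1/\delta))$ blocks, applying the regression bound to each block and noting that the full-$\C$ projection error is dominated by each per-block error. The only point of divergence is your bookkeeping for without-replacement sampling: rather than drawing independent groups, the paper deterministically partitions the single $l$-column draw and relies on the Hoeffding-style reduction stated at the top of Sec.~\ref{sec:regress-proof} (results are proved for i.i.d.\ with-replacement sampling, under which the blocks are genuinely independent, and then transferred to the without-replacement setting), which avoids altering the sampling distribution the corollary assumes.
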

Our result generalizes Thm.~1 of \citet{DrineasMaMu08} by providing 
improved sampling complexity and guarantees
relative to an \emph{arbitrary} low-rank approximation.
Notably, in the ``noiseless'' setting, when $\M = \L$, Cor.~\ref{cor:proj-main} 
guarantees exact recovery of $\M$ with high probability.
The proof of Cor.~\ref{cor:proj-main} is given in Sec.~\ref{sec:proj-proof}.

\subsubsection{Generalized \nys Analysis}
Thm.~\ref{thm:regress-main} and Cor.~\ref{cor:proj-main} together imply an
estimation guarantee for the generalized \nys method 
relative to an arbitrary low-rank approximation $\L$.
Indeed, if the matrix of sampled columns is denoted by $\C$, then,
with appropriately reduced probability, O($\mu_0(\V_L)r\log n$) columns and O($\mu_0(\U_C)r\log m$) rows
suffice to match the reconstruction error of $\L$ up to any fixed precision.
The proof can be found in Sec.~\ref{sec:gnys-proof}.
\begin{corollary}[Generalized \nys under Incoherence]
\label{cor:gnys-main}
Given a matrix $\M\in\reals^{m\times n}$ and a rank-$r$ approximation
$\L\in\reals^{m\times n}$, choose 
$l \geq cr\mu_0(\V_{L})\log(n)\log(1/\delta)/\epsilon^2$ with $c$ a constant as in Cor.~\ref{cor:proj-main}, 
and let $\C\in\reals^{m\times l}$ be a matrix of $l$ columns of $\M$
sampled uniformly without replacement.  
Further choose
$d \geq cl\mu_0(\U_{C})\log(m)\log(1/\delta')/\epsilon^2,$ 
and let $\R\in\reals^{d\times n}$ be a matrix of $d$ rows of $\M$
sampled independently and uniformly without replacement.  
Then, $$\norm{\M - \C\pinv{\W}\R}_F \leq
(1+\epsilon)^2\norm{\M - \L}_F$$ with probability at least $(1-\delta)(1-\delta'-0.2)$.
\end{corollary}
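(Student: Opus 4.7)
The plan is to derive Corollary~\ref{cor:gnys-main} by composing two applications of the column-sampling approximation results established earlier in this section, one applied directly to $\M$ via its columns $\C$ and one applied to $\M^\top$ via the rows $\R$. Intuitively, the generalized Nystr\"om estimator $\C\pinv{\W}\R$ is exactly the result of first projecting $\M$ onto the column space of $\C$ and then approximating the resulting regression coefficients by subsampled regression over rows, so each of the two $(1+\epsilon)$ factors in the final bound arises from one of these two stages.

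First, I would invoke Cor.~\ref{cor:proj-main} with matrix $\M$, rank-$r$ approximation $\L$, and the $l$ sampled columns $\C$. Since the stated lower bound on $l$ matches the precondition of that corollary verbatim, this yields
\[
\norm{\M - \C\pinv{\C}\M}_F \le (1+\epsilon)\norm{\M - \L}_F
\]
with probability at least $1-\delta$. This is exactly the column-projection stage of the Nystr\"om method, albeit with $\M$ substituted for the still-unknown factor we will replace by $\R$.

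Second, I would transpose the problem to convert row sampling into the column-sampling template of Thm.~\ref{thm:regress-main}. Viewing $\M^\top$ as the target matrix and $\C^\top$ as the covariate matrix, sampling $d$ rows of $\M$ (equivalently, $d$ columns of $\M^\top$) produces $\R^\top$ as the column-subsampled target, and the corresponding columns of $\C^\top$ form $\W^\top$. Since $\rank(\C^\top)=\rank(\C)\le l$ and $\mu_0(\V_{C^\top})=\mu_0(\U_C)$, the stated lower bound on $d$ is sufficient (up to the absorbed constant $c$) to apply Thm.~\ref{thm:regress-main}, yielding
\[
\norm{\M^\top - \R^\top\pinv{\W^\top}\C^\top}_F \le (1+\epsilon)\norm{\M^\top - \M^\top\pinv{\C^\top}\C^\top}_F
\]
with probability at least $1-\delta'-0.2$. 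Transposing both sides and using $(\pinv{\C^\top}\C^\top)^\top = \C\pinv{\C}$ rewrites this as $\norm{\M - \C\pinv{\W}\R}_F \le (1+\epsilon)\norm{\M - \C\pinv{\C}\M}_F$.

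Finally, I would chain the two inequalities. Because the column sample (producing $\C$) and the row sample (producing $\R$) are drawn independently, the two events on which each bound holds are independent, so a simple product of their probabilities gives the claimed success probability $(1-\delta)(1-\delta'-0.2)$, and the resulting estimation bound is the product $(1+\epsilon)^2\norm{\M-\L}_F$. The main subtlety, rather than a genuine obstacle, is verifying the transpose reduction cleanly: one must check that the coherence and rank parameters of $\C^\top$ that feed into Thm.~\ref{thm:regress-main} are controlled by $\mu_0(\U_C)$ and $l$ respectively, and that the constant $c$ in the hypothesis on $d$ is taken large enough to absorb the $3200$ and the $\log 4$ from Thm.~\ref{thm:regress-main} when $\log(4m/\delta')$ is replaced by $\log(m)\log(1/\delta')$.
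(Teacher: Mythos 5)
Your proposal is correct and follows the same two-step decomposition as the paper: apply Thm.~\ref{thm:regress-main} (via transposition, with $\C^\top$ as covariates) to obtain $\norm{\M - \C\pinv{\W}\R}_F \leq (1+\epsilon)\norm{\M - \C\pinv{\C}\M}_F$, then apply Cor.~\ref{cor:proj-main} to bound $\norm{\M - \C\pinv{\C}\M}_F \leq (1+\epsilon)\norm{\M - \L}_F$, and multiply probabilities by independence of the row and column samples. You are in fact a bit more careful than the paper's write-up on one point: the rank parameter passed to Thm.~\ref{thm:regress-main} for the transposed problem is $\rank(\C) \leq l$, not $r$, which is why the stated hypothesis on $d$ scales with $l$.
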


Like the generalized \nys bound of \citet[Thm.~4]{DrineasMaMu08} and unlike our column projection result, 
Cor.~\ref{cor:gnys-main} depends on the coherence of the submatrix $\C$
and holds only with probability bounded away from 1.
Our next contribution shows that we can do away with these restrictions in the noiseless setting, where $\M = \L$.
\begin{corollary}[Noiseless Generalized \nys under Incoherence]
  \label{cor:gnys-low-rank}
Let $\L \in \reals^{m\times n}$ be a rank-$r$ matrix. 
Choose $l \geq 48r \mu_0(\V_L)\log(4n/(1-\sqrt{1-\delta}))$ and $d \geq 48r
\mu_0(\U_L)\log(4m/(1-\sqrt{1-\delta}))$.  Let $\C\in\reals^{m\times l}$ be a
matrix of $l$ columns of $\L$ sampled uniformly without replacement, and   let
$\R\in\reals^{d\times n}$ be a matrix of $d$ rows of $\L$ sampled independently
and uniformly without replacement.  Then, 
$$\L = \C\pinv{\W}\R$$ 
with probability at least $1-\delta$.
\end{corollary}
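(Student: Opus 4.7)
The plan is to exploit the fact that in the noiseless setting, the entire structure of $\L$ is already captured by its row space and column space, so the generalized Nyström formula should recover $\L$ exactly whenever the sampled submatrices $\C$, $\R$, and $\W$ all preserve the rank $r$ of $\L$. Concretely, I would first write the compact SVD $\L = \U_L\mSigma_L\V_L^\top$ with $\U_L\in\reals^{m\times r}$, $\V_L\in\reals^{n\times r}$, and $\mSigma_L\in\reals^{r\times r}$ invertible, and introduce the ``restriction'' matrices $\V_C \defeq \I_C^\top\V_L\in\reals^{l\times r}$ and $\U_R \defeq \I_R^\top\U_L\in\reals^{d\times r}$, where $\I_C$ and $\I_R$ are the column- and row-selection matrices. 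With this notation, $\C = \U_L\mSigma_L\V_C^\top$, $\R = \U_R\mSigma_L\V_L^\top$, and $\W = \U_R\mSigma_L\V_C^\top$, which is the algebraic identity that drives the whole argument.

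The central probabilistic step is to show that, under the stated sampling bounds, both $\V_C$ and $\U_R$ have full column rank $r$ with the desired failure probability. For $\V_C$, this reduces to showing that $\V_C^\top\V_C = \sum_{i\in S}\V_{L,(i)}\V_{L,(i)}^\top$ is strictly positive definite. Its expectation (under uniform sampling without replacement) is $(l/n)\I_r$, and each summand has norm at most $r\mu_0(\V_L)/n$ by Definition of $\mu_0$-coherence. A matrix Chernoff inequality for sampling without replacement then yields $\sigma_{\min}(\V_C^\top\V_C)>0$ with failure probability at most $1-\sqrt{1-\delta}$ provided $l\geq 48r\mu_0(\V_L)\log(4n/(1-\sqrt{1-\delta}))$, matching the stated constant. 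The identical argument, applied to the independent row sample with $\mu_0(\U_L)$ replacing $\mu_0(\V_L)$, controls $\U_R^\top\U_R$. A union bound over the two independent events produces the overall success probability $(\sqrt{1-\delta})^2 = 1-\delta$.

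Once both restrictions have rank $r$, the algebraic verification is routine. Since $\U_R$ has full column rank, $\mSigma_L$ is invertible, and $\V_C^\top$ has full row rank, the pseudoinverse factors as $\pinv{\W} = \pinv{\V_C^\top}\mSigma_L^{-1}\pinv{\U_R} = \V_C(\V_C^\top\V_C)^{-1}\mSigma_L^{-1}(\U_R^\top\U_R)^{-1}\U_R^\top$. Substituting and telescoping,
\begin{equation*}
\C\pinv{\W}\R = \U_L\mSigma_L(\V_C^\top\V_C)(\V_C^\top\V_C)^{-1}\mSigma_L^{-1}(\U_R^\top\U_R)^{-1}(\U_R^\top\U_R)\mSigma_L\V_L^\top = \U_L\mSigma_L\V_L^\top = \L,
\end{equation*}
which closes the proof.

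The main obstacle is the rank-preservation step: obtaining matching constants in the matrix Chernoff bound for sampling without replacement, and handling the $1-\sqrt{1-\delta}$ parameterization so that the two independent events combine correctly. One could alternatively invoke Lem.~\ref{lem:sub-coh}(i) directly for rank preservation, but since that lemma is tuned to also control coherence of the submatrix, its constants are looser than what is needed here; a direct Chernoff argument targeted specifically at $\sigma_{\min}$ yields the tighter bound advertised in the statement.
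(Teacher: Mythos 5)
Your proposal follows essentially the same structure as the paper's proof: reduce exact recovery to showing that the sampled column and row subspaces preserve rank, then verify the pseudoinverse telescoping identity. The paper's actual source of the constant $48$ is its own Lem.~\ref{lem:eps} applied with $\epsilon=1$ and $\beta = 1/\mu_0(\V_L)$ (this is what it invokes, not Lem.~\ref{lem:sub-coh}), which gives $l \geq 48 r\mu_0(\V_L)\log(4r\mu_0(\V_L)/\delta') \leq 48 r\mu_0(\V_L)\log(4n/\delta')$ directly and without the extra $\log n\log(1/\delta)$ slack you correctly identify in Lem.~\ref{lem:sub-coh}; your appeal to an unspecified matrix Chernoff bound for sampling without replacement is asserted rather than verified, though it is in the same family as the \citet{HsuKaZh12} bound that powers Lem.~\ref{lem:eps}. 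One small verbal slip: the combination $(\sqrt{1-\delta})^2 = 1-\delta$ is a product of independent events, not a union bound; the paper's accounting is the same.
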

The proof of Cor.~\ref{cor:gnys-low-rank}, given in Sec.~\ref{sec:gnys-low-rank-proof}, adapts a strategy of \citet{TalwalkarRo10} developed for the analysis of positive semidefinite matrices.

\subsubsection{Random Projection Analysis}
We next present an estimation guarantee for the random projection method
relative to an arbitrary low-rank approximation $\L$.  
The result implies that using a random matrix with oversampled
columns proportional to $r\log(1/\delta)$ suffices to match the reconstruction
error of $\L$ up to any fixed precision with probability $1-\delta$.
The result is a direct consequence of the random projection analysis of \citet[Thm.~10.7]{HaMaTr11}, and the proof can be found in Sec.~\ref{sec:rp-proof}.

\begin{corollary}[Random Projection]
\label{cor:rp-main}
Given a matrix $\M\in\reals^{m\times n}$ and a rank-$r$ approximation
$\L\in\reals^{m\times n}$ with $r \ge 2$, choose an oversampling parameter
$$p \ge 242\ r\log(7/\delta)/\epsilon^2.$$
Draw an $n \times (r+p)$ standard Gaussian
matrix $\G$ and define $\Y = \M \G$.  
Then, with probability at least
$1-\delta$, $$\norm{\M - \P_{Y} \M}_F \leq (1+\epsilon) \norm{\M - \L}_F.$$
Moreover, define $\Lrp$ as the best rank-$r$ approximation of $\P_{Y} \M$ with
respect to the Frobenius norm. Then, with probability at least $1-\delta$,
$$\norm{\M - \Lrp}_F \leq (2+\epsilon) \norm{\M - \L}_F.$$
\end{corollary}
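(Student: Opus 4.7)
The plan is to invoke Thm.~10.7 of \citet{HaMaTr11} in an ``oracle'' form. Applied with target rank $k = r$, that theorem yields, for every $t,u \ge 1$ and every oversampling $p \ge 4$,
\begin{equation*}
\norm{\M - \P_Y\M}_F \le \left(1 + t\sqrt{3r/(p+1)}\right)\left(\sum_{j > r}\sigma_j(\M)^2\right)^{1/2} + \frac{u t e\sqrt{r+p}}{p+1}\,\sigma_{r+1}(\M),
\end{equation*}
with failure probability at most $2t^{-p} + e^{-u^2/2}$. Because $\L$ has rank at most $r$, the Eckart--Young theorem gives $\bigl(\sum_{j>r}\sigma_j(\M)^2\bigr)^{1/2} \le \norm{\M-\L}_F$, and $\sigma_{r+1}(\M)$ is dominated by the same quantity, so both tail factors can be uniformly replaced by $\norm{\M-\L}_F$ to obtain
\begin{equation*}
\norm{\M - \P_Y\M}_F \le \left(1 + t\sqrt{3r/(p+1)} + \frac{u t e\sqrt{r+p}}{p+1}\right)\norm{\M - \L}_F.
\end{equation*}

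Next I would balance the two failure-probability terms. Setting $u = \sqrt{2\log(2/\delta)}$ forces $e^{-u^2/2} \le \delta/2$, and choosing $t = (4/\delta)^{1/p}$ forces $2t^{-p} \le \delta/2$. For the prescribed $p$, which grows like $r\log(1/\delta)/\epsilon^2$, the value of $t$ is very close to $1$ and $u = O(\sqrt{\log(1/\delta)})$. What remains is a numerical inequality: verifying that $p \ge 242\, r\log(7/\delta)/\epsilon^2$ forces the multiplicative factor $1 + t\sqrt{3r/(p+1)} + ute\sqrt{r+p}/(p+1)$ to lie below $1+\epsilon$. The $u$-dependent summand dominates and scales like $\sqrt{r\log(1/\delta)/p}$; the constant $242$ is calibrated to absorb both this dominant term and the smaller $t$-dependent term, using the elementary estimates $p+1 \ge p$, $r+p \le 2p$, and $t \le \exp(\log(4/\delta)/p)$. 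This yields the first bound with probability at least $1-\delta$.

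For the rank-$r$ claim I would use the triangle inequality and the contractivity of $\P_Y$: since $\Lrp$ is a best rank-$r$ Frobenius approximation of $\P_Y\M$ and $\P_Y\L$ has rank at most $r$,
\begin{equation*}
\norm{\P_Y\M - \Lrp}_F \le \norm{\P_Y\M - \P_Y\L}_F \le \norm{\M - \L}_F.
\end{equation*}
Combining this with the first bound on the same probability event gives
\begin{equation*}
\norm{\M - \Lrp}_F \le \norm{\M - \P_Y\M}_F + \norm{\P_Y\M - \Lrp}_F \le (1+\epsilon)\norm{\M-\L}_F + \norm{\M-\L}_F = (2+\epsilon)\norm{\M-\L}_F.
\end{equation*}
The only genuine obstacle is the constant-tracking in the middle paragraph; everything else is a direct appeal to Thm.~10.7 of \citet{HaMaTr11}, Eckart--Young, and the triangle inequality.
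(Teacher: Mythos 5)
Your proposal is correct and follows essentially the same route as the paper: invoke Theorem~10.7 of \citet{HaMaTr11}, use Eckart--Young to replace the tail terms by $\norm{\M-\L}_F$, choose $(u,t)$ so that the failure probability is at most $\delta$, verify the numerical inequality, and finish the rank-$r$ claim via the triangle inequality together with optimality of $\Lrp$ and non-expansiveness of $\P_Y$. The only cosmetic differences are that the paper fixes $(u,t)=(\sqrt{2\log(7/\delta)},\,e)$ and carries out the arithmetic explicitly (arriving at the factor $1+11\sqrt{2r\log(7/\delta)/p}$), whereas you take the minimal $t=(4/\delta)^{1/p}$ and leave the constant-tracking as a flagged routine check (which does close, with room to spare); and that you compare $\Lrp$ to $\P_Y\L$ where the paper compares to $\P_Y\M_r$ --- both have rank at most $r$, so either works. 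Note also that the Theorem~10.7 constants you quote ($\sqrt{3k/(p+1)}$, failure probability $2t^{-p}+e^{-u^2/2}$) are the ones in the published HMT statement, whereas the paper transcribes a slightly looser variant ($\sqrt{12r/p}$, failure probability $5t^{-p}+2e^{-u^2/2}$); this makes your bound marginally sharper but does not change the argument.
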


We note that, in contrast to
Cor.~\ref{cor:proj-main} and Cor.~\ref{cor:gnys-main},
Cor.~\ref{cor:rp-main} does not depend on the coherence of $\L$ and hence
can be fruitfully applied even in the absence of an incoherence assumption.
We demonstrate such a use case in Sec.~\ref{sec:theory-spike}.

\subsection{Base Algorithm Guarantees} 
As prototypical examples of the coherence-based estimation guarantees available for
noisy MC and noisy RMF, consider the following two theorems.
The first bounds the estimation error of a convex optimization approach to noisy matrix completion, 
under the assumptions of incoherence and uniform sampling.
\begin{theorem}[Noisy MC under Incoherence]
\label{thm:convex-mc-noise}
Suppose that $\L_0\in\reals^{m\times n}$ is $(\mu,r)$-coherent and that, for
some target rate parameter $\beta > 1$, $$s \geq 32 \mu
r(m+n)\beta\log^2(m+n)$$ entries of $\M$ are observed with locations $\obsset$
sampled uniformly without replacement.  Then, if $m\leq n$ and
$\norm{\obsproj(\M)-\obsproj(\L_0)}_F \leq \Delta$ a.s., the minimizer
$\hat{\L}$ of the problem 
\begin{align}
\label{eqn:convex-mc-noise}
\text{minimize}_{\L} \quad \norm{\L}_* \quad \text{subject\, to}\quad
\norm{\obsproj(\M-\L)}_F \leq \Delta.
\end{align}
satisfies $$\norm{\L_0 - \hat{\L}}_F \leq 8\sqrt{\frac{2m^2n}{s}+m+\frac{1}{16}
}\Delta \leq c_e\sqrt{mn}\Delta$$ with probability at least $1 - 4
\log(n)n^{2-2\beta}$ for $c_e$ a positive constant.
\end{theorem}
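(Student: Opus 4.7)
The plan is to carry out the now-standard dual-certificate argument for noisy matrix completion in the spirit of Cand\`es and Plan, with explicit constant tracking so as to land on the precise $\sqrt{2m^2n/s+m+1/16}$ prefactor in the bound. Let $\mH \defeq \hat\L - \L_0$. Since both $\L_0$ and $\hat\L$ are feasible for the constraint $\norm{\obsproj(\M-\L)}_F \leq \Delta$, the triangle inequality gives $\norm{\obsproj(\mH)}_F \leq 2\Delta$. Writing $T$ for the tangent space of the rank-$r$ matrices at $\L_0$ (the span of matrices whose column space lies in $\U_{L_0}$ or whose row space lies in $\V_{L_0}$) and $T^\perp$ for its orthogonal complement in Frobenius inner product, I would decompose $\mH = \mH_T + \mH_{T^\perp}$ and aim to bound each piece separately.

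The first technical ingredient is a restricted injectivity estimate: a matrix Bernstein argument, using the $(\mu,r)$-coherence hypothesis and the lower bound on $s$, yields $\norm{\frac{mn}{s}\P_T \obsproj \P_T - \P_T}_{op} \leq \tfrac{1}{2}$ with failure probability $O(\log(n) n^{2-2\beta})$; this is what accounts for the $\log^2(m+n)$ factor and the exponent $2-2\beta$ in the final probability. This injectivity bound in turn gives $\norm{\obsproj(\mH_T)}_F \geq \sqrt{s/(2mn)}\,\norm{\mH_T}_F$ and, combined with the coherence-controlled operator norm of $\obsproj$ acting on $T^\perp$-pieces, reduces the problem to bounding $\norm{\mH_{T^\perp}}_F$ in terms of $\norm{\mH_T}_F$ and $\Delta$. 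The second ingredient is an approximate dual certificate $\mY$ lying in the range of $\obsproj^*$ with $\norm{\P_T(\mY) - \U_{L_0}\V_{L_0}^\top}_F$ very small and $\norm{\P_{T^\perp}(\mY)}_2 < \tfrac{1}{2}$, constructed by the golfing scheme of Gross; the $\log(m+n)$ golfing rounds are exactly what the extra $\log(m+n)$ factor in the sampling budget pays for. Coupling this certificate with $\norm{\hat\L}_* \leq \norm{\L_0}_*$ and a subgradient of the nuclear norm at $\L_0$ produces a linear inequality $\norm{\mH_{T^\perp}}_* \leq c_1 \norm{\mH_T}_F + c_2 \norm{\obsproj(\mH)}_F$; since $\norm{\mH_{T^\perp}}_F \leq \norm{\mH_{T^\perp}}_*$, the injectivity bound then closes the loop and one solves for $\norm{\mH}_F$ in terms of $\Delta$, $s$, $m$, and $n$.

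The main obstacle will be squeezing out the explicit constants $8$, $2$, and $\tfrac{1}{16}$ in the prefactor rather than accepting an unspecified $O(\sqrt{mn/p})$ bound: the $2m^2n/s$ term reflects the injectivity constant on $T$, the $+m$ term is the non-vanishing residual present even at full sampling, and the assumption $m\leq n$ allows both the tangent-space concentration and the golfing scheme to be governed by the single factor $\log^2(m+n)$. A secondary subtlety is that the sampling-without-replacement model in the statement differs from the independent-Bernoulli model typically used when invoking matrix Bernstein; I would handle this either via a negative-association argument or by a standard reduction, at the cost of only a constant factor in the sampling budget. Finally, care is needed so that the dual-certificate and injectivity failure events can be union-bounded to yield the claimed $1 - 4\log(n)n^{2-2\beta}$ probability.
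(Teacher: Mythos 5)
Your overall strategy---a dual-certificate argument with a golfing-scheme construction of $\Y$, a restricted-isometry bound for $\proj_T\obsproj\proj_T$, and treatment of the without-replacement model via a noncommutative Bernstein inequality that tolerates it---is exactly the paper's, and you correctly identify the role of every moving part (the $\log^2$ sampling factor, the $\beta$-dependent failure exponent, the union bound over golfing rounds). Where you diverge is the error decomposition, and this difference matters for the precise prefactor you say you are targeting. You split $\hat\L - \L_0$ directly into tangent-space pieces, so $\langle\Y,\hat\L-\L_0\rangle = \langle\Y,\obsproj(\hat\L-\L_0)\rangle$ is generically nonzero and you would additionally have to bound $\norm{\Y}_F$ to control the resulting $c_2\norm{\obsproj(\hat\L-\L_0)}_F$ term. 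The paper instead first writes $\hat\L - \L_0 = \G + \H$ with $\obsproj(\G)=\G$ and $\obsproj(\H)=\mathbf{0}$, and only then splits $\H$ into $\proj_T(\H)$ and $\proj_{T^\bot}(\H)$. Because $\langle\Y,\H\rangle=0$ for any $\Y$ supported on $\obsset$, no Frobenius bound on $\Y$ is ever required, and the stated constants fall directly out of this split: $\norm{\G}_F\le 2\Delta$ contributes the $+\frac{1}{16}$, $\norm{\G}_*\le\sqrt{m}\,\norm{\G}_F$ (via $\rank{\G}\le m$, which is why $m\le n$ is assumed) contributes the $+m$, and the restricted-isometry constant on $T$ contributes the $2m^2n/s$. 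Your route can certainly deliver a bound of the form $c_e\sqrt{mn}\Delta$, but it will not reproduce the literal prefactor $8\sqrt{2m^2n/s+m+1/16}$; to match it you should adopt the $\G+\H$ split. (Relatedly, your attribution of the $+m$ term to a ``residual at full sampling'' is off: the term does persist at $s=mn$, but its origin is the rank bound on $\G$, not an estimator residual.) The without-replacement issue you flag is handled exactly as you sketch, via Hoeffding's reduction, the Gross--Nesme form of the noncommutative Bernstein inequality, and a batch-replacement device for the golfing rounds.
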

A similar estimation guarantee was obtained by \citet{CandesPl10} under stronger assumptions.
We give the proof of Thm.~\ref{thm:convex-mc-noise} in Sec.~\ref{sec:convex-mc-noise}.

The second result, due to~\citet{ZhouLiWrCaMa10} and reformulated for a generic rate parameter $\beta$,
as described in~\citet[Section~3.1]{CandesLiMaWr09}, 
bounds the estimation error of a convex optimization approach to noisy RMF,
under the assumptions of incoherence and uniformly distributed outliers.
\begin{theorem}[Noisy RMF under Incoherence~{\cite[Thm.~2]{ZhouLiWrCaMa10}}]
\label{thm:rpca-noise}
Suppose that $\L_0$ is $(\mu,r)$-coherent and that the support set of $\S_0$ is
uniformly distributed among all sets of cardinality $s$.  Then, if $m\leq n$
and $\norm{\M-\L_0-\S_0}_F\leq \Delta$ a.s., there is a constant $c_p$ such
that with probability at least $1-c_pn^{-\beta}$, the minimizer
$(\hat{\L},\hat{\S})$ of the problem
\begin{align}
\label{eq:pcp-noise}
&\text{minimize}_{\L,\S} \quad \norm{\L}_*+\lambda\norm{\S}_1\quad
\text{subject\, to}\quad \norm{\M - \L - \S}_F \leq \Delta\quad
\text{with}\quad \lambda=1/\sqrt{n}
\end{align}
satisfies $\norm{\L_0 - \hat{\L}}_F^2 + \norm{\S_0 -
\hat{\S}}_F^2 \leq c_e'^2mn\Delta^2$, provided that $$r \leq \frac{\rho_r
m}{\mu\log^2(n)}\quad and\quad s\leq(1-\rho_s\beta) mn$$ for target rate
parameter $\beta > 2$, and positive constants $\rho_r, \rho_s,$ and $c_e'$.
\end{theorem}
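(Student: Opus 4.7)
The plan is not to rederive this result from scratch but to apply the noisy Robust PCA estimation guarantee of \citet{ZhouLiWrCaMa10} directly, with a small bookkeeping adjustment to replace the fixed numerical failure probability in their Theorem~2 with one parameterized by the generic rate $\beta > 2$. This is precisely the style of reformulation carried out for the noiseless PCP program in Section~3.1 of \citet{CandesLiMaWr09}, which I would take as a template.

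At a high level, the underlying argument proceeds in two stages. Stage one constructs an inexact dual certificate that witnesses the near-optimality of $(\L_0,\S_0)$ for the program \eqref{eq:pcp-noise}. The $(\mu,r)$-coherence hypothesis on $\L_0$ enters here: under incoherence, the golfing scheme of \citet{CandesLiMaWr09} yields, with high probability over the uniformly random support of $\S_0$, a random subgradient whose projection onto the tangent space at $\L_0$ is close to $\U_{L_0}\V_{L_0}^\top$ and whose projection onto the orthogonal complement is spectrally bounded by a constant strictly less than one. The uniform distribution of the support of $\S_0$ produces the independent Bernoulli sampling required by the golfing construction, while the rank bound $r \le \rho_r m / (\mu \log^2 n)$ and sparsity bound $s \le (1-\rho_s\beta) mn$ are exactly what the underlying operator Bernstein and Chernoff inequalities need to succeed. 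Stage two uses the certificate together with the first-order optimality conditions for $(\hat\L,\hat\S)$ and the feasibility constraint $\norm{\M - \hat\L - \hat\S}_F \le \Delta$ to bound $\norm{\L_0 - \hat\L}_F^2 + \norm{\S_0 - \hat\S}_F^2$ by $c_e'^2 mn \Delta^2$ via inner-product manipulations; this is the noise-perturbation step of \citet{ZhouLiWrCaMa10}.

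The reformulation for generic $\beta$ is straightforward. In each application of a tail bound within the golfing scheme, the deviation parameter is chosen so that the failure probability equals $n^{-\beta}$ rather than a fixed polynomial in $n$; this costs only a multiplicative $\sqrt{\beta}$ (or $\beta$) factor inside the corresponding concentration inequality, which is absorbed by tightening the constants $\rho_r$ and $\rho_s$. Collecting the resulting failure probabilities across the $O(\log n)$ rounds of golfing and the spectral-norm concentration of the complementary piece yields an overall failure probability of at most $c_p n^{-\beta}$ for a new absolute constant $c_p$. The estimation constant $c_e'$ is likewise a universal constant inherited from the perturbation step and does not depend on $\beta$.

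The only real obstacle is careful bookkeeping: locating each point in the argument of \citet{ZhouLiWrCaMa10} where the probability exponent appears, and verifying that the modified choices of $\rho_r$, $\rho_s$, and the golfing iteration count remain mutually consistent once the failure probability is driven to $1 - c_p n^{-\beta}$. No new technical ingredients are required, and the conclusion of the theorem follows immediately from the adjusted constants.
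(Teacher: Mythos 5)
Your proposal matches the paper's treatment exactly: the paper does not prove this theorem but simply invokes \citet[Thm.~2]{ZhouLiWrCaMa10}, noting that the fixed failure probability is reformulated for a generic rate parameter $\beta$ in the manner of \citet[Sec.~3.1]{CandesLiMaWr09}, and your sketch of the dual-certificate/golfing argument plus the $\beta$-bookkeeping (absorbed into $\rho_r$, $\rho_s$, and the explicit $\beta$-dependence of the sparsity bound) is a faithful account of why that citation is legitimate. No discrepancy with the paper's route.
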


\subsection{Coherence Master Theorem} 
We now show that the same coherence conditions that allow for accurate MC and
RMF also imply high-probability estimation guarantees for \fastmf.  To make this precise, we
let $\M = \L_0 + \S_0 + \Z_0 \in \reals^{m \times n}$, where  $\L_0$ is
$(\mu,r)$-coherent and $\norm{\obsproj(\Z_0)}_F \le \Delta$.  
Then, our next theorem provides a generic bound on the estimation error of \fastmf used
in combination with an arbitrary base algorithm.  
The proof, which builds upon the results of Sec.~\ref{sec:rand_mat_analysis}, is given in
Sec.~\ref{sec:proof-master}.

\begin{theorem}[Coherence Master Theorem]
\label{thm:master}
Choose $t=n/l$, $l\geq cr\mu\log(n)\log(2/\delta)/\epsilon^2$, where $c$ is a
fixed positive constant, and $p \ge 242\ r\log(14/\delta)/\epsilon^2$. 
Under the notation of Algorithms~$\ref{alg:fastmf-proj}$ and~$\ref{alg:fastmf-rp}$,
let $\{ \C_{0,1}, \cdots, \C_{0,t} \}$ be the corresponding partition
of $\L_0$. Then, with probability at least $1 - \delta$, 
$\C_{0,i}$ is $(\frac{r\mu^2}{1-\epsilon/2},r)$-coherent for all $i$, and
\begin{align*}
&\norm{\L_0 - \hat{\L}^*}_F \leq (2+\epsilon)\sqrt{\textsum_{i=1}^t \norm{\C_{0,i} - \hat{\C}_i}_F^2},
\end{align*} 
where $\hat{\L}^*$ is the estimate returned by either \projmf or \rpmf.

Under the notation of Algorithm $\ref{alg:fastmf-nys}$, 
let $\C_0$ and $\R_0$ be the corresponding column and row submatrices of $\L_0$.
If in addition  
$d\geq cl\mu_0(\hat\C)\log(m)\log(4/\delta)/\epsilon^2$, 
then, with probability at least $(1 - \delta)(1-\delta-0.2)$, \gnysmf guarantees that 
$\C_{0}$ and $\R_{0}$ are $(\frac{r\mu^2}{1-\epsilon/2},r)$-coherent and that
$$\norm{\L_0 - \Lgnysmf}_F \leq (2+3\epsilon)\sqrt{\norm{\C_0-\hat\C}_F^2+\norm{\R_0-\hat\R}_F^2}.$$
\end{theorem}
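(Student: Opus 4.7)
The plan is to dispatch the three \fastmf variants through a common two-stage argument: first show that uniformly sampled submatrices of $\L_0$ inherit enough incoherence to satisfy our randomized approximation results, then translate those guarantees into the stated error bounds via a single triangle inequality.

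For the conservation-of-incoherence claim, I would invoke Lem.~\ref{lem:sub-coh} once per submatrix, with the per-invocation failure probability tuned so that a union bound over the $t\le n$ column blocks (plus the single row block in the \gnysmf case) absorbs into the prescribed $\delta$. The hypothesis $\mu_0(\V_{L_0})\le\mu$ and the chosen $l$ then yield $\mu_0(\U_{L_C}) = \mu_0(\U_{L_0})\le \mu$, $\mu_0(\V_{L_C})\le \mu/(1-\epsilon/2)$, and $\mu_1^2(\L_C)\le r\mu^2/(1-\epsilon/2)$. Since $r\mu\ge 1\ge 1-\epsilon/2$, all three quantities are dominated by $r\mu^2/(1-\epsilon/2)$, so each submatrix is $(r\mu^2/(1-\epsilon/2),r)$-coherent.

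For \projmf and \rpmf, let $\hat\L \defeq [\hat\C_1,\ldots,\hat\C_t]$. Because the D step partitions columns uniformly at random, $\hat\C_1$ is a uniformly random $l$-column sample of $\hat\L$, so Cor.~\ref{cor:proj-main} applied with $\M=\hat\L$ and rank-$r$ approximation $\L=\L_0$ (using $\mu_0(\V_{L_0})\le\mu$) yields $\|\hat\L - \Lprojmf\|_F \le (1+\epsilon)\|\hat\L - \L_0\|_F$. Cor.~\ref{cor:rp-main} applied with the same $\M$ and $\L$, together with the prescribed oversampling $p$, yields the identical bound for \rpmf. A single triangle inequality, $\|\L_0-\hat\L^*\|_F \le \|\L_0-\hat\L\|_F + \|\hat\L - \hat\L^*\|_F$, combined with the block identity $\|\hat\L-\L_0\|_F^2 = \sum_{i=1}^t \|\C_{0,i} - \hat\C_i\|_F^2$, then produces the asserted $(2+\epsilon)$ factor.

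The \gnysmf case is the main obstacle, because it forces perturbations through a pseudoinverse. I would first apply Cor.~\ref{cor:gnys-low-rank} to $\L_0$ alone to obtain the noiseless identity $\L_0 = \C_0\pinv{\W_0}\R_0$ with high probability under the sampling schedule prescribed for $l$ and $d$. The remaining work is to bound $\|\C_0\pinv{\W_0}\R_0 - \hat\C\pinv{\hat\W}\hat\R\|_F$, which I would handle by introducing the hybrid $\hat\C\pinv{\hat\W}\R_0$ and splitting via triangle inequality: one term is a column-projection-style error controllable by Cor.~\ref{cor:gnys-main} applied with $\L=\L_0$ on a matrix consistent with $\hat\C$ and $\hat\R$ at their sampled indices, while the other reduces to $\|\hat\C\pinv{\hat\W}\|_2\|\R_0-\hat\R\|_F$, whose spectral factor is controlled by a matrix-concentration argument that propagates the coherence of $\hat\C$ (explaining the $\mu_0(\hat\C)$ and $\log(4/\delta)$ dependence in the hypothesis on $d$) to a lower bound on the singular values of $\hat\W$. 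Tracking the $(1+O(\epsilon))$ factors through this chain yields the $(2+3\epsilon)$ multiplier, and multiplying the coherence-event probability by that of Cor.~\ref{cor:gnys-main} gives the $(1-\delta)(1-\delta-0.2)$ budget. The most delicate bookkeeping is ensuring that the spectral lower bound on $\hat\W$ survives the perturbation from $\W_0$, so that $\|\hat\C\pinv{\hat\W}\|_2$ remains of constant order under the stated sampling rates.
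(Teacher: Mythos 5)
Your treatment of the \projmf\ and \rpmf\ cases essentially reproduces the paper's argument: define the concatenation $\tilde\L = [\hat\C_1,\ldots,\hat\C_t]$, invoke Lem.~\ref{lem:sub-coh} once per block with a union bound to absorb the failure into $\delta$, apply Cor.~\ref{cor:proj-main} (resp.\ Cor.~\ref{cor:rp-main}) with $\M = \tilde\L$ and rank-$r$ approximation $\L_0$ to obtain $\|\tilde\L - \hat\L^*\|_F \le (1+\epsilon)\|\tilde\L - \L_0\|_F$, and finish with the triangle inequality and the block-Frobenius identity. The check that $\mu_0$, $\mu_0(\V)$, and $\mu_1^2$ are all dominated by $r\mu^2/(1-\epsilon/2)$ is also the same. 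This part is correct and matches the paper.

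Your \gnysmf\ argument, however, departs from the paper's and has a genuine gap. You begin from the noiseless identity $\L_0 = \C_0\pinv{\W_0}\R_0$ (Cor.~\ref{cor:gnys-low-rank}) and then perturb the Nystr\"om formula directly, introducing $\hat\C\pinv{\hat\W}\R_0$ as a hybrid and isolating a term bounded by $\|\hat\C\pinv{\hat\W}\|_2\,\|\R_0 - \hat\R\|_F$. The problem is that $\|\hat\C\pinv{\hat\W}\|_2$ is \emph{not} controllable by a constant under the theorem's hypotheses. Matrix concentration (via Lem.~\ref{lem:eps}) bounds the spectrum of the noiseless intersection $\W_0$, but $\hat\W$ is an arbitrary output of the base algorithm; when the estimation error is not small relative to $\sigma_{\min}(\W_0)$, the quantity $\|\hat\C\pinv{\hat\W}\|_2$ can be arbitrarily large, which would inject an uncontrolled condition-number dependence into the final bound and destroy the stated $(2+3\epsilon)$ multiplier. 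It is also unclear how Cor.~\ref{cor:gnys-main} controls the remaining term $\|\L_0 - \hat\C\pinv{\hat\W}\R_0\|_F$, since $\hat\C$ and $\R_0$ are not column/row submatrices of a common matrix in the form that corollary requires. The paper avoids both difficulties entirely: it constructs a single hybrid matrix $\tilde\L$ whose first $l$ columns equal $\hat\C$, whose first $d$ rows are overwritten (outside the overlap) by $\hat\R$, and whose remaining block equals $\L_{0,22}$. Then $\hat\C$ and $\hat\R$ are genuine column/row submatrices of $\tilde\L$, and Cor.~\ref{cor:gnys-main} applies directly with $\M = \tilde\L$ and $\L = \L_0$, giving $\|\tilde\L - \Lgnysmf\|_F \le (1+\epsilon)^2\|\tilde\L - \L_0\|_F$ with no pseudoinverse perturbation and no appeal to $\sigma_{\min}(\hat\W)$. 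A single triangle inequality plus the observation $\|\tilde\L - \L_0\|_F^2 \le \|\C_0 - \hat\C\|_F^2 + \|\R_0 - \hat\R\|_F^2$ then yields the $(2+3\epsilon)$ bound. You should replace your direct Nystr\"om-perturbation step with this hybrid-matrix construction.
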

\begin{remark}
	The \gnysmf guarantee requires the number of rows sampled to grow in proportion to $\mu_0(\hat\C)$, a
quantity always bounded by $\mu$ in our simulations.
	Here and in the consequences to follow, the \gnysmf result can be strengthened in the noiseless setting ($\Delta = 0$) by utilizing 
Cor.~\ref{cor:gnys-low-rank} in place of Cor.~\ref{cor:gnys-main} in the proof of Thm.~\ref{thm:master}.
\end{remark}

When a target matrix is incoherent, Thm.~\ref{thm:master} asserts that 
-- with high probability for \projmf and \rpmf and with fixed probability for \gnysmf\ --
the estimation error of \fastmf is not much larger than the error sustained by the base algorithm on each subproblem.
Because Thm.~\ref{thm:master} further bounds the coherence of each submatrix, we can use any
coherence-based matrix estimation guarantee to control the estimation error on each subproblem.
The next two sections demonstrate how Thm.~\ref{thm:master} can be applied to
derive specific \fastmf estimation guarantees for noisy MC and noisy RMF.
In these sections, we let $\bar{n}\defeq\maxarg{m,n}$.

\subsection{Consequences for Noisy MC}
\label{sec:cons_mc}
As a first consequence of Thm.~\ref{thm:master}, we will show that \fastmf retains the
high-probability estimation guarantees of a standard MC solver while operating on
matrices of much smaller dimension.  
Suppose that a base MC algorithm solves the convex optimization problem of \eqref{eqn:convex-mc-noise}.
Then, Cor.~\ref{cor:fast-mc-noise} follows from the Coherence Master Theorem (Thm.~\ref{thm:master}) and the base algorithm guarantee of 
Thm.~\ref{thm:convex-mc-noise}.
\begin{corollary}[\fastmf-MC under Incoherence]
\label{cor:fast-mc-noise}
Suppose that $\L_0$ is $(\mu,r)$-coherent and that $s$ entries of $\M$ are
observed, with locations $\obsset$ distributed uniformly.  
Fix any target rate parameter $\beta > 1$.  
Then, if $\norm{\obsproj(\M)-\obsproj(\L_0)}_F \leq \Delta$ a.s., 
and the base algorithm solves the optimization problem of \eqref{eqn:convex-mc-noise}, 
it suffices to choose $t=n/l,$
\begin{gather*}
	l \geq
	\textstyle{c\mu^2 r^2(m+n)n\beta\log^2(m+n)}/(s\epsilon^2),\quad 
	d \geq
	\textstyle{cl\mu_0(\hat{\C})(2\beta-1)\log^2(4\bar{n})}\bar{n}/(n\epsilon^2),
\end{gather*}
and $p \geq 242\ r\log(14\bar{n}^{2\beta-2})/\epsilon^2$ to achieve 
\vspace{-2mm}
\ifdefined\jacmformat
\begin{describe}{\projmf$:$}
\setlength\itemindent{25pt}
\else
\begin{description}
\setlength\itemindent{25pt}
\fi
\item[\projmf$:$]$\norm{\L_0 - \Lprojmf}_F \leq (2+\epsilon)c_e\sqrt{mn}\Delta$
\item[\rpmf$:$]$\norm{\L_0 - \Lrpmf}_F \leq (2+\epsilon)c_e\sqrt{mn}\Delta$
\item[\gnysmf$:$]$\norm{\L_0 - \Lgnysmf}_F \leq (2+3\epsilon)c_e\sqrt{ml+dn}\Delta $
\ifdefined\jacmformat
\end{describe}
\else
\end{description}
\fi
with probability at least
\ifdefined\jacmformat
\begin{describe}{\projmf\ / \rpmf$:$}
\setlength\itemindent{25pt}
\else
\begin{description}
\fi
\label{eq:fastmf-prob}
\setlength\itemindent{25pt}
\item[\projmf\ / \rpmf$:$]$1 -  (5t\log(\bar{n})+1)\bar{n}^{2-2\beta} \geq
1 -  \bar{n}^{3-2\beta}$
\item[\gnysmf$:$]$1 -  (10\log(\bar{n})+2)\bar{n}^{2-2\beta} - 0.2$,
\ifdefined\jacmformat
\end{describe}
\else
\end{description} 
\fi
respectively, with $c$ as in Thm.~\ref{thm:master} and $c_e$ as in Thm.~\ref{thm:convex-mc-noise}.
\end{corollary}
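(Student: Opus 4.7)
The plan is to chain together the Coherence Master Theorem (Thm.~\ref{thm:master}) with the base MC guarantee (Thm.~\ref{thm:convex-mc-noise}) applied to each subproblem, and then to control the observation count per subproblem with a concentration argument. More specifically, I would first invoke Thm.~\ref{thm:master} with the stated choices of $l$, $d$, and $p$ (noting that $l$ is large enough because $s/n$ is absorbed into the bound on $l$, making $l \geq cr\mu\log(n)\log(2/\delta')/\epsilon^2$ for an appropriate $\delta'$). This supplies two things for free: (i) each submatrix $\C_{0,i}$ (and also $\R_0$ in the \gnysmf case) is $(r\mu^2/(1-\epsilon/2), r)$-coherent with high probability, and (ii) the final estimation error is controlled by $(2+\epsilon)\sqrt{\sum_i \|\C_{0,i} - \hat\C_i\|_F^2}$ (or the analogous two-term expression for \gnysmf).

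Next, I would bound each per-subproblem error $\|\C_{0,i} - \hat\C_i\|_F$ by invoking Thm.~\ref{thm:convex-mc-noise} on $\C_{0,i}$ with coherence parameter $r\mu^2/(1-\epsilon/2)$ and rank $r$. Thm.~\ref{thm:convex-mc-noise} requires at least $s_i \geq 32 \cdot \frac{r\mu^2}{1-\epsilon/2} \cdot r \cdot (m+l)\beta'\log^2(m+l)$ observed entries in $\C_i$ for some rate $\beta'$. Since the $s$ observations are sampled uniformly without replacement from $\M$, the count $s_i$ in each $l$-column block is hypergeometrically distributed with mean $sl/n$; a standard Chernoff bound shows that $s_i \geq sl/(2n)$ with probability $1 - \exp(-\Omega(sl/n))$. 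The chosen $l \geq c\mu^2 r^2(m+n)n\beta\log^2(m+n)/(s\epsilon^2)$ is calibrated so that $sl/(2n)$ exceeds the base algorithm's requirement with $\beta' = \beta$ (using $m+l \leq m+n \leq 2\bar n$ to absorb logs). Applying Thm.~\ref{thm:convex-mc-noise} with noise level $\Delta_i \defeq \|\obsproj(\C_i - \C_{0,i})\|_F$ then gives $\|\C_{0,i} - \hat\C_i\|_F \leq c_e \sqrt{ml}\,\Delta_i$ per subproblem, each with failure probability at most $4\log(l)l^{2-2\beta} \leq 4\log(\bar n)\bar n^{2-2\beta}$.

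Summing in quadrature and using $\sum_i \Delta_i^2 = \|\obsproj(\M - \L_0)\|_F^2 \leq \Delta^2$ yields $\sqrt{\sum_i \|\C_{0,i} - \hat\C_i\|_F^2} \leq c_e\sqrt{ml\cdot t}\,\Delta = c_e\sqrt{mn}\,\Delta$, so composition with the master theorem produces the advertised $(2+\epsilon)c_e\sqrt{mn}\,\Delta$ bound for \projmf and \rpmf. For \gnysmf, the same argument applied separately to $\C$ (giving a $\sqrt{ml}\,\Delta$ term) and to $\R$ (giving a $\sqrt{dn}\,\Delta$ term, after a parallel Chernoff check that $d$ is large enough for the row-submatrix base invocation) produces $c_e\sqrt{ml + dn}\,\Delta$ inside the $(2+3\epsilon)$ factor of Thm.~\ref{thm:master}.

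Finally I would do the bookkeeping for the success probability: take a union bound over the $t$ base-algorithm failure events (contributing at most $4t\log(\bar n)\bar n^{2-2\beta}$), the Chernoff failure (absorbed into the same rate by choice of constants), and the master-theorem failure $\delta = \bar n^{2-2\beta}$. Consolidating gives $1 - (5t\log(\bar n)+1)\bar n^{2-2\beta}$ for \projmf/\rpmf and the stated $\gnysmf$ probability (with the extra $0.2$ inherited from Cor.~\ref{cor:gnys-main}); using $t \leq \bar n$ crudely bounds this below by $1 - \bar n^{3-2\beta}$. The main obstacle I expect is precisely the Chernoff/hypergeometric step: one has to verify that the chosen $l$ is large enough that $sl/n$ dominates $r^2\mu^2(m+l)\beta\log^2(m+l)/(1-\epsilon/2)$ after the polylog factors cancel, and to carry out this verification while keeping the failure probabilities consistent with the rate parameter $\beta$ used throughout.
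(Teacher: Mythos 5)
Your overall plan matches the paper's: invoke the Coherence Master Theorem to reduce the global error to a per-subproblem error, bound each subproblem via Thm.~\ref{thm:convex-mc-noise}, verify the per-subproblem observation count by a hypergeometric Hoeffding bound, and tally failure probabilities. However, there is a genuine gap in your probability accounting, and one slip in the noise accounting.

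The gap is in the step where you claim that applying Thm.~\ref{thm:convex-mc-noise} to the $m\times l$ subproblem with rate parameter $\beta'=\beta$ yields a per-subproblem failure probability ``at most $4\log(l)l^{2-2\beta}\leq 4\log(\bar n)\bar n^{2-2\beta}$.'' That final inequality points the wrong way: since $\maxarg{m,l}\leq \bar n$ and $2-2\beta<0$, we have $\maxarg{m,l}^{2-2\beta}\geq\bar n^{2-2\beta}$, often by a large polynomial factor (e.g.\ if $\maxarg{m,l}=\sqrt{\bar n}$, the per-subproblem failure is $\bar n^{1-\beta}\gg\bar n^{2-2\beta}$). Taking a union bound over $t$ subproblems would then blow past the claimed $1-(5t\log(\bar n)+1)\bar n^{2-2\beta}$. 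The fix, as in the paper, is to rescale the rate: apply the base guarantee with $\beta'\defeq\beta\log(\bar n)/\log(\maxarg{m,l})\geq\beta$, so that $\maxarg{m,l}^{2-2\beta'}=\maxarg{m,l}^2\bar n^{-2\beta}\leq\bar n^{2-2\beta}$. This rescaling makes the base theorem's precondition more demanding ($s_i\gtrsim \mu'r(m+l)\beta'\log^2(m+l)$ with $\beta'\geq\beta$), and the chosen $l$ must then be shown to cover it --- which it does, but only after absorbing the extra $\log(\bar n)/\log(\maxarg{m,l})$ factor using $\log(m+l)\lesssim\log(\maxarg{m,l})$ and $\log^2(m+n)\gtrsim\log(\bar n)\log(\maxarg{m,l})$. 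This verification is not automatic and is the crux of the bookkeeping you flagged as the ``main obstacle.''

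The smaller slip: you state both $\|\C_{0,i}-\hat\C_i\|_F\leq c_e\sqrt{ml}\,\Delta_i$ per subproblem and $\sum_i\Delta_i^2\leq\Delta^2$, which together give $\sqrt{\sum_i\|\C_{0,i}-\hat\C_i\|_F^2}\leq c_e\sqrt{ml}\,\Delta$, \emph{without} the extra $\sqrt{t}$ factor you inserted. The paper instead uses the uniform bound $\Delta_i\leq\Delta$ for every subproblem (reflecting that the base solver is run with the global tolerance $\Delta$, since $\Delta_i$ is unknown in practice), which does produce the $\sqrt{t}$ and hence $\sqrt{mn}$. Your formula lands on the paper's answer only by virtue of this arithmetic inconsistency; you should pick one accounting scheme --- either the uniform $\Delta$ bound (matching the algorithm as actually run) or the quadrature bound (which would assume per-block noise oracle knowledge) --- and apply it coherently.
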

\begin{remark}
Cor.~\ref{cor:fast-mc-noise} allows for the fraction of columns and
rows sampled to decrease as the number of revealed entries, $s$, increases.
Only a vanishingly small fraction of columns ($l/n\to 0$) and rows ($d/\bar{n} \to 0$)
need be sampled whenever 
$s=\omega((m+n)\log^2(m+n))$.
\end{remark}

To understand the conclusions of Cor.~\ref{cor:fast-mc-noise}, consider the base 
algorithm of Thm.~\ref{thm:convex-mc-noise}, which, when applied to $\obsproj(\M)$, recovers an estimate
$\hat{\L}$ satisfying $\norm{\L_0 - \hat{\L}}_F \leq c_e\sqrt{mn}\Delta$ with
high probability.  
Cor.~\ref{cor:fast-mc-noise} asserts that, with appropriately reduced probability, 
\projmf and \rpmf exhibit the same estimation error scaled by an
adjustable factor of $2+\epsilon$, while \gnysmf exhibits a somewhat smaller
error scaled by $2+3\epsilon$.

The key take-away is that \fastmf introduces a controlled increase in error and a controlled
decrement in the probability of success, allowing the user to interpolate
between maximum speed and maximum accuracy.  Thus, \fastmf can quickly provide
near-optimal estimation in the noisy setting and exact recovery in the noiseless
setting ($\Delta=0)$, even when entries are missing.  
The proof of  Cor.~\ref{cor:fast-mc-noise} can be found in Sec.~\ref{sec:fast-mc-noise}.

\subsection{Consequences for Noisy RMF}
\label{sec:cons_rmf}
Our next corollary shows that \fastmf retains the high-probability estimation
guarantees of a standard RMF solver while operating on matrices of much smaller
dimension.  Suppose that a base RMF algorithm solves the convex
optimization problem of~\eqref{eq:pcp-noise}.  
Then, Cor.~\ref{cor:fast-rpca-noise} follows
from the Coherence Master Theorem (Thm.~\ref{thm:master}) 
and the base algorithm guarantee of Thm.~\ref{thm:rpca-noise}.
\begin{corollary}[\fastmf-RMF under Incoherence]
\label{cor:fast-rpca-noise}
Suppose that $\L_0$ is $(\mu,r)$-coherent 
with $$r^2 \leq \frac{\minarg{m,n} \rho_r}{ 2\mu^2\log^2(\bar{n})}$$ for a positive
constant $\rho_r$.
Suppose moreover that the uniformly distributed
support set of $\S_0$ has cardinality $s$.  For a fixed positive constant
$\rho_s$, define the undersampling parameter $$\beta_s \defeq
\left(1-\frac{s}{mn}\right)/\rho_s,$$ and fix any target rate parameter $\beta
> 2$ with rescaling $\beta' \defeq \beta\log(\bar n)/\log(m)$ satisfying
$4\beta_s-3/\rho_s \leq \beta' \leq \beta_s$.  Then, if
$\norm{\M-\L_0-\S_0}_F\leq \Delta$ a.s.,
and the base algorithm solves the optimization problem of \eqref{eq:pcp-noise}, 
it suffices to choose $t=n/l$,
\begin{align*}
l &\geq \maxarg{\frac{cr^2\mu^2\beta\log^2(2\bar{n})}{\epsilon^2\rho_r},
\frac{4\log(\bar{n})\beta(1-\rho_s\beta_s)}{m(\rho_s\beta_s-\rho_s\beta')^2}}, \\
d &\geq \maxarg{\frac{cl\mu_0(\hat\C)\beta\log^2(4\bar{n})}{\epsilon^2},
\frac{4\log(\bar{n})\beta(1-\rho_s\beta_s)}{n(\rho_s\beta_s-\rho_s\beta')^2}} 
\end{align*}	
and $p \geq 242\ r\log(14\bar{n}^\beta)/\epsilon^2$ to have	
\ifdefined\jacmformat
\begin{describe}{\projmf$:$}
\else
\begin{description}
\fi
\setlength\itemindent{25pt}
\item[\projmf$:$]$\norm{\L_0 - \Lprojmf}_F \leq (2+\epsilon)c_e'\sqrt{mn}\Delta$
\item[\rpmf$:$]$\norm{\L_0 - \Lrpmf}_F \leq (2+\epsilon)c_e'\sqrt{mn}\Delta$
\item[\gnysmf$:$]$\norm{\L_0 - \Lgnysmf}_F \leq (2+3\epsilon)c_e'\sqrt{ml+dn}\Delta $
\ifdefined\jacmformat
\end{describe}
\else
\end{description}
\fi
with probability at least
\ifdefined\jacmformat
\begin{describe}{\projmf\ / \rpmf$:$}
\else
\begin{description}
\fi
\label{eq:fastrmf-prob}
\setlength\itemindent{25pt}
\item[\projmf\ / \rpmf$:$]$1-(t(c_p+1)+1)\bar{n}^{-\beta} \geq
1-c_p\bar{n}^{1-\beta}$
\item[\gnysmf$:$]$1 - (2c_p+3)\bar{n}^{-\beta} - 0.2$,
\ifdefined\jacmformat
\end{describe}
\else
\end{description}
\fi
respectively, with $c$ as in Thm.~\ref{thm:master} and $\rho_r, c_e',$ and
$c_p$ as in Thm.~\ref{thm:rpca-noise}.
\end{corollary}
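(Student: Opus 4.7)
The plan is to combine the Coherence Master Theorem (Thm.~\ref{thm:master}) with Thm.~\ref{thm:rpca-noise} as the per-subproblem base guarantee. Thm.~\ref{thm:master} already reduces $\norm{\L_0 - \hat{\L}^*}_F$ to the aggregated quantity $\sqrt{\sum_i \norm{\C_{0,i} - \hat{\C}_i}_F^2}$ (and analogously for \gnysmf), and simultaneously asserts that each submatrix $\C_{0,i}$ (or the pair $\C_0, \R_0$) is $\left(\frac{r\mu^2}{1-\epsilon/2},r\right)$-coherent. All that remains is to check that Thm.~\ref{thm:rpca-noise}'s hypotheses hold on every subproblem, apply it, and aggregate errors and failure probabilities.

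I would verify the two non-trivial preconditions of Thm.~\ref{thm:rpca-noise} in turn. First, the rank bound $r \le \rho_r \minarg{m,l}/(\mu'\log^2(\maxarg{m,l}))$ on the $m \times l$ block, after substituting the submatrix coherence $\mu' = \frac{r\mu^2}{1-\epsilon/2}$, reduces to a condition jointly implied by the global hypothesis $r^2 \le \rho_r\minarg{m,n}/(2\mu^2\log^2(\bar n))$ and the first of the two lower bounds on $l$; the analogous check for row blocks uses the first lower bound on $d$. Second, the outlier count within each column block $\C_i$ is hypergeometric with mean $sl/n$, and its support is uniformly distributed inside $\C_i$ conditional on the count, so a Serfling-type tail bound for sampling without replacement shows that the per-block outlier fraction stays below $1 - \rho_s\beta'$ with probability at least $1 - \bar n^{-\beta}$ precisely when $l$ satisfies the second lower bound stated; the same argument applied to rows uses the second lower bound on $d$. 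With the preconditions confirmed, Thm.~\ref{thm:rpca-noise} invoked at rate parameter $\beta'$ yields $\norm{\C_{0,i}-\hat\C_i}_F \le c_e'\sqrt{ml}\Delta$ whenever the restriction of $\Z_0 = \M - \L_0 - \S_0$ to the block has Frobenius norm at most $\Delta$, which follows from the global noise hypothesis. Summing squared errors across $t = n/l$ blocks gives $\sqrt{\sum_i \norm{\C_{0,i}-\hat\C_i}_F^2} \le c_e'\sqrt{mn}\Delta$, which combined with the master theorem's $(2+\epsilon)$ factor produces the \projmf and \rpmf bounds; for \gnysmf the identical argument applied separately to $(\C_0,\hat\C)$ and $(\R_0,\hat\R)$ yields $c_e'\sqrt{ml+dn}\Delta$ and the $(2+3\epsilon)$ factor.

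The probabilities accumulate by union bound: the master theorem contributes its event at $1-\delta$, one $\bar n^{-\beta}$ event accommodates hypergeometric concentration across all blocks, and each of the $t$ (resp.\ one or two) calls to Thm.~\ref{thm:rpca-noise} contributes $c_p\bar n^{-\beta}$ after the rescaling $\beta' = \beta\log(\bar n)/\log(m)$ converts the submatrix-level rate $\maxarg{m,l}^{-\beta'}$ into the global target $\bar n^{-\beta}$. Matching $\delta$ to $\bar n^{-\beta}$ and collecting constants reproduces the stated probabilities, with the \gnysmf bound additionally inheriting the residual $0.2$ from Cor.~\ref{cor:gnys-main} propagated through Thm.~\ref{thm:master}. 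The main technical obstacle is the outlier-concentration step: we need a non-asymptotic hypergeometric tail that respects the linear outlier budget $(1-\rho_s\beta)$ demanded by Thm.~\ref{thm:rpca-noise} at the submatrix scale and, after a union bound over blocks, still delivers the global failure rate $\bar n^{-\beta}$. The introduction of the auxiliary rate $\beta'$ and the window $4\beta_s - 3/\rho_s \le \beta' \le \beta_s$ in the corollary statement is precisely what reconciles these two constraints.
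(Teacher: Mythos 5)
Your proposal follows the same structure as the paper's proof: invoke the Coherence Master Theorem to reduce to per-block estimation errors and to certify $(\frac{r\mu^2}{1-\epsilon/2},r)$-coherence of each block, verify the rank and outlier-budget preconditions of Thm.~\ref{thm:rpca-noise} on each $m\times l$ block, bound the per-block outlier count via a without-replacement tail inequality, apply Thm.~\ref{thm:rpca-noise} at a rescaled rate, and aggregate errors and failure probabilities across the $t$ blocks (and the $\C$/$\R$ pair for \gnysmf). The paper uses Bernstein's hypergeometric inequality from Hoeffding rather than a Serfling bound, but these serve the identical purpose and give the same order of concentration.

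Two small bookkeeping points worth flagging. First, your probability accounting assigns a single $\bar n^{-\beta}$ budget to the hypergeometric concentration ``across all blocks,'' but each block has its own independent outlier count, so you need one such term per block; the paper folds it in as $\Parg{B_i\mid A(\C_{0,i})}\leq (c_p+1)\bar n^{-\beta}$ per block, giving the $t(c_p+1)\bar n^{-\beta}$ total. Second, you apply Thm.~\ref{thm:rpca-noise} directly at rate $\beta'$, whereas the paper actually introduces $\beta''\defeq\beta\log(\bar n)/\log(\max\{m,l\})\leq\beta'$ to account for the $m\times l$ block possibly having $l>m$ (in which case dimensions must be transposed before applying Thm.~\ref{thm:rpca-noise}, which requires $m\leq n$), and then uses $\bar m^{-\beta''}=\bar n^{-\beta}$ to convert back. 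Neither slip changes the substance of the argument; both are resolved routinely.
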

Note that Cor.~\ref{cor:fast-rpca-noise} places only very mild restrictions on
the number of columns and rows to be sampled.  Indeed, $l$ and $d$ need only
grow poly-logarithmically in the matrix dimensions to achieve estimation guarantees
comparable to those of the RMF base algorithm (Thm.~\ref{thm:rpca-noise}).
Hence, \fastmf can quickly provide near-optimal estimation in the noisy setting and 
exact recovery in the noiseless setting ($\Delta=0)$, even when entries are grossly corrupted.
The proof of  Cor.~\ref{cor:fast-rpca-noise} can be found in Sec.~\ref{sec:fast-rpca-noise}.

\section{Theoretical Analysis under Spikiness Conditions}
\label{sec:theory-spike}
\subsection{Spikiness Analysis of Randomized Approximation Algorithms}
\label{sec:rand_mat_analysis_spike}
We begin our spikiness analysis by characterizing the behavior of randomized 
approximation algorithms under standard spikiness assumptions.  
The derived properties will aid us in developing \fastmf estimation guarantees.
Hereafter,  $\epsilon\in (0,1]$ represents a prescribed error
tolerance, and $\delta,\delta'\in(0,1]$ designates a target failure probability.

\subsubsection{Conservation of Non-Spikiness}

Our first lemma establishes that the uniformly sampled submatrices of an
$\spikiness$-spiky matrix are themselves nearly $\spikiness$-spiky with high
probability.  This property will allow for accurate submatrix completion or
outlier removal using standard MC and RMF algorithms.  Its proof is given in
Sec.~\ref{sec:sub-spike}.

\begin{lemma}[Conservation of Non-Spikiness]
\label{lem:sub-spike}
Let $\L_C\in\reals^{m\times l}$ be a matrix of $l$ columns of $\L\in\reals^{m\times n}$ 
sampled uniformly without replacement.
If $l \geq \spikiness^4(\L)\log(1/\delta)/(2\epsilon^2),$ then 
$$\displaystyle\spikiness(\L_C) \leq \frac{\spikiness(\L)}{\sqrt{1-\epsilon}}$$
with probability at least $1-\delta$.
\end{lemma}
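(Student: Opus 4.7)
The plan is to reduce the claim to a standard concentration bound for the squared Frobenius norm of the sampled columns. Write $\spikiness(\L_C)^2 = ml\norm{\L_C}_\infty^2/\norm{\L_C}_F^2$. The numerator is easy: since $\L_C$ is a submatrix of $\L$, we immediately have $\norm{\L_C}_\infty \le \norm{\L}_\infty$. Thus
\[
\spikiness(\L_C)^2 \;\le\; \frac{ml\norm{\L}_\infty^2}{\norm{\L_C}_F^2},
\]
and the target bound $\spikiness(\L_C)^2 \le \spikiness(\L)^2/(1-\epsilon)$ is equivalent to showing $\norm{\L_C}_F^2 \ge (1-\epsilon)(l/n)\norm{\L}_F^2$. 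So the whole problem is a lower-tail concentration bound for $\norm{\L_C}_F^2$.

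Next, write $\norm{\L_C}_F^2 = \sum_{j=1}^l X_j$, where $X_j = \norm{\L^{(c_j)}}^2$ and $c_1,\dots,c_l$ are drawn uniformly without replacement from $\{1,\dots,n\}$. Each $X_j$ has mean $\norm{\L}_F^2/n$ and lies in $[0, m\norm{\L}_\infty^2]$, since each entry has magnitude at most $\norm{\L}_\infty$. I would then invoke Hoeffding's inequality for sampling without replacement (Hoeffding 1963 shows the sampling-without-replacement tail is dominated by the sampling-with-replacement tail via convex ordering), which gives
\[
\Pr\!\left(\textstyle\sum_{j=1}^l X_j \;\le\; l\norm{\L}_F^2/n - t\right) \;\le\; \exp\!\left(-\tfrac{2t^2}{l\,m^2\norm{\L}_\infty^4}\right).
\]
Setting $t = \epsilon\,l\norm{\L}_F^2/n$ and simplifying, the exponent becomes $-2\epsilon^2 l\,\norm{\L}_F^4/(n^2 m^2\norm{\L}_\infty^4) = -2\epsilon^2 l/\spikiness(\L)^4$. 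Requiring the failure probability to be at most $\delta$ yields exactly $l \ge \spikiness^4(\L)\log(1/\delta)/(2\epsilon^2)$.

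Combining the two pieces, with probability at least $1-\delta$, $\norm{\L_C}_F^2 \ge (1-\epsilon)(l/n)\norm{\L}_F^2$, whence
\[
\spikiness(\L_C)^2 \;\le\; \frac{ml\norm{\L}_\infty^2}{(1-\epsilon)(l/n)\norm{\L}_F^2} \;=\; \frac{\spikiness(\L)^2}{1-\epsilon},
\]
which is the stated bound.

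The only subtle point, and arguably the only ``obstacle,'' is the use of Hoeffding's inequality for sampling without replacement rather than with replacement; once one appeals to Hoeffding's convex-ordering lemma (or equivalently Serfling's refinement) this goes through verbatim. Everything else is just the uniform bound $\norm{\L_C}_\infty \le \norm{\L}_\infty$ and an algebraic rearrangement to match the definition of the spikiness ratio.
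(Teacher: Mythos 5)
Your proof is correct and follows essentially the same route as the paper's: both reduce the claim to a lower-tail bound on $\norm{\L_C}_F^2$ via the observation $\norm{\L_C}_\infty \le \norm{\L}_\infty$, compute $\Earg{\norm{\L_C}_F^2} = (l/n)\norm{\L}_F^2$, and apply Hoeffding's inequality for sampling without replacement with the per-column bound $\norm{\L^{(j)}}^2 \le m\norm{\L}_\infty^2$ to obtain the exponent $-2\epsilon^2 l/\spikiness^4(\L)$. Your algebra and the resulting sample-size requirement match the paper exactly.
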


\subsubsection{Column Projection Analysis}

Our first theorem asserts that, with high probability, column projection
produces an approximation nearly as good as a given rank-$r$ target by sampling
a number of columns proportional to the spikiness and $r\log (mn)$.  

\begin{theorem}[Column Projection under Non-Spikiness]
\label{cor:proj-main-spike}
Given a matrix $\M\in\reals^{m\times n}$ and a rank-$r$, $\spikiness$-spiky approximation
$\L\in\reals^{m\times n}$, choose 
$$l \geq 8r\spikiness^4\log(2mn/\delta)/\epsilon^2,$$
and let $\C\in\reals^{m\times l}$ be a matrix of $l$ columns of $\M$
sampled uniformly without replacement.  Then, $$\norm{\M - \Lproj}_F \leq
\norm{\M - \L}_F+\epsilon$$ with probability at least $1-\delta$,
whenever $\norm{\M}_\infty \leq \spikiness/\sqrt{mn}$. 
\end{theorem}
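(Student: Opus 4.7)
The plan is to reduce the stated bound to a randomized low-rank approximation guarantee for column sampling, and then apply matrix Bernstein concentration under the entrywise bound on $\M$. Since $\Lproj = \P_C\M$ is the orthogonal projection of $\M$ onto $\col(\C)$, it is Frobenius-optimal within that subspace. Combined with $\mathrm{rank}(\L)=r$ and the Eckart--Young theorem, which ensures $\norm{\M - \M_r}_F \leq \norm{\M - \L}_F$ for $\M_r$ the best rank-$r$ approximation of $\M$, it suffices to prove that, with probability at least $1-\delta$,
\begin{equation*}
\norm{\M - \Lproj}_F^2 \leq \norm{\M - \M_r}_F^2 + \epsilon^2,
\end{equation*}
since then the desired bound follows from $\sqrt{a^2+b^2}\leq a+b$ for $a,b\geq 0$.

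The main work lies in proving this low-rank approximation bound under \emph{uniform} column sampling, and the spikiness hypotheses enter precisely to control the column norms of $\M$. The assumption $\norm{\M}_\infty \leq \spikiness/\sqrt{mn}$ gives $\norm{\M^{(j)}}^2 \leq m \norm{\M}_\infty^2 \leq \spikiness^2/n$ for every $j$, so all columns are comparable in magnitude and uniform sampling effectively mimics the $\ell_2^2$-importance distribution $p_j \propto \norm{\M^{(j)}}^2$ used in the Drineas--Kannan--Mahoney style analyses. I would then invoke a matrix Bernstein inequality on the sum $\C\C^\top = \sum_{k=1}^l \M^{(S_k)}(\M^{(S_k)})^\top$, whose rank-one PSD summands have operator norm bounded by $\spikiness^2/n$. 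The sample size $l \geq 8r\spikiness^4\log(2mn/\delta)/\epsilon^2$ is precisely what Bernstein prescribes here: $r$ from comparing against the rank-$r$ truncation $\M_r$, $\spikiness^4$ from squaring the column-norm bound in Bernstein's variance proxy, $\log(2mn/\delta)$ from the dimensional union bound, and $\epsilon^{-2}$ from the relative-error tolerance. Translating the resulting spectral concentration of $\C\C^\top$ around $(l/n)\M\M^\top$ into the Frobenius-norm bound on $\P_C\M$ parallels the approach used in the proof of Thm.~\ref{thm:regress-main}, via the randomized regression machinery of~\citet{DrineasMaMu08} and the matrix concentration of~\citet{HsuKaZh12}.

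The main obstacle is that sharp column-sampling approximation bounds are typically stated for leverage-score or $\ell_2^2$-weighted sampling; the uniform-sampling regime is viable here only because the spikiness hypothesis forces column norms to be uniformly bounded, and the quantitative $\spikiness^4$ rate must be extracted carefully from the variance proxy rather than quoted directly. A secondary subtlety is that sampling is without replacement, so either a coupling argument or a without-replacement matrix Bernstein inequality (of the flavor used elsewhere in the paper) is needed to justify treating the summands as independent. Once matrix concentration is in place, passing from the spectral bound on $\C\C^\top - (l/n)\M\M^\top$ to the Frobenius-norm approximation error on $\P_C\M$ is routine, and the overall bound then follows by combining with the Eckart--Young reduction.
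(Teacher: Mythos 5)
You have the right high-level intuition --- spikiness bounds each column norm by $\norm{\M^{(j)}}^2 \leq m\norm{\M}_\infty^2 \leq \spikiness^2/n$, which is what makes uniform sampling viable --- and the Eckart--Young reduction to $\M_r$ is on target. But there is a genuine gap between your concentration step and the projection bound you need, and the reference you reach for to bridge it is the wrong one.

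The paper's proof hinges on a \emph{deterministic} inequality (Thm.~2 of \citet{DrineasKaMa06b}, reproduced as Thm.~\ref{thm:col-proj-additive}): for any matrix $\A$ and column submatrix $\G$,
\begin{equation*}
\norm{\A - \G_r\G_r^+\A}_F \;\leq\; \norm{\A - \A_r}_F + \sqrt{r}\,\norm{\A\A^\top - (n/l)\G\G^\top}_F.
\end{equation*}
This converts \emph{Frobenius-norm} control of the approximate Gram matrix directly into an additive projection bound, with the $\sqrt{r}$ prefactor dictating the $r$ in the sample complexity. You do not identify any such bridge; instead you call the step ``routine'' and point to the coherence machinery of Thm.~\ref{thm:regress-main}, \citet{DrineasMaMu08}, and \citet{HsuKaZh12}. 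Those results are designed for leverage-score/coherence sampling and give \emph{relative}-error guarantees conditional on $\V_l$ retaining full rank; they do not apply in this spikiness setting, where only an additive error is provable. Pointing to them is a genuine misdirection, not a minor citation slip.

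Relatedly, your proposal to use matrix Bernstein yields \emph{spectral}-norm concentration of $\C\C^\top - (l/n)\M\M^\top$, but the DKM inequality above needs \emph{Frobenius}-norm control; converting spectral to Frobenius would cost an extra $\sqrt{m}$ and would not reproduce the stated $l \geq 8r\spikiness^4\log(2mn/\delta)/\epsilon^2$. The paper instead uses the elementary entrywise Hoeffding-type bound of Lem.~2 of \citet{DrineasKaMa06a}: with $b = \norm{\M}_\infty \leq \spikiness/\sqrt{mn}$ and $k=n$, each entry of $\M\M^\top - (n/l)\C\C^\top$ is within $\frac{\spikiness^2}{m\sqrt{l}}\sqrt{8\log(2mn/\delta)}$ of zero with probability $1-\delta$, and summing over $m^2$ entries gives $\norm{\M\M^\top - (n/l)\C\C^\top}_F^2 \leq \epsilon^2/r$ at the stated $l$. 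Your squaring detour ($\norm{\M-\Lproj}_F^2 \leq \norm{\M-\M_r}_F^2 + \epsilon^2$) is also unnecessary --- the DKM inequality already delivers the additive bound in un-squared form. To repair the argument, replace the matrix-Bernstein/spectral plan and the Thm.~\ref{thm:regress-main} reference with the pair (Thm.~2 of \citet{DrineasKaMa06b}, Lem.~2 of \citet{DrineasKaMa06a}); the without-replacement concern is then handled exactly as you anticipated, via the Hoeffding reduction.
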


The proof of Thm.~\ref{cor:proj-main-spike} builds upon the randomized matrix multiplication work
of \citet{DrineasKaMa06a,DrineasKaMa06b} and will be given in Sec.~\ref{sec:proj-spike}.

\subsection{Base Algorithm Guarantee} 
The next result, a reformulation of \citet[Cor.~1]{NegahbanWa12}, is a prototypical example of a spikiness-based estimation guarantee for noisy MC.
Cor.~\ref{cor:mc-spike} bounds the estimation error of a convex optimization approach
to noisy matrix completion, under non-spikiness and uniform sampling assumptions.
\begin{corollary}[Noisy MC under Non-Spikiness~{\citep[Cor.~1]{NegahbanWa12}}]
\label{cor:mc-spike}
Suppose that $\L_0\in\reals^{m\times n}$ is $\spikiness$-spiky with rank $r$
and $\norm{\L_0}_F \leq 1$
and that $\Z_0\in\reals^{m\times n}$ has i.i.d.\ zero-mean, sub-exponential entries
with variance $\nu^2/mn$.
If, for an oversampling parameter $\beta > 0$, 
$$s\geq \spikiness^2\beta r(m+n)\log(m+n)$$ entries of $\M=\L_0+\Z_0$ are observed with locations $\obsset$
sampled uniformly with replacement, then any solution
$\hat{\L}$ of the problem
\begin{align}
\label{eqn:convex-mc-spike}
&\text{minimize}_{\L} \quad \frac{mn}{2s}\norm{\obsproj(\M-\L)}_F^2 + \lambda \norm{\L}_* \quad \text{subject\, to}\quad
\norm{\L}_\infty \leq \frac{\alpha}{\sqrt{mn}} \\ \notag
&\text{with}\quad  \lambda = 4\nu \sqrt{(m+n)\log(m+n)/s}
\end{align}
satisfies 
$$\norm{\L_0 - \hat{\L}}_F^2 \leq c_1\maxarg{\nu^2, 1} /\beta $$
with probability at least $1 - c_2\exp{-c_3\log(m+n)}$ for positive constants $c_1,c_2,$ and $c_3$.
\end{corollary}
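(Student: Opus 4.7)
The plan is to derive Cor.~\ref{cor:mc-spike} as a direct specialization of \citet[Cor.~1]{NegahbanWa12}, which provides a general deterministic-to-stochastic error bound for the constrained nuclear-norm estimator under non-spikiness.  Since our statement is explicitly flagged as a reformulation, the proof is essentially a bookkeeping exercise: I translate our normalization conventions ($\|\L_0\|_F \leq 1$, spikiness $\spikiness$, variance $\nu^2/(mn)$) into the variables used by \citet{NegahbanWa12} and then read off their conclusion.

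First I would fix the correspondence between quantities.  Our $\spikiness$-spikiness hypothesis and the bound $\|\L_0\|_F\leq 1$ together force $\|\L_0\|_\infty \leq \spikiness/\sqrt{mn}$, which supplies the $\alpha$ parameter of the feasible set in \eqref{eqn:convex-mc-spike}: we may take $\alpha = \spikiness$.  Similarly, the i.i.d.\ sub-exponential noise with variance $\nu^2/(mn)$ is exactly the noise model treated in their result (after rescaling), so their ``$\nu$'' plays the role of ours.  The choice $\lambda = 4\nu\sqrt{(m+n)\log(m+n)/s}$ matches the regularization parameter prescribed in their corollary, which is what makes their high-probability bound on the dual norm of the noise go through.

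Next I would verify the sample-size condition.  Their Cor.~1 requires $s$ to exceed a constant multiple of $r(m+n)\log(m+n)$ times the square of the spikiness ratio, modulated by the oversampling parameter $\beta$.  Our hypothesis $s \geq \spikiness^2 \beta r(m+n)\log(m+n)$ is precisely that.  Plugging these choices into their error bound yields $\|\L_0 - \hat\L\|_F^2 \leq c_1 \max(\nu^2,1)/\beta$ with probability at least $1 - c_2 \exp(-c_3 \log(m+n))$, for absolute constants $c_1,c_2,c_3$; this is exactly the conclusion we want.  Note that the $\max(\nu^2,1)$ factor arises because their analysis separates a stochastic term scaling with $\nu^2$ from a deterministic approximation term scaling with $1$ (reflecting the unit Frobenius-norm normalization of $\L_0$), and the bound is driven by whichever dominates.

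The main (and essentially only) obstacle is the careful matching of normalization conventions: \citet{NegahbanWa12} normalize the observation operator and the radius $\alpha$ differently from our statement, and one has to be meticulous to ensure that the $mn/(2s)$ scaling of the loss in \eqref{eqn:convex-mc-spike}, our spikiness definition $\spikiness(\L) = \sqrt{mn}\|\L\|_\infty/\|\L\|_F$, and their elementwise radius $\alpha/\sqrt{mn}$ all align so that the conclusion comes out in the clean form $c_1 \max(\nu^2,1)/\beta$.  Once those conversions are in place, no further probabilistic or optimization-theoretic argument is needed beyond what \citet[Cor.~1]{NegahbanWa12} already supplies.
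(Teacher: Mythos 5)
Your proposal is correct and matches the paper's own treatment: the paper does not prove Cor.~\ref{cor:mc-spike} but simply cites it as a reformulation of \citet[Cor.~1]{NegahbanWa12}, exactly as you do, with the only work being the notational translation (taking $\alpha = \spikiness$ from the spikiness and unit-Frobenius-norm hypotheses, matching the noise scaling $\nu^2/mn$, the regularization parameter $\lambda$, and the sample-size condition). No further argument is needed beyond the bookkeeping you describe.
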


\subsection{Spikiness Master Theorem} 

We now show that the same spikiness conditions that allow for accurate MC 
also imply high-probability estimation guarantees for \fastmf.  To make this precise, we
let $\M = \L_0 + \Z_0 \in \reals^{m \times n}$, where  $\L_0$ is
$\spikiness$-spiky with rank $r$ and 
that $\Z_0\in\reals^{m\times n}$ has i.i.d.\ zero-mean, sub-exponential entries
with variance $\nu^2/mn$.  
We further fix any $\epsilon,\delta\in(0,1]$.  
Then, our Thm.~\ref{thm:master-spike} provides a generic bound on estimation error for \fastmf when used
in combination with an arbitrary base algorithm.  The proof, which builds upon the results of
Sec.~\ref{sec:rand_mat_analysis_spike}, is deferred to Sec.~\ref{sec:proof-master_spike}.

\begin{theorem}[Spikiness Master Theorem]
\label{thm:master-spike}
Choose $t=n/l$, $l\geq 13r\spikiness^4\log(4mn/\delta)/\epsilon^2$, and
$p \geq 242\ r\log(14/\delta)/\epsilon^2$.
Under the notation of Algorithms ~$\ref{alg:fastmf-proj}$ and~$\ref{alg:fastmf-rp}$,
let $\{ \C_{0,1}, \cdots, \C_{0,t} \}$ be the corresponding partition
of $\L_0$. Then, with probability at least $1 - \delta$, \projmf and \rpmf guarantee that 
$\C_{0,i}$ is $(\sqrt{1.25}\alpha)$-spiky for all $i$ and that
\begin{align*}
&\norm{\L_0 - \Lprojmf}_F 
	\leq 2\sqrt{\textsum_{i=1}^t \norm{\C_{0,i} - \hat\C_i}_F^2} + \epsilon \quad\text{and} \\
&\norm{\L_0 - \Lrpmf}_F 
	\leq (2+\epsilon)\sqrt{\textsum_{i=1}^t \norm{\C_{0,i} - \hat\C_i}_F^2} 
\end{align*}
whenever $\norm{\hat\C_i}_\infty \leq \sqrt{1.25}\alpha/\sqrt{ml}$ for all $i$.
\end{theorem}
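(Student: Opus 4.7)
The plan is to establish the three conclusions in turn: spikiness conservation for the partition submatrices, the \projmf error bound, and the \rpmf error bound. Each relies on a corresponding result from Sec.~\ref{sec:rand_mat_analysis_spike} and Sec.~\ref{sec:rand_mat_analysis}, combined by triangle inequalities and a final union bound.

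First, I would apply Lem.~\ref{lem:sub-spike} to each submatrix $\C_{0,i}$ of $\L_0$ with tolerance parameter set to $1/5$, so that $1/\sqrt{1-1/5} = \sqrt{1.25}$ emerges in the spikiness bound. The lemma's required sample size then reads $l \geq \spikiness^4(\L_0)\log(1/\delta_{sp})/(2\cdot(1/5)^2) = 12.5\,\alpha^4\log(1/\delta_{sp})$; choosing $\delta_{sp}=\delta/(3t)$ and taking a union bound over the $t\leq n$ submatrices yields that every $\C_{0,i}$ is $(\sqrt{1.25}\alpha)$-spiky with probability at least $1-\delta/3$. The master theorem's choice $l\geq 13r\alpha^4\log(4mn/\delta)/\epsilon^2$ comfortably dominates this condition since $r\geq 1$, $\epsilon\leq 1$, and $4mn/\delta\geq 3t/\delta$.

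Second, for \projmf, I would write $\hat\L=[\hat\C_1,\ldots,\hat\C_t]$ so that $\Lprojmf=\P_{\hat\C_1}\hat\L$ and decompose
$$\norm{\L_0-\Lprojmf}_F \leq \norm{\L_0-\hat\L}_F + \norm{\hat\L-\P_{\hat\C_1}\hat\L}_F.$$
The first term equals $\sqrt{\sum_i\norm{\C_{0,i}-\hat\C_i}_F^2}$ by construction. For the second term, random partitioning of the columns of $\L_0$ implies that $\hat\C_1$ constitutes a uniform $l$-column sample from $\hat\L$, so I would invoke Thm.~\ref{cor:proj-main-spike} with $\M=\hat\L$ and target approximation $\L=\L_0$, choosing the effective spikiness parameter so that the theorem's hypothesis $\norm{\M}_\infty\leq\spikiness/\sqrt{mn}$ is supplied by the master-theorem assumption $\norm{\hat\C_i}_\infty\leq\sqrt{1.25}\alpha/\sqrt{ml}$ (which forces $\norm{\hat\L}_\infty$ to the same bound). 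After checking that the master theorem's $l$ meets Thm.~\ref{cor:proj-main-spike}'s sample-size requirement at failure level $\delta/3$, I obtain $\norm{\hat\L-\P_{\hat\C_1}\hat\L}_F \leq \norm{\hat\L-\L_0}_F + \epsilon$, which combines with the triangle step to give the claimed $2\sqrt{\cdot}+\epsilon$ bound.

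Third, for \rpmf, I would apply the first (projection) inequality of Cor.~\ref{cor:rp-main} with $\M=\hat\L$, $\L=\L_0$, and the specified oversampling $p\geq 242\,r\log(14/\delta)/\epsilon^2$, which is the corollary's requirement at failure level $\delta/3$. Because that inequality carries no spikiness or coherence prerequisite, no additional precondition checking is needed. It yields $\norm{\hat\L-\Lrpmf}_F\leq(1+\epsilon)\norm{\hat\L-\L_0}_F$, so the triangle inequality gives $\norm{\L_0-\Lrpmf}_F\leq(2+\epsilon)\sqrt{\sum_i\norm{\C_{0,i}-\hat\C_i}_F^2}$. A union bound over the three events in the three stages then certifies the conclusion with probability at least $1-\delta$.

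The main obstacle will be the careful accounting in the second stage: reconciling the concatenated-matrix infinity-norm hypothesis of Thm.~\ref{cor:proj-main-spike} with the per-submatrix bound $\norm{\hat\C_i}_\infty\leq\sqrt{1.25}\alpha/\sqrt{ml}$ supplied by the master theorem's preconditions. One must select the effective spikiness parameter entering Thm.~\ref{cor:proj-main-spike} (or reprise its internal randomized matrix-multiplication argument specialized to $\hat\L$) so that the precondition $\norm{\hat\L}_\infty\leq\spikiness/\sqrt{mn}$ and the sample-size requirement $l\geq 8r\spikiness^4\log(2mn/\delta)/\epsilon^2$ remain simultaneously consistent with the master theorem's stated choice of $l$ and with the additive $+\epsilon$ slack in the final \projmf bound.
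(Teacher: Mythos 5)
Your plan follows the paper's own proof essentially step for step: conservation of non-spikiness via Lem.~\ref{lem:sub-spike} for each $\C_{0,i}$, the triangle-inequality decomposition $\norm{\L_0-\Lprojmf}_F \le \norm{\L_0-\tilde\L}_F + \norm{\tilde\L-\Lprojmf}_F$ with Thm.~\ref{cor:proj-main-spike} applied to $\M=\tilde\L$, $\L=\L_0$ for the \projmf term, Cor.~\ref{cor:rp-main} for the \rpmf term, and a final union bound. Two minor deviations: you take the tolerance in Lem.~\ref{lem:sub-spike} to be $1/5$ (producing $\sqrt{1.25}$ directly), whereas the paper uses $\tilde\epsilon=\epsilon/(5\sqrt{r})$ to get the sharper $\sqrt{1+\epsilon/(4\sqrt{r})}$ factor mentioned in the theorem's remark --- your choice is adequate for the stated $\sqrt{1.25}$ conclusion and demands less of $l$. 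You also budget the failure probability as $\delta/3$ three ways, whereas the paper uses $\delta/(2n)$ per submatrix plus $\delta/2$ for the column-projection event; your split nominally asks Thm.~\ref{cor:proj-main-spike} for $l\ge 12.5r\spikiness^4\log(6mn/\delta)/\epsilon^2$ versus the stated $13r\spikiness^4\log(4mn/\delta)/\epsilon^2$, which is not dominated for small $mn/\delta$, so you should mirror the paper's $\delta/2$ allocation for the column-projection event to avoid a constant shortfall.

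The "main obstacle" you flag is real, and you are being more careful than the paper here: the hypothesis $\norm{\hat\C_i}_\infty \leq \sqrt{1.25}\spikiness/\sqrt{ml}$ forces only $\norm{\tilde\L}_\infty \leq \sqrt{1.25}\spikiness/\sqrt{ml}$, which for $l < n$ is weaker than the bound $\sqrt{1.25}\spikiness/\sqrt{mn}$ needed to invoke Thm.~\ref{cor:proj-main-spike} with effective spikiness parameter $\sqrt{1.25}\spikiness$. If one instead takes the effective spikiness to be $\sqrt{1.25}\spikiness\sqrt{n/l}$, Thm.~\ref{cor:proj-main-spike}'s sample-size requirement acquires an additional factor of $(n/l)^2$ and is no longer implied by the master theorem's choice of $l$. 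The paper's proof silently sidesteps this by asserting the assumption with denominator $\sqrt{mn}$ rather than the $\sqrt{ml}$ that appears in the theorem statement (and in the downstream application in Cor.~\ref{cor:fast-mc-spike}, where the base algorithm's constraint for the $m\times l$ subproblem naturally yields $\sqrt{ml}$). So your proposal contains the same unresolved tension as the paper's proof, and you have correctly localized an inconsistency: the theorem hypothesis should either read $\sqrt{mn}$ in the denominator, or the estimates $\hat\C_i$ must be explicitly clipped so that $\norm{\hat\C_i}_\infty \leq \sqrt{1.25}\spikiness/\sqrt{mn}$ (which is always feasible since $\norm{\C_{0,i}}_\infty \leq \norm{\L_0}_\infty \leq \spikiness/\sqrt{mn}$).
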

\begin{remark}
	The spikiness factor of $\sqrt{1.25}$ can be replaced with the smaller term $\sqrt{1+\epsilon/(4\sqrt{r})}$.
\end{remark}

When a target matrix is non-spiky, Thm.~\ref{thm:master-spike} asserts that, with high probability, the estimation error of \fastmf 
is not much larger than the error sustained by the base algorithm on each subproblem.
Thm.~\ref{thm:master-spike} further bounds the spikiness of each submatrix with high probability, 
and hence we can use any spikiness-based matrix estimation guarantee to control the estimation error on each subproblem.
The next section demonstrates how Thm.~\ref{thm:master-spike} can be applied to
derive specific \fastmf estimation guarantees for noisy MC.

\subsection{Consequences for Noisy MC}
\label{sec:cons_mc_spike}
Our corollary of Thm.~\ref{thm:master-spike} shows that \fastmf retains the
high-probability estimation guarantees of a standard MC solver while operating on
matrices of much smaller dimension.  Suppose that a base MC algorithm solves
the convex optimization problem of \eqref{eqn:convex-mc-spike}.
Then, Cor.~\ref{cor:fast-mc-spike} follows from the Spikiness Master Theorem (Thm.~\ref{thm:master-spike}) and the base algorithm guarantee of~Cor.~\ref{cor:mc-spike}.
\begin{corollary}[DFC-MC under Non-Spikiness]
\label{cor:fast-mc-spike}
Suppose that $\L_0\in\reals^{m\times n}$ is $\spikiness$-spiky with rank $r$
and $\norm{\L_0}_F \leq 1$
and that $\Z_0\in\reals^{m\times n}$ has i.i.d.\ zero-mean, sub-exponential entries
with variance $\nu^2/mn$.
Let  $c_1,c_2,$ and $c_3$ be positive constants as in Cor.~\ref{cor:mc-spike}.
If $s$ entries of $\M=\L_0+\Z_0$ are observed with locations $\obsset$
sampled uniformly with replacement, 
and the base algorithm solves the optimization problem of \eqref{eqn:convex-mc-spike},
then it suffices to choose $t=n/l$,
\begin{align*}
	l \geq
	13(c_3+1)\sqrt{\frac{(m+n)\log(m+n)\beta}{s}}nr\spikiness^4\log(4mn)/\epsilon^2,
\end{align*}
and $p \geq 242\ r\log(14(m + l)^{c_3})/\epsilon^2$ to achieve 
\begin{align*}
& \norm{\L_0 - \Lprojmf}_F \leq 2\sqrt{c_1\maxarg{({l/}{n})\nu^2, 1}/\beta} + \epsilon \quad \text{and} \\
& \norm{\L_0 - \Lrpmf}_F \leq (2 + \epsilon)\sqrt{c_1\maxarg{({l/}{n})\nu^2, 1}/\beta} 
\end{align*}
with respective probability at least
$1 - (t+1)(c_2+1)\exp{-c_3\log(m+l)}$,
if the base algorithm of \eqref{eqn:convex-mc-spike} is used
with $ \lambda = 4\nu \sqrt{(m+n)\log(m+n)/s}$.
\end{corollary}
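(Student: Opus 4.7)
The plan is to combine the Spikiness Master Theorem (Thm.~\ref{thm:master-spike}) with the Negahban-Wainwright base guarantee (Cor.~\ref{cor:mc-spike}) applied to each subproblem. Throughout, I would use $\alpha$ for the target spikiness and $\spikiness_{\text{sub}} \defeq \sqrt{1.25}\alpha$ for the spikiness enjoyed by each submatrix on the master-theorem event. The base algorithm is run on each subproblem with the $\ell_\infty$ constraint of \eqref{eqn:convex-mc-spike} set to $\spikiness_{\text{sub}}/\sqrt{ml}$, which automatically supplies the constraint $\norm{\hat\C_i}_\infty\leq\sqrt{1.25}\alpha/\sqrt{ml}$ required to invoke Thm.~\ref{thm:master-spike}.

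First I would verify that the prescribed $l$ and $p$ satisfy the hypotheses of Thm.~\ref{thm:master-spike} for a failure probability of order $\exp(-c_3\log(m+l))$: the $13r\alpha^4\log(4mn)/\epsilon^2$ contribution is absorbed by the master-theorem term in the stated lower bound on $l$, and the $(c_3+1)$ factor absorbs $\log(1/\delta)$. On the master-theorem event, each $\C_{0,i}$ is $\spikiness_{\text{sub}}$-spiky and has rank at most $r$ and $\norm{\C_{0,i}}_F\le \norm{\L_0}_F\le 1$. The noise restricted to $\C_{0,i}$ has i.i.d.\ sub-exponential entries of variance $\nu^2/(mn)=(\nu\sqrt{l/n})^2/(ml)$, so the effective noise parameter for Cor.~\ref{cor:mc-spike} on the subproblem is $\nu'=\nu\sqrt{l/n}$.

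Next I would control the number of observations falling in each submatrix. Since observation locations are drawn uniformly with replacement from $[m]\times[n]$, the count $s_i$ landing in column block $i$ is $\mathrm{Binomial}(s,l/n)$ with mean $sl/n$, and conditional on $s_i$ the in-block locations are uniform with replacement. A standard Chernoff bound gives $s_i\geq sl/(2n)$ for all $i$ with failure probability of the same polynomial order, and the prescribed lower bound
$$l \geq 13(c_3+1)\sqrt{(m+n)\log(m+n)\beta/s}\,nr\alpha^4\log(4mn)/\epsilon^2$$
was engineered precisely so that $sl/(2n)\geq \spikiness_{\text{sub}}^2\,\beta\, r(m+l)\log(m+l)$, meeting the sample requirement of Cor.~\ref{cor:mc-spike} on each $m\times l$ subproblem. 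Applying that corollary then gives, per subproblem,
$$\norm{\C_{0,i}-\hat\C_i}_F^2 \leq c_1\maxarg{\nu'^2,\,\norm{\C_{0,i}}_F^2}/\beta
\leq c_1\maxarg{(l/n)\nu^2,\,\norm{\C_{0,i}}_F^2}/\beta$$
with probability at least $1-c_2\exp(-c_3\log(m+l))$.

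Summing over the $t=n/l$ subproblems, using $\sum_i\norm{\C_{0,i}}_F^2=\norm{\L_0}_F^2\leq 1$ together with the trivial inequality $\sum_i\max(a,b_i)\leq ta+\sum_i b_i$, bounds $\sum_i\norm{\C_{0,i}-\hat\C_i}_F^2$ by a constant multiple of $c_1\max((l/n)\nu^2,1)/\beta$, and plugging this into Thm.~\ref{thm:master-spike} delivers the stated \projmf bound with its additive $\epsilon$ and the \rpmf bound with its multiplicative $(2+\epsilon)$. A final union bound over the master-theorem event, the Chernoff event for the $s_i$'s, and the $t$ base-algorithm events yields the claimed probability $1-(t+1)(c_2+1)\exp(-c_3\log(m+l))$. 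The main technical obstacle is the bookkeeping in Step~3: simultaneously tracking the spikiness rescaling from $\alpha$ to $\sqrt{1.25}\alpha$, the noise rescaling from $\nu$ to $\nu\sqrt{l/n}$, the Frobenius-norm budget distributed across submatrices, and the sample-count concentration, while ensuring that the prescribed $l$ satisfies both the master-theorem requirement and the (dimension-dependent) base sample requirement on every subproblem.
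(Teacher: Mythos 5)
Your high-level plan matches the paper's: invoke the Spikiness Master Theorem, apply Cor.~\ref{cor:mc-spike} to each $m\times l$ subproblem with rescaled noise $\nu'=\nu\sqrt{l/n}$, control the per-block sample counts $s_i$ by concentration, and union bound. But there is a substantive gap in how you invoke the base guarantee, and it breaks the conclusion.

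The paper applies Cor.~\ref{cor:mc-spike} to each subproblem with an \emph{amplified} oversampling parameter $\beta'=\beta n/l$, not $\beta'=\beta$. The sample requirement on an $m\times l$ block then reads $s_i \geq 1.25\alpha^2 (\beta n/l)\, r(m+l)\log(m+l)$ --- this is the event $D_i^c$ in the paper --- and the corresponding per-block error is
$\norm{\C_{0,i}-\hat\C_i}_F^2 \leq c_1\max(\nu'^2,1)/\beta' = (l/n)\,c_1\max((l/n)\nu^2,1)/\beta.$
Summing over $t=n/l$ blocks then yields \emph{exactly} $c_1\max((l/n)\nu^2,1)/\beta$, which is what appears under the square root in the statement. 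You instead take $\beta'=\beta$, so your threshold is $s_i\gtrsim 1.25\alpha^2\beta\, r(m+l)\log(m+l)$ --- a factor of $n/l$ too low --- and your per-block error is $c_1\max((l/n)\nu^2,\cdot)/\beta$, which is a factor of $n/l$ too large. Your attempted rescue via $\sum_i\max(a,b_i)\leq ta+\sum_i b_i$ gives $t\cdot(l/n)\nu^2 + \sum_i\norm{\C_{0,i}}_F^2 = \nu^2+1$, which is \emph{not} a constant multiple of $\max((l/n)\nu^2,1)$ when $(l/n)\nu^2\gg 1$; you would lose a factor of $n/l$ in the noise-dominated regime, and the corollary's bound (with no slack constant) would not follow. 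Relatedly, your invocation of Cor.~\ref{cor:mc-spike} with $\norm{\C_{0,i}}_F^2$ in place of $1$ is not what the paper's restatement provides --- the stated form of Cor.~\ref{cor:mc-spike} requires $\norm{\L_0}_F\leq 1$ and bounds the error by $c_1\max(\nu^2,1)/\beta$, and the paper simply uses $\norm{\C_{0,i}}_F\leq\norm{\L_0}_F\leq 1$ to verify the precondition.

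This also explains the specific form of the lower bound on $l$: the $\sqrt{(m+n)\log(m+n)\beta/s}\cdot n$ factor is engineered so that $\mE[s_i]=sl/n$ clears the \emph{amplified} threshold $1.25\alpha^2(\beta n/l)r(m+l)\log(m+l)$ with room to spare for the deviation term $s\sqrt{c_3\log(m+l)/(2s)}$. If only the un-amplified threshold mattered, a much smaller $l$ would suffice, which should have signaled that more is being extracted from the sample budget than your accounting allows. To repair the proof, replace $\beta'=\beta$ with $\beta'=\beta n/l$, define $B_i$ as $\norm{\C_{0,i}-\hat\C_i}_F^2 > (l/n)c_1\max((l/n)\nu^2,1)/\beta$ and $D_i$ with the amplified threshold, and then the sum over $i$ telescopes to exactly the stated bound without needing any Frobenius-budget trick.
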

\begin{remark}
Cor.~\ref{cor:fast-mc-spike} allows for the fraction of columns sampled to decrease as the number of revealed entries, $s$, increases.
Only a vanishingly small fraction of columns ($l/n\to 0$) need be sampled whenever 
$s=\omega((m+n)\log^3(m+n))$.
\end{remark}

To understand the conclusions of Cor.~\ref{cor:fast-mc-spike}, consider the base 
algorithm of Cor.~\ref{cor:mc-spike}, which, when applied to $\M$, recovers an estimate
$\hat{\L}$ satisfying $\norm{\L_0 - \hat{\L}}_F \leq \sqrt{c_1\maxarg{\nu^2, 1} /\beta}$ with
high probability.  
Cor.~\ref{cor:fast-mc-noise} asserts that, with appropriately reduced probability, 
\rpmf exhibits the same estimation error scaled by an
adjustable factor of $2+\epsilon$, while \projmf exhibits at most twice this error plus an 
adjustable factor of $\epsilon$.
Hence, \fastmf can quickly provide near-optimal estimation for non-spiky matrices as well as incoherent matrices, even when entries are missing.  The proof of  Cor.~\ref{cor:fast-mc-spike} can be found in Sec.~\ref{sec:fast-mc-spike}.

\section{Experimental Evaluation}
\label{sec:experiments}

We now explore the accuracy and speed-up of \fastmf on a variety of simulated
and real-world datasets.  
We use the Accelerated Proximal Gradient (APG) algorithm of \citet{TohYu10} as
our base noisy MC algorithm\footnote{Our experiments with the Augmented Lagrange Multiplier (ALM) algorithm of
\citet{LinChWuMa09} as a base algorithm (not reported) yield comparable relative speedups and performance for \fastmf.} and
the APG algorithm of \citet{LinGaWrWuChMa09} as our base noisy RMF algorithm.
We perform all experiments on an x86-64 architecture using a single 2.60 Ghz
core and 30GB of main memory. We use the default parameter settings suggested
by \citet{TohYu10} and \citet{LinGaWrWuChMa09}, and measure estimation error
via root mean square error (RMSE).  To achieve a fair running time comparison,
we execute each subproblem in the F step of \fastmf in a serial fashion on the same machine
using a single core.  
Since, in practice, each of these subproblems would be executed in parallel, 
the \emph{parallel running time} of \fastmf is calculated as the time to complete the D and C steps of DFC
plus the running time of the longest running subproblem in the F step.
We compare \fastmf to two baseline methods: the base algorithm APG applied to the full matrix $\M$ and \partmf, which 
carries out the D and F steps of \projmf but omits the final C step (projection).
\subsection{Simulations}

For our simulations, we focused on square matrices ($m=n)$ and generated random
low-rank and sparse decompositions, similar to the schemes used in related
work \citep{CandesLiMaWr09, KeshavanMoOh10b,ZhouLiWrCaMa10}.  We created
$\L_0\in\reals^{m\times m}$ as a random product, $\A\B^\top$, where $\A$ and
$\B$ are $m\times r$ matrices with independent $\Gsn(0,\sqrt{1/r})$ entries
such that each entry of $\L_0$ has unit variance.  $\Z_0$ contained independent
$\Gsn(0,0.1)$ entries. In the MC setting, $s$ entries of $\L_0 + \Z_0$ were
revealed uniformly at random. In the RMF setting, the support of $\S_0$ was
generated uniformly at random, and the $s$ corrupted entries took values in
$[0,1]$ with uniform probability.  For each algorithm, we report error between
$\L_0$ and the estimated low-rank matrix, and all reported results are averages
over ten trials. 

\begin{figure*}[ht!]
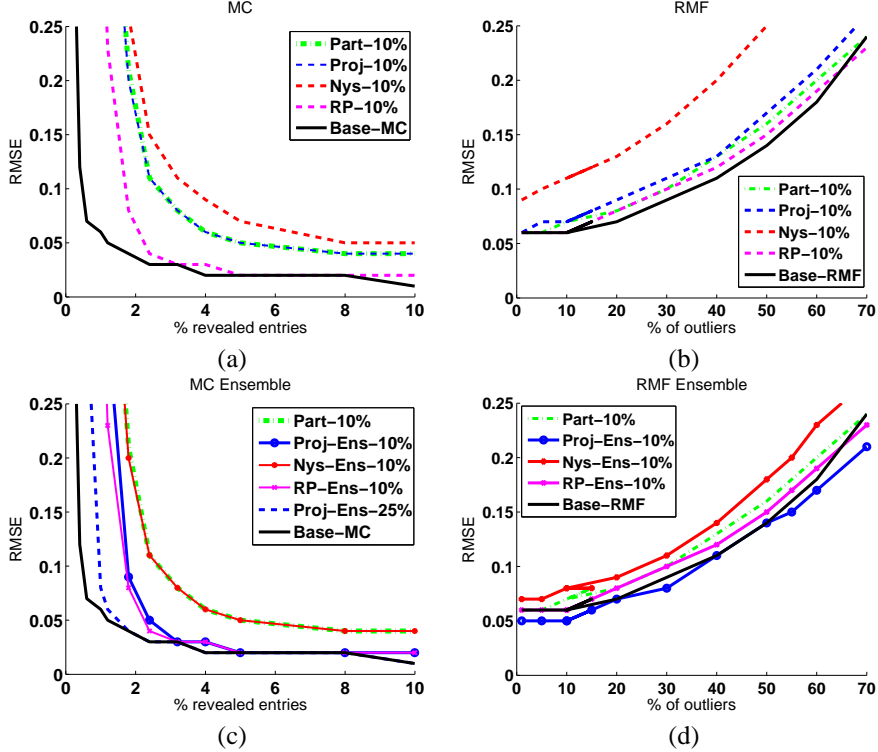

\begin{center}
\begin{tabular} {@{}c@{}c@{}}
\ipsfig{.38}{figure=exper1_noisy_mc_noens.eps} &  
\ipsfig{.38}{figure=exper1_noisy_rmf_noens.eps} \\ 
(a) & (b) \\
\ipsfig{.38}{figure=exper1_noisy_mc_ens.eps} &  
\ipsfig{.38}{figure=exper1_noisy_rmf_ens.eps} \\ 
(c) & (d) \\
\end{tabular}
\end{center}
\vspace{-3mm}
\caption{Recovery error of \fastmf relative to base algorithms.}
\label{fig:exper_sim1}
\end{figure*} 

We first explored the estimation error of \fastmf as a function of $s$, using
($m=10$K, $r=10)$ with varying observation sparsity for MC and ($m=1$K, $r=10)$
with a varying percentage of outliers for RMF.  The results are summarized in
Figure~\ref{fig:exper_sim1}.
In both MC and RMF, the gaps in estimation between APG and \fastmf
are small when sampling only 10\% of rows and columns.  Moreover, of the
standard \fastmf algorithms, \rpmf performs the best, as shown in
Figures~\ref{fig:exper_sim1}(a) and (b). Ensembling improves the performance
of \gnysmf and \projmf, as shown in Figures~\ref{fig:exper_sim1}(c) and (d),
and \projmfens in particular consistently outperforms \partmf and \gnysmfens,
slightly outperforms \rpmf, and matches the performance of APG for most
settings of $s$.  In practice we observe that $\Lrp$ equals the optimal
(with respect to the spectral or Frobenius norm) rank-$k$ approximation of
$[\hat \C_1,\ldots,\hat \C_t]$, and thus the performance of \rpmf consistently
matches that of \rpmfens.  We therefore omit the \rpmfens results in the remainder this section.

We next explored the speed-up of \fastmf as a function of matrix size. For MC,
we revealed $4\%$ of the matrix entries and set $r = 0.001 \cdot m$, while for
RMF we fixed the percentage of outliers to $10\%$ and set $r = 0.01 \cdot m$.
We sampled $10\%$ of rows and columns and observed that estimation errors were
comparable to the errors presented in Figure~\ref{fig:exper_sim1} for similar
settings of $s$; in particular, at all values of $n$ for both MC and RMF, the
errors of APG and \projmfens were nearly identical.  Our timing results,
presented in Figure \ref{fig:exper_sim2}, illustrate a near-linear speed-up for
MC and a superlinear speed-up for RMF across varying matrix sizes.
Note that the timing curves of the \fastmf algorithms and \partmf all overlap,
a fact that highlights the minimal computational cost of the
final matrix approximation step.

\begin{figure*}[ht!]
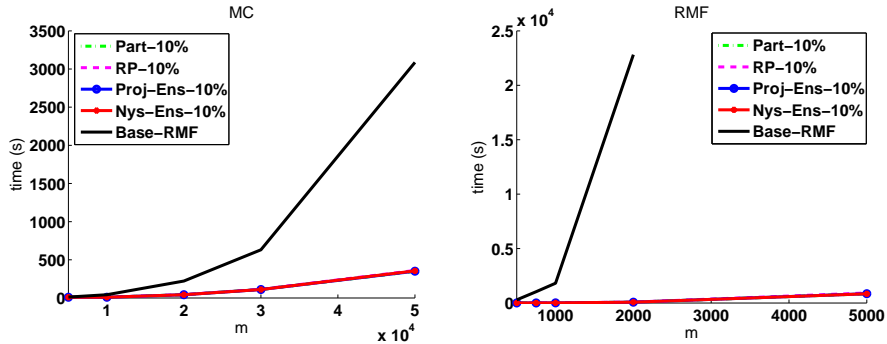

\begin{center}
\begin{tabular} {@{}c@{}c@{}}
\ipsfig{.38}{figure=exper2_noisy_mc_journal.eps} &  
\ipsfig{.38}{figure=exper2_noisy_rmf_journal.eps} \\ 
\end{tabular}
\end{center}
\vspace{-4mm}
\caption{Speed-up of \fastmf relative to base algorithms.}
\vspace{-5mm}
\label{fig:exper_sim2}
\end{figure*} 

\subsection{Collaborative Filtering}
\label{sec:cf}
Collaborative filtering for recommender systems is one prevalent real-world
application of noisy matrix completion. A collaborative filtering dataset can
be interpreted as the incomplete observation of a ratings matrix with columns
corresponding to users and rows corresponding to items. The goal is to infer
the unobserved entries of this ratings matrix. We evaluate \fastmf on two of
the largest publicly available collaborative filtering datasets: MovieLens
10M\footnote{\url{http://www.grouplens.org/}} ($m = 4$K, $n=6$K, $s>10$M) and
the Netflix Prize dataset\footnote{\url{http://www.netflixprize.com/}} ($m =
18$K, $n=480$K, $s>100$M).  To generate test sets drawn from the training
distribution, for each dataset, we aggregated all available rating data into a
single training set and withheld test entries uniformly at random, while
ensuring that at least one training observation remained in each row and
column.  The algorithms were then run on the remaining training portions and
evaluated on the test portions of each split.  The results, averaged over three
train-test splits, are summarized in Table~\ref{tab:cf}. Notably, \projmf,
\projmfens, \gnysmfens, and \rpmf all outperform \partmf, and \projmfens performs
comparably to APG while providing a nearly linear parallel time speed-up.
Similar to the simulation results presented in Figure~\ref{fig:exper_sim1},
\rpmf performs the best of the standard \fastmf algorithms, though \projmfens
slightly outperforms \rpmf.  Moreover, the poorer performance of \gnysmf can be
in part explained by the asymmetry of these problems.  Since these matrices
have many more columns than rows, MF on column submatrices is inherently easier
than MF on row submatrices, and for \gnysmf, we observe that $\hat \C$ is an
accurate estimate while $\hat \R$ is not.  
\begin{table*}[ht!]
\ifdefined\jacmformat
\tbl{Performance of \fastmf relative to base algorithm APG on collaborative filtering tasks.\label{tab:cf}}{
\else
	\caption{Performance of \fastmf relative to base algorithm APG on collaborative filtering tasks.}
	\label{tab:cf}
  \begin{center}
\fi
  \begin{tabular}{lcccc}
    \hline    
		\multirow{2}{*}{\bf Method }&\multicolumn{2}{c}{{\bf MovieLens 10M}}
    &\multicolumn{2}{c}{{\bf Netflix}} \\
    & {\small\bf RMSE} & {\small\bf Time} & {\small\bf RMSE} & {\small\bf Time} \\
    \hline
		\\[-.3cm]
		\textbf{Base algorithm (APG)} & \textbf{0.8005} & \textbf{552.3s} & \textbf{0.8433}	& \textbf{4775.4s} \\
		\\[-.2cm]
		\partmf-25\% & 0.8146 & 146.2s & 0.8451 & 1274.6s \\
		\partmf-10\% & 0.8461 & 56.0s & 0.8491 & 548.0s \\
		\\[-.2cm]
		\gnysmf-25\% & 0.8449 & 141.9s & 0.8832 & 1541.2s \\
		\gnysmf-10\% & 0.8776 & 82.5s & 0.9228 & 797.4s \\
		\\[-.2cm]
    \gnysmfens-25\% & 0.8085 & 153.5s & 0.8486 & 1661.2s \\
    \gnysmfens-10\% & 0.8328 & 96.2s & 0.8613 & 909.8s \\		
    \\[-.2cm]
		\projmf-25\% & 0.8061 & 146.3s & 0.8436 & 1274.8s \\
		\projmf-10\% & 0.8270 & 56.0s & 0.8486 & 548.1s \\
		\\[-.2cm]
		\textbf{\projmfens-25\%} & \textbf{0.7944} & \textbf{146.3s} & \textbf{0.8411} & \textbf{1274.8s} \\
		\textbf{\projmfens-10\%} & \textbf{0.8117} & \textbf{56.0s} & \textbf{0.8434} & \textbf{548.1s} \\
    \\[-.2cm]
		\rpmf-25\% & 0.8027 & 147.4s & 0.8438 & 1283.6s \\
		\rpmf-10\% & 0.8074 & 56.2s & 0.8448 & 550.1s \\
		\hline
	\end{tabular}
\ifdefined\jacmformat
	 }
\else
	\end{center}
\fi
\end{table*}

\subsection{Background Modeling in Computer Vision}
Background modeling has important practical ramifications for detecting
activity in surveillance video.  This problem can be framed as an application
of noisy RMF, where each video frame is a column of some  matrix ($\M)$, the
background model is low-rank ($\L_0)$, and moving objects and background
variations, e.g., changes in illumination, are outliers ($\S_0)$.  We evaluate
\fastmf on two videos: `Hall' ($200$ frames of size $176\times 144)$ contains
significant foreground variation and was studied by \citet{CandesLiMaWr09},
while `Lobby' ($1546$ frames of size $168 \times 120)$ includes many changes in
illumination (a smaller video with $250$ frames was studied by
\citet{CandesLiMaWr09}).  We focused on \projmfens, due to its superior
performance in previous experiments, and measured the RMSE between the
background model estimated by \fastmf and that of APG.  On both videos,
\projmfens estimated nearly the same background model as the full APG algorithm
in a small fraction of the time.  On `Hall,' the \projmfens-5\% and
\projmfens-0.5\% models exhibited RMSEs of $0.564$ and $1.55$, quite small
given pixels with $256$ intensity values.  The associated running time was reduced
from $342.5$s for APG to real-time ($5.2$s for a $13$s video) for
\projmfens-0.5\%. Snapshots of the results are presented in
Figure~\ref{fig:rmf_hall}.  On `Lobby,' the RMSE of \projmfens-4\% was $0.64$,
and the speed-up over APG was more than 20X, i.e., the running time reduced from
$16557$s to $792$s.   
\begin{figure*}[t!]
\begin{center}
\begin{tabular} {@{}c@{}|c@{}c@{}c@{}}
\ipsfig{.22}{figure=hall_orig_f20.eps} &  
\ipsfig{.22}{figure=hall_apg_f20.eps} &
\ipsfig{.22}{figure=hall_projRens_1_f20.eps} &  
\ipsfig{.22}{figure=hall_projRens_005_f20.eps} \\  
Original frame & APG & 5\% sampled & 0.5\% sampled \\
 & (342.5s) & (24.2s) & (5.2s)\\
\end{tabular}
\end{center}
\caption{Sample `Hall' estimation by APG, \projmfens-5\%, and
\projmfens-.5\%.}
\label{fig:rmf_hall}
\end{figure*}

\section{Conclusions}
\label{sec:conclusion}
To improve the scalability of existing matrix factorization algorithms while
leveraging the ubiquity of parallel computing architectures, we introduced,
evaluated, and analyzed \fastmf, a divide-and-conquer framework for noisy
matrix factorization with missing entries or outliers.  
\fastmf is trivially parallelized and particularly well suited for distributed
environments given its low communication footprint.  Moreover, \fastmf provably
maintains the estimation guarantees of its base algorithm, even in the presence
of noise, and yields linear to super-linear speedups in practice. 

A number of natural follow-up questions suggest themselves.
First, can the sampling complexities and conclusions of our theoretical analyses be strengthened?
For example, can the $(2+\epsilon)$ approximation guarantees of our master theorems be sharpened to 
$(1+\epsilon)$?
Second, how does \fastmf perform when paired with alternative base algorithms, having no theoretical guarantees but
displaying other practical benefits?   These open questions are fertile ground for future work.
\appendix

\section{Proof of Theorem~\lowercase{\ref{thm:regress-main}}: Subsampled Regression under Incoherence}
\label{sec:regress-proof}
We now give a proof of Thm.~\ref{thm:regress-main}.  While the results of this
section are stated in terms of i.i.d.\ with-replacement sampling of
columns and rows, a concise argument due to~\citet[Sec.~6]{Hoeffding63} implies
the same conclusions when columns and rows are sampled without replacement.

Our proof of Thm.~\ref{thm:regress-main} will require a strengthened version of the
randomized $\ell_2$ regression work of \citet[Thm.~5]{DrineasMaMu08}.  The proof
of Thm.~5 of \citet{DrineasMaMu08} relies heavily on the fact that
$\norm{\A\B-\G\H}_F\leq\frac{\epsilon}{2}\norm{\A}_F\norm{\B}_F$ with
probability at least 0.9, when $\G$ and $\H$ contain sufficiently many rescaled
columns and rows of $\A$ and $\B$, sampled according to a particular
non-uniform probability distribution.  A result of~\citet{HsuKaZh12}, modified
to allow for slack in the probabilities, establishes a related claim with improved
sampling complexity.\footnote{The general conclusion of \citep[Example~4.3]{HsuKaZh12} is incorrectly stated as noted in \cite{Hsu12}.  However, the original statement is correct in the special case when a matrix is multiplied by its own transpose, which is the case of interest here.}

\begin{lemma}[{\citep[Example~4.3]{HsuKaZh12}}]
\label{lem:mat-mult}
Given a matrix $\A\in\reals^{m\times k}$ with
$r\geq \rank{\A}$, an error tolerance $\epsilon\in(0,1]$,
and a failure probability $\delta\in(0,1]$, define probabilities $p_j$
satisfying
\begin{align}
p_j&\geq \frac{\beta}{Z}\norm{\A_{(j)}}^2,
\quad Z = \sum_j\norm{\A_{(j)}}^2,
\quad \text{and}\quad \textsum_{j=1}^kp_j = 1
\end{align}
for some $\beta\in(0,1]$.  Let $\G\in\reals^{m\times l}$ be a column submatrix
of $\A$ in which exactly $l\geq 48r\log(4r/(\beta\delta))/(\beta\epsilon^2)$ columns are
selected in i.i.d.\ trials in which the $j$-th column is chosen with probability
$p_j$.  Further, let $\D\in\reals^{l\times l}$ be a
diagonal rescaling matrix with entry $\D_{tt} = 1/\sqrt{lp_j}$ whenever the
$j$-th column of $\A$ is selected on the $t$-th sampling trial, for $t =
1,\ldots,l$.  Then, with probability at least $1-\delta$, 
\begin{equation*}
\norm{\A\A^\top - \G\D\D\G^\top}_2 \leq \frac{\epsilon}{2}\norm{\A}_2^2.
\end{equation*}
\end{lemma}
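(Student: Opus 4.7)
The plan is to recognize the sampled product $\G\D\D\H$ as an unbiased Monte-Carlo estimator of $\A\B$ and then apply an intrinsic-dimension matrix Bernstein-type inequality. First I would set up the random sum: for each trial $t=1,\ldots,l$, let $j_t$ be the column index drawn (with probability $p_{j_t}$) and define the rank-one random matrix $X_t \defeq \A^{(j_t)}\B_{(j_t)}/p_{j_t}$. Then $\G\D\D\H = \frac{1}{l}\sum_{t=1}^l X_t$, and $\mathbb{E}[X_t] = \sum_{j=1}^k p_j \cdot p_j^{-1}\A^{(j)}\B_{(j)} = \A\B$, so the estimator is unbiased. The task reduces to controlling the operator-norm deviation $\|\frac{1}{l}\sum_t (X_t - \A\B)\|_2$.

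Next I would check the two hypotheses of a Bernstein-type concentration inequality, namely an almost-sure bound on $\|X_t - \A\B\|_2$ and a bound on the matrix-valued variance $\max(\|\mathbb{E}[(X_t-\A\B)(X_t-\A\B)^\top]\|_2, \|\mathbb{E}[(X_t-\A\B)^\top(X_t-\A\B)]\|_2)$. The lower bound $p_j \geq \beta\|\A^{(j)}\|\|\B_{(j)}\|/Z$ immediately yields $\|X_t\|_2 = \|\A^{(j_t)}\|\|\B_{(j_t)}\|/p_{j_t} \leq Z/\beta$ almost surely; substituting the same inequality into $\mathbb{E}[X_tX_t^\top] = \sum_j \|\B_{(j)}\|^2 \A^{(j)}(\A^{(j)})^\top/p_j$ and applying Cauchy-Schwarz controls the variance in terms of $Z$, $\|\A\|_2$, and $\|\B\|_2$ (with a symmetric computation on the other side). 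A key structural observation is that every $X_t$ has column space in $\operatorname{range}(\A)$ and row space in $\operatorname{range}(\B^\top)$, so the variance proxy has rank at most $r$; this is what permits the use of the intrinsic-dimension variant of matrix Bernstein in place of the standard one.

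The main obstacle, and the most delicate piece of the argument, is converting these raw estimates (which naturally produce factors of $Z$ and of Frobenius norms) into a clean bound of the form $\frac{\epsilon}{2}\|\A\|_2\|\B\|_2$. This requires chaining $Z \leq \|\A\|_F\|\B\|_F$ (Cauchy-Schwarz on $\sum_j\|\A^{(j)}\|\|\B_{(j)}\|$) with the rank-aware inequalities $\|\A\|_F\leq\sqrt{r}\|\A\|_2$ and $\|\B\|_F\leq\sqrt{r}\|\B\|_2$, and then carefully tracking the $r$-factors so that they combine with the intrinsic-dimension $\log(r/\delta)$ to give the stated sample complexity.

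Finally, I would plug the boundedness parameter $Z/\beta$ and the variance bound into the intrinsic-dimension matrix Bernstein inequality (Theorem~3.1 of \citet{HsuKaZh12}). Choosing $l \geq 48 r\log(4r/(\beta\delta))/(\beta\epsilon^2)$ forces both the sub-Gaussian and sub-exponential terms in the Bernstein tail to drop below $\tfrac{\epsilon}{2}\|\A\|_2\|\B\|_2$ simultaneously, yielding the claimed bound with probability at least $1-\delta$. The explicit constant $48$ is exactly what is needed to absorb the constants in the tail bound once the effective dimension is identified as $r$.
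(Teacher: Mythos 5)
This lemma is not proved in the paper: it is cited directly as Example~4.3 of \citet{HsuKaZh12} (the paper adds only the slack parameter $\beta$, noting explicitly that the result is ``modified to allow for slack in the probabilities''). So there is no in-paper proof to compare against; the relevant comparison is with the HsuKaZh12 argument. Your plan — rewrite $\G\D\D\H$ as the Monte-Carlo average $\frac{1}{l}\sum_t X_t$ with $X_t = \A^{(j_t)}\B_{(j_t)}/p_{j_t}$, verify unbiasedness, observe that each summand (and hence the deviation and its variance proxy) lives in $\operatorname{range}(\A)\times\operatorname{range}(\B^\top)$ so that the intrinsic dimension is at most $r$, and then apply the intrinsic-dimension matrix Bernstein inequality of \citet{HsuKaZh12} — is exactly the route that reference takes, and you incorporate the $\beta$ slack correctly via $\|X_t\|_2 \le Z/\beta$.

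One caution on the piece you flag as ``most delicate.'' You bound the variance by writing $\mathbb{E}[X_tX_t^\top] = \sum_j \|\B_{(j)}\|^2\,\A^{(j)}(\A^{(j)})^\top/p_j \preceq \frac{Z}{\beta}\sum_j \frac{\|\B_{(j)}\|}{\|\A^{(j)}\|}\A^{(j)}(\A^{(j)})^\top$ and then invoke Cauchy--Schwarz plus $\|\A\|_F \le \sqrt{r}\|\A\|_2$. But the spectral norm of $\sum_j \frac{\|\B_{(j)}\|}{\|\A^{(j)}\|}\A^{(j)}(\A^{(j)})^\top$ is \emph{not} in general bounded by $\|\A\|_2\|\B\|_2$ — the na\"ive Cauchy--Schwarz route leaves a residual $\sqrt{r}$ (or worse), which would give $l \gtrsim r^{3/2}/(\beta\epsilon^2)$ rather than the stated $r/(\beta\epsilon^2)$. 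In the application the paper actually makes (Lemma~\ref{lem:eps}, where $\A = \V_L^\top$ and $\B = \V_L$ so that $\|\B_{(j)}\| = \|\A^{(j)}\|$ and the sum telescopes to $\frac{r}{\beta}\I$) the clean $r$-dependence does fall out, but your sketch does not show why the general-$\A,\B$ case stated in the lemma avoids the extra $\sqrt{r}$. You would need to reproduce the specific variance argument of HsuKaZh12 (or restrict to the symmetric case the paper actually uses) to close that gap; as written, ``carefully tracking the $r$-factors'' is doing real work that the proposal does not carry out.
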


Using Lem.~\ref{lem:mat-mult}, we now establish a stronger version of Lem. 1
of~\citet{DrineasMaMu08}.  For a given $\beta\in (0,1]$ and
$\L\in\reals^{m\times n}$ with rank $r$, we first define column sampling
probabilities $p_j$ satisfying
\begin{align}
\label{eq:cur-col-prob}
p_j&\geq \frac{\beta}{r}\norm{(\V_{L})_{(j)}}^2 \quad \text{and}\quad
\textsum_{j=1}^np_j = 1.
\end{align}
We further let $\S\in\reals^{n\times l}$ be a random binary matrix with
independent columns, where a single 1 appears in each column, and $\S_{jt} = 1$
with probability $p_j$ for each $t\in\{1,\ldots,l\}$.  Moreover, let
$\D\in\reals^{l\times l}$ be a diagonal rescaling matrix with entry $\D_{tt} =
1/\sqrt{lp_j}$ whenever $\S_{jt} = 1$.  Postmultiplication by $\S$ is
equivalent to selecting $l$ random columns of a matrix, independently and with
replacement.  Under this notation, we establish the following lemma:
\begin{lemma}
\label{lem:eps}
Let $\epsilon \in (0, 1],$ and define $\V_l^\top = \V_{L}^\top\S$ and $\Gamma =
(\V_l^\top\D)^+ - (\V_l^\top\D)^\top$.  If $l \geq
48r\log(4r/(\beta\delta))/(\beta\epsilon^2)$ for $\delta\in(0,1]$ then with probability
at least $1-\delta$:
\begin{gather*}
\rank{\V_l} = \rank{\V_L} = \rank{\L}\\
\norm{\Gamma}_2 = \norm{\mSigma_{V_l^\top D}^{-1}-\mSigma_{V_l^\top D}}_2\\
(\L\S\D)^+ = (\V_l^\top\D)^+\mSigma_{L}^{-1}\U_L^\top\\
\norm{\mSigma_{V_l^\top D}^{-1}-\mSigma_{V_l^\top D}}_2 \leq \epsilon/\sqrt{2}.
\end{gather*}
\end{lemma}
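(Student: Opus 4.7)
The plan is to derive all four conclusions from a single application of Lem.~\ref{lem:mat-mult} to the identity $\V_L^\top \V_L = \I_r$, and then extract the desired singular-value information from the resulting bound on $(\V_l^\top \D)(\V_l^\top \D)^\top$.

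\textbf{Step 1 (invoking Lem.~\ref{lem:mat-mult}).} I take $\A = \V_L^\top$ and $\B = \V_L$, both of rank $r$. Since the $j$-th column of $\A$ and the $j$-th row of $\B$ are both $(\V_L)_{(j)}$, the normalization in Lem.~\ref{lem:mat-mult} reduces to $Z = \sum_j \norm{(\V_L)_{(j)}}^2 = \norm{\V_L}_F^2 = r$; consequently the hypothesis $p_j \geq \frac{\beta}{r}\norm{(\V_L)_{(j)}}^2$ in \eqref{eq:cur-col-prob} is exactly the precondition required by Lem.~\ref{lem:mat-mult}. The sample complexity assumed on $l$ then gives, with probability at least $1-\delta$,
\[
\norm{\I_r - (\V_l^\top \D)(\V_l^\top \D)^\top}_2 \;\leq\; \epsilon/2,
\]
because $\V_L^\top \S \D \D \S^\top \V_L = (\V_l^\top \D)(\V_l^\top \D)^\top$ and $\norm{\V_L}_2 = 1$.

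\textbf{Step 2 (rank identity).} On this event, every eigenvalue of $(\V_l^\top \D)(\V_l^\top \D)^\top$ lies in $[1-\epsilon/2,\,1+\epsilon/2]$, so each singular value $\sigma_i$ of $\V_l^\top \D \in \reals^{r\times l}$ satisfies $\sigma_i^2 \in [1-\epsilon/2,\,1+\epsilon/2]$ and is strictly positive. Hence $\V_l^\top \D$ has full row rank $r$, and since $\D$ is invertible, so does $\V_l$. Combined with $\rank{\V_L}=r=\rank{\L}$ from the compact SVD of $\L$, this yields the first assertion.

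\textbf{Step 3 (the two identities).} Writing the compact SVD $\V_l^\top \D = \U_{V_l^\top D} \mSigma_{V_l^\top D} \V_{V_l^\top D}^\top$ with $\U_{V_l^\top D}$ an $r\times r$ orthogonal matrix and $\V_{V_l^\top D}\in\reals^{l\times r}$ having orthonormal columns, I substitute into $\Gamma = (\V_l^\top \D)^+ - (\V_l^\top \D)^\top$ to get $\Gamma = \V_{V_l^\top D}(\mSigma_{V_l^\top D}^{-1} - \mSigma_{V_l^\top D})\U_{V_l^\top D}^\top$, and taking spectral norms kills the orthogonal factors, giving the claimed equality. For the pseudoinverse identity, I factor $\L\S\D = \U_L \mSigma_L (\V_l^\top \D)$ and apply the product rule $(AB)^+ = B^+ A^+$ twice: once legitimately because $\U_L$ has orthonormal columns and $\mSigma_L \V_l^\top \D$ has full row rank $r$, and once more because $\mSigma_L$ is invertible and $\V_l^\top \D$ has full row rank.

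\textbf{Step 4 (the quantitative bound).} From $\sigma_i \in [\sqrt{1-\epsilon/2},\sqrt{1+\epsilon/2}]$ I estimate
\[
|\sigma_i^{-1} - \sigma_i| \;=\; \frac{|1-\sigma_i^2|}{\sigma_i} \;\leq\; \frac{\epsilon/2}{\sqrt{1-\epsilon/2}},
\]
and a short check using $\epsilon\in(0,1]$ verifies that this is at most $\epsilon/\sqrt{2}$; taking the max over $i$ finishes the proof.

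The only genuinely subtle step is matching the sampling convention in the hypothesis to the abstract form of Lem.~\ref{lem:mat-mult} (Step 1); once that is done, everything else is linear-algebraic bookkeeping about the compact SVD of $\V_l^\top \D$ and a one-line scalar inequality.
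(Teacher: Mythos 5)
Your proof is correct and follows essentially the same route as the paper. Steps~1--2 reproduce exactly the argument the paper spells out (applying Lem.~\ref{lem:mat-mult} with $\A = \V_L^\top$, $\B = \V_L$ to control $\|\I_r - (\V_l^\top\D)(\V_l^\top\D)^\top\|_2$ and deduce full row rank), while Steps~3--4 supply the SVD bookkeeping, the pseudoinverse product identities, and the scalar bound $|\sigma_i^{-1}-\sigma_i|\le \epsilon/\sqrt 2$ that the paper delegates wholesale to Lemma~1 of \citet{DrineasMaMu08} with the remark that the remainder of the proof is identical. Your one-line check that $\frac{\epsilon/2}{\sqrt{1-\epsilon/2}} \le \frac{\epsilon}{\sqrt 2}$ is tight precisely at $\epsilon = 1$, which matches the hypothesis $\epsilon\in(0,1]$.
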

\begin{proof}
By Lem.~\ref{lem:mat-mult}, for all $1\leq i \leq r$,
\begin{align*}
|1-\sigma_i^2(\V_l^\top\D)| &= |\sigma_i(\V_L^\top
\V_L)-\sigma_i(\V_l^\top\D\D\V_l)|\\
&\leq \norm{\V_L^\top \V_L-\V_{L}^\top\S\D \D\S^\top\V_L}_2\\
&\leq \epsilon/2\norm{\V_L^\top}_2^2 = \epsilon/2,
\end{align*}
where $\sigma_i(\cdot)$ is the $i$-th largest singular value of a given matrix.
Since $\epsilon/2\leq 1/2$, each singular value of $\V_l$ is positive, and so
$\rank{\V_l} = \rank{\V_L} = \rank{\L}$.  The remainder of the proof is
identical to that of Lem.~1 of \citet{DrineasMaMu08}.
\end{proof}

Lem.~\ref{lem:eps} immediately yields improved sampling complexity for the
randomized $\ell_2$ regression of \citet{DrineasMaMu08}:
\begin{proposition}
\label{prop:regress}
Suppose $\B\in\reals^{p\times n}$ and $\epsilon\in(0,1]$.  If $l\geq
3200r\log(4r/(\beta\delta))/(\beta\epsilon^2)$ for $\delta\in(0,1]$, then with
probability at least $1-\delta-0.2$: $$\norm{\B-\B\S\D(\L\S\D)^+\L}_F \leq
(1+\epsilon)\norm{\B-\B\L^+\L}_F.$$
\end{proposition}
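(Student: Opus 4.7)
\textbf{Proof plan for Prop.~\ref{prop:regress}.}

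The plan is to adapt the randomized $\ell_2$ regression argument of \citet[Thm.~5]{DrineasMaMu08} step-for-step, substituting our strengthened Lem.~\ref{lem:eps} (success probability $1-\delta$) in place of their original fixed-constant rank-and-perturbation lemma. The remaining $0.2$ in the failure probability will come from two Markov-style bounds that we leave at their original strength.

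Write $\B^{\star} = \B\pinv{\L}\L$ for the optimal rank-$\rank{\L}$ approximation and $\tilde{\B} = \B\S\D(\L\S\D)^{+}\L$ for the subsampled prediction. Using the SVD $\L = \U_L\mSigma_L\V_L^{\top}$ together with the pseudoinverse identity $(\L\S\D)^{+} = (\V_l^{\top}\D)^{+}\mSigma_L^{-1}\U_L^{\top}$ supplied by Lem.~\ref{lem:eps} (on its $1-\delta$ event), a short calculation simplifies $\tilde{\B} = \B\S\D(\V_l^{\top}\D)^{+}\V_L^{\top}$, placing $\tilde{\B}$ in the row space of $\L$. Since the residual satisfies $(\B - \B^{\star})\V_L = 0$, Pythagoras yields
$$\|\B - \tilde{\B}\|_F^{2} = \|\B - \B^{\star}\|_F^{2} + \|\B^{\star} - \tilde{\B}\|_F^{2},$$
so it suffices to prove $\|\B^{\star} - \tilde{\B}\|_F \leq c\,\epsilon\,\|\B - \B^{\star}\|_F$ for an absolute constant $c$ and then rescale $\epsilon' = \epsilon/c$ to obtain the claimed $(1+\epsilon)$.

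The central algebraic step is the cancellation
$$\tilde{\B} - \B^{\star} = (\B - \B^{\star})\S\D(\V_l^{\top}\D)^{+}\V_L^{\top},$$
obtained by combining the simplified formula for $\tilde{\B}$ with the fact that $(\V_l^{\top}\D)(\V_l^{\top}\D)^{+}$ is the $\rank{\L}\times\rank{\L}$ identity (by rank preservation) and the orthogonality $(\B - \B^{\star})\V_L = 0$. Using Lem.~\ref{lem:eps}'s decomposition $(\V_l^{\top}\D)^{+} = (\V_l^{\top}\D)^{\top} + \Gamma$ with $\|\Gamma\|_2 \leq \epsilon/\sqrt{2}$, I would split the right-hand side into (i) the term $(\B-\B^{\star})\S\D(\V_l^{\top}\D)^{\top}\V_L^{\top}$, a sampled estimator of the matrix product $(\B-\B^{\star})\V_L\V_L^{\top} = 0$, and (ii) the term $(\B-\B^{\star})\S\D\Gamma\V_L^{\top}$, whose Frobenius norm is at most $(\epsilon/\sqrt{2})\|(\B-\B^{\star})\S\D\|_F$.

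Term (i) is controlled via a Frobenius-norm matrix-multiplication concentration: Markov applied to the variance of the mean-zero estimator $(\B-\B^{\star})\S\D\D^{\top}\S^{\top}\V_L$ gives failure probability at most $0.1$. Term (ii) is controlled via the unbiasedness identity $\mathbb{E}\|(\B-\B^{\star})\S\D\|_F^{2} = \|\B-\B^{\star}\|_F^{2}$ followed by Markov, again with failure probability at most $0.1$. A union bound across these two events and the $1-\delta$ event of Lem.~\ref{lem:eps} yields the stated probability $1 - \delta - 0.2$. The main obstacle is purely constant-tracking: each invocation of Lem.~\ref{lem:eps} or Lem.~\ref{lem:mat-mult} at tolerance $\epsilon'$ demands $l$ proportional to $r\log(4r/(\beta\delta))/(\beta\epsilon'^{2})$ samples, and rescaling $\epsilon' = \epsilon/c$ to absorb the numerical constants coming from the two Markov inflations and the singular-value slack of Lem.~\ref{lem:eps} is exactly what inflates the base complexity $48\,r\log(4r/(\beta\delta))/(\beta\epsilon^{2})$ to the stated $3200\,r\log(4r/(\beta\delta))/(\beta\epsilon^{2})$.
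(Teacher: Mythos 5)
Your proposal is correct and reconstructs exactly the argument the paper delegates to: the paper's own proof is the single sentence ``The proof is identical to that of Thm.~5 of \citet{DrineasMaMu08} once Lem.~\ref{lem:eps} is substituted for Lem.~1 of \citet{DrineasMaMu08},'' and what you have written out --- the pseudoinverse identity $(\L\S\D)^{+} = (\V_l^{\top}\D)^{+}\mSigma_L^{-1}\U_L^{\top}$ from Lem.~\ref{lem:eps}, the Pythagorean split off of the row space of $\L$, the decomposition through $\Gamma=(\V_l^\top\D)^+-(\V_l^\top\D)^\top$, the two Markov-controlled residual terms contributing the fixed $0.2$, and the union bound with the $1-\delta$ event from Lem.~\ref{lem:eps} --- is precisely the content of that DMM08 Thm.~5 argument with the strengthened lemma plugged in. One small imprecision: the sample-complexity inflation from $48$ to $3200$ is driven entirely by rescaling the tolerance fed into Lem.~\ref{lem:eps} (the Markov steps carry their own fixed failure masses and do not impose further sample requirements), so attributing part of the inflation to separate ``invocations of Lem.~\ref{lem:mat-mult}'' is not quite right, but this does not affect the validity of the argument.
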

\begin{proof}
The proof is identical to that of Thm.~5 of \citet{DrineasMaMu08} once
Lem.~\ref{lem:eps} is substituted for Lem.~1 of \citet{DrineasMaMu08}.
\end{proof}

A typical application of Prop.~\ref{prop:regress} would involve performing a
truncated SVD of $\M$ to obtain the \emph{statistical leverage scores},
$\norm{(\V_{L})_{(j)}}^2$, used to compute the column sampling probabilities of
\eqref{eq:cur-col-prob}.  Here, we will take advantage of the slack term,
$\beta$, allowed in the sampling probabilities of \eqref{eq:cur-col-prob} to
show that uniform column sampling gives rise to the same estimation guarantees
for column projection approximations when $\L$ is sufficiently incoherent.

To prove Thm.~\ref{thm:regress-main}, we first notice that $n\geq r\mu_0(\V_{L})$
and hence
\begin{align*}
	l &\geq 3200r\mu_0(\V_{L})\log(4r \mu_0(\V_{L})/\delta)/\epsilon^2 \\
	&\geq 3200r\log(4r/(\beta\delta))/(\beta\epsilon^2)
\end{align*}
whenever $\beta\geq1/\mu_0(\V_{L})$.
Thus,  we may apply Prop.~\ref{prop:regress} with 
$\beta=1/\mu_0(\V_{L})\in (0,1]$ and $p_j = 1/n$ by noting that 
$$\frac{\beta}{r} \norm{(\V_L)_{(j)}}^2 \leq
\frac{\beta}{r}\frac{r}{n}\mu_0(\V_{L}) =  \frac{1}{n} = p_j $$ for all $j$, by
the definition of $\mu_0(\V_{L})$.  By our choice of probabilities, $\D =
\I\sqrt{n/l}$, and hence $$\norm{\B - \B_C\pinv{\L_C}\L}_F =\norm{\B -
\B_C\D\pinv{(\L_C\D)}\L}_F \leq (1+\epsilon)\norm{\B - \B\pinv{\L}\L}_F$$ with probability at
least $1-\delta - 0.2$, as desired.

\section{Proof of Lemma~\lowercase{\ref{lem:sub-coh}}: Conservation of Incoherence}
\label{sec:sub-coh}
Since for all $n>1$, 
$$c\log(n)\log(1/\delta)= ({c}/{4})\log(n^4)\log(1/\delta)
\geq 48\log(4n^2/\delta)\geq 48\log(4r\mu_0(\V_{L})/(\delta/n))$$ 
as $n\geq r\mu_0(\V_{L})$, 
claim $i$ follows immediately from Lemma~\ref{lem:eps} with
$\beta=1/\mu_0(\V_{L})$, $p_j=1/n$ for all $j$, 
and $\D = \I\sqrt{n/l}$.  
When $\rank{\L_C} = \rank{\L}$, Lemma~1 of~\citet{MohriTa11}
implies that $\P_{U_{L_C}} = \P_{U_L}$, which in turn implies claim $ii$.

To prove claim $iii$ given the conclusions of Lemma~\ref{lem:eps}, assume,
without loss of generality, that $\V_l$ consists of the first $l$ rows of
$\V_{L}$.  Then if $\L_C = \U_{L}\mSigma_{L}\V_l^\top$ has
$\rank{\L_C}=\rank{\L}=r$, the matrix $\V_l$ must have full column rank.  Thus
we can write
\begin{align*}
\L_C^+\L_C &= (\U_{L}\mSigma_{L}\V_l^\top)^+\U_{L}\mSigma_{L}\V_l^\top\\
&= (\mSigma_{L}\V_l^\top)^+\U_{L}^+\U_{L}\mSigma_{L}\V_l^\top\\
&= (\mSigma_{L}\V_l^\top)^+\mSigma_{L}\V_l^\top\\
&= (\V_l^\top)^+\mSigma_{L}^+\mSigma_{L}\V_l^\top\\
&= (\V_l^\top)^+\V_l^\top\\
&= \V_l(\V_l^\top\V_l)^{-1}\V_l^\top,
\end{align*}
where the second and third equalities follow from $\U_{L}$ having orthonormal
columns, the fourth and fifth result from $\mSigma_{L}$ having full rank and
$\V_l$ having full column rank, and the sixth follows from $\V_l^\top$ having
full row rank.

Now, denote the right singular vectors of $\L_C$ by $\V_{L_C} \in
\reals^{l\times r}$. Observe that $\P_{V_{L_C}} = \V_{L_C} \V_{L_C}^\top =
\L_C^+\L_C$, and define $\evec_{i,l}$ as the $i$th column of $\I_l$ and
$\evec_{i,n}$ as the $i$th column of $\I_n$. Then we have,
\begin{align*}
\mu_0(\V_{L_C}) &= \frac{l}{r} \max_{1 \le i \le l} \norm{\P_{V_{L_C}} \evec_{i,l}}^2 \\
&= \frac{l}{r} \max_{1 \le i \le l} \evec_{i,l}^\top\L_C^+\L_C\evec_{i,l} \\
&= \frac{l}{r} \max_{1 \le i \le l} \evec_{i,l}^\top(\V_l^\top)^+\V_l^\top\evec_{i,l}\\
&= \frac{l}{r} \max_{1 \le i \le l} \evec_{i,l}^\top\V_l(\V_l^\top\V_l)^{-1}\V_l^\top\evec_{i,l}\\
&= \frac{l}{r} \max_{1 \le i \le l} \evec_{i,n}^\top\V_{L}(\V_l^\top\V_l)^{-1}\V_{L}^\top\evec_{i,n},
\end{align*}
where the final equality follows from $\V_l^\top \evec_{i,l} = \V_{L}^\top
\evec_{i,n}$ for all $1\le i\le l$.

Now, defining $\Q = \V_l^\top\V_l$ we have
\begin{align*}
\mu_0(\V_{L_C}) &= \frac{l}{r} \max_{1 \le i \le l}
\evec_{i,n}^\top\V_{L}\Q^{-1}\V_{L}^\top\evec_{i,n} \\
&= \frac{l}{r} \max_{1 \le i \le l}
\Trarg{\evec_{i,n}^\top\V_{L}\Q^{-1}\V_{L}^\top\evec_{i,n}} \\
&= \frac{l}{r} \max_{1 \le i \le l} \Trarg{\Q^{-1}\V_{L}^\top\evec_{i,n}
\evec_{i,n}^\top\V_{L}}\\
&\le \frac{l}{r} \norm{\Q^{-1}}_2 \max_{1 \le i \le l}
\norm{\V_{L}^\top\evec_{i,n} \evec_{i,n}^\top\V_{L}}_*\ ,
\end{align*}
by H\"older's inequality for Schatten $p$-norms.
Since $\V_{L}^\top\evec_{i,n} \evec_{i,n}^\top\V_{L}$ has rank one, 
we can explicitly compute its trace norm as $\norm{\V_{L}^\top\evec_{i,n}}^2=\norm{\P_{V_{L}}\evec_{i,n}}^2$.
Hence,
\begin{align*}
\mu_0(\V_{L_C}) &\leq \frac{l}{r} \norm{\Q^{-1}}_2 \max_{1 \le i \le l}
\norm{\P_{V_{L}}\evec_{i,n}}^2\\
&\le \frac{l}{r}\frac{r}{n} \norm{\Q^{-1}}_2 \bigg( \frac{n}{r}\max_{1 \le i
\le n} \norm{\P_{V_{L}}\evec_{i,n}}^2\bigg)\\
&= \frac{l}{n} \norm{\Q^{-1}}_2 \mu_0(\V_{L})\, ,
\end{align*}
by the definition of $\mu_0$-coherence.
The proof of Lemma~\ref{lem:eps} established that the
smallest singular value of $\frac{n}{l}\Q = \V_l^\top\D\D\V_l$ is lower bounded
by $1-\frac{\epsilon}{2}$ and hence $\norm{\Q^{-1}}_2 \leq
\frac{n}{l(1-\epsilon/2)}$.  Thus, we conclude that $\mu_0(\V_{L_C}) \leq
\mu_0(\V_{L})/(1-\epsilon/2)$.

To prove claim $iv$ under Lemma~\ref{lem:eps}, we note that 
\begin{align*}
\mu_1(\L_C) &= \sqrt{\frac{ml}{r}}\max_{\substack{1\le i \le m\\ 1\le j \le
l}}|\evec_{i,m}^\top\U_{L_C}\V_{L_C}^\top \evec_{j,l}| \\
&\leq \sqrt{\frac{ml}{r}} \max_{1\le i \le m} \norm{\U_{L_C}^\top \evec_{i,m}} \max_{1\le j \le l}\norm{\V_{L_C}^\top \evec_{j,l}} \\
&= \sqrt{r} \bigg( \sqrt{\frac{m}{r}}\max_{1\le i \le m }\norm{\P_{U_{L_C}}\evec_{i,m}}\bigg)
\bigg( \sqrt{\frac{l}{r}}\max_{1\le j \le l }\norm{\P_{V_{L_C}}\evec_{j,l}} \bigg)\\
&= \sqrt{r\mu_0(\U_{L_C})\mu_0(\V_{L_C})} 
\leq \sqrt{r\mu_0(\U_{L})\mu_0(\V_{L})/(1-\epsilon/2)}  
\end{align*}
by H\"older's inequality for Schatten $p$-norms, the definition of $\mu_0$-coherence, and claims $ii$ and $iii$.

\section{Proof of Corollary~\lowercase{\ref{cor:proj-main}}: Column Projection under Incoherence}
\label{sec:proj-proof}
Fix $c = 48000/\log(1/0.45)$, and notice that for $n > 1$,
$$48000\log(n) \geq 3200\log(n^5) \geq 3200\log(16n).$$
Hence $l \geq 3200r\mu_0(\V_{L})\log(16n)(\log(\delta)/\log(0.45))/\epsilon^2.$

Now partition the columns of $\C$ into $b = \log(\delta)/\log(0.45)$ 
submatrices, $\C=[\C_1,\cdots,\C_b]$, each with $a=l/b$ columns,\footnote{For simplicity, 
we assume that $b$ divides $l$ evenly.}
and let $[\L_{C_1},\cdots,\L_{C_b}]$ be the corresponding partition of $\L_C$.
Since 
$$a \geq 3200r\mu_0(\V_{L})\log(4n/0.25)/\epsilon^2,$$
we may apply Prop.~\ref{prop:regress} independently for each $i$ to yield
\begin{equation}
\label{eqn:subblock-succ}
\norm{\M - \C_i\pinv{\L_{C_i}}\L}_F \leq (1+\epsilon)\norm{\M - \M\pinv{\L}\L}_F
\leq (1+\epsilon)\norm{\M - \L}_F
\end{equation}
with probability at least $0.55$, since $\M\pinv{\L}$ minimizes
$\norm{\M-\Y\L}_F$ over all $\Y\in\reals^{m\times m}$.

Since each $\C_i = \C\S_i$ for some matrix $\S_i$ and $\pinv{\C}\M$ minimizes $\norm{\M-\C\X}_F$ over all
$\X\in\reals^{l\times n}$, it follows that
$$\norm{\M-\C\pinv{\C}\M}_F\leq\norm{\M-\C_i\pinv{\L_{C_i}}\L}_F,$$
for each $i$.
Hence, if 
$$\norm{\M - \C\C^+\M}_F \leq (1+\epsilon)\norm{\M - \L}_F,$$
fails to hold, then, for each $i$, \eqref{eqn:subblock-succ} also fails to hold.
The desired conclusion therefore must hold with probability at least
$1-0.45^b = 1-\delta$.

\section{Proof of Corollary~\lowercase{\ref{cor:gnys-main}}: Generalized \nys Method under Incoherence}
\label{sec:gnys-proof}
With $c = 48000/\log(1/0.45)$ as in Cor.~\ref{cor:proj-main}, we notice that for $m > 1$,
$$48000\log(m) = 16000\log(m^3) \geq 16000\log(4m).$$
Therefore, 
\begin{align*}
	d &\geq 16000r\mu_0(\U_{C})\log(4m)(\log(\delta')/\log(0.45))/\epsilon^2 \\
	   &\geq 3200r\mu_0(\U_{C})\log(4m/\delta')/\epsilon^2,
\end{align*}
for all $m > 1$ and $\delta' \leq 0.8$.
Hence, we may apply Thm.~\ref{thm:regress-main} and Cor.~\ref{cor:proj-main} in turn
to obtain
$$\norm{\M - \C\pinv{\W}\R}_F \leq (1+\epsilon)\norm{\M - \C\pinv{\C}\M}_F
\leq (1+\epsilon)^2\norm{\M - \L}$$
with probability at least $(1-\delta)(1-\delta' - 0.2)$ by independence.  

\section{Proof of Corollary~\lowercase{\ref{cor:gnys-low-rank}}: Noiseless Generalized \nys Method under Incoherence}
\label{sec:gnys-low-rank-proof}
Since $\rank{\L}=r$, $\L$ admits a decomposition
$\L = \Y^{\top}\Z$ for some matrices $\Y\in\reals^{r\times m}$ and
$\Z\in\reals^{r\times n}$.  In particular, let $\Y^{\top} =
\U_{L}\mSigma_{L}^{\half}$ and $\Z = \mSigma_{L}^{\half}\V_{L}^{\top}$.  By block
partitioning $\Y$ and $\Z$ as $\Y = \begin{bmatrix}\Y_1 & \Y_2\end{bmatrix}$
and $\Z = \begin{bmatrix}\Z_1 & \Z_2\end{bmatrix}$ for $\Y_1\in\reals^{r\times
d}$ and $\Z_1\in\reals^{r\times l}$, we may write $\W = \Y_1^{\top}\Z_1, \C =
\Y^{\top}\Z_1,$ and $\R = \Y_1^{\top}\Z$.  Note that we assume that the
generalized \nys approximation is generated from sampling the first $l$ columns
and the first $d$ rows of $\L$, which we do without loss of generality since
the rows and columns of the original low-rank matrix can always be permutated to match
this assumption. 

Prop.~\ref{prop:exact} shows that, like the \nys method~\citep{KumarMoTa09}, the
generalized \nys method yields exact recovery of $\L$ whenever $\rank{\L} =
\rank{\W}$.  The same result was established in~\citet{WangDoToLiGu09} with a
different proof.
\begin{proposition}
	\label{prop:exact}
	Suppose $r = \rank{\L} \leq \min(d,l)$ and $\rank{\W} = r$.
	Then $\L = \Lgnys$.
\end{proposition}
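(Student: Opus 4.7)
The plan is to exploit the rank-$r$ factorization $\L = \Y^\top \Z$ given in the setup, together with the induced block decompositions $\W = \Y_1^\top \Z_1$, $\C = \Y^\top \Z_1$, and $\R = \Y_1^\top \Z$, and show that the pseudoinverse $\pinv{\W}$ cleanly splits in a way that collapses the product $\C\pinv{\W}\R$ back to $\Y^\top\Z$.

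The first step is to argue that $\rank{\Y_1^\top} = \rank{\Z_1} = r$. Since $\W = \Y_1^\top \Z_1$ is a product of a $d\times r$ matrix and an $r\times l$ matrix with $r\leq\min(d,l)$, the rank of each factor is at most $r$, and submultiplicativity forces both factors to have rank exactly $r$ once we assume $\rank{\W}=r$. Thus $\Y_1^\top$ has full column rank and $\Z_1$ has full row rank.

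The second step is to apply the well-known identity $\pinv{(\A\B)} = \pinv{\B}\pinv{\A}$ that holds whenever $\A$ has full column rank and $\B$ has full row rank; this yields $\pinv{\W} = \pinv{\Z_1}(\Y_1^\top)^{+}$. Substituting into the generalized \nys formula gives
\begin{equation*}
\Lgnys = \C\pinv{\W}\R = \Y^\top \Z_1 \pinv{\Z_1}(\Y_1^\top)^{+}\Y_1^\top \Z.
\end{equation*}
The final step is to observe that $\Z_1\pinv{\Z_1} = \I_r$ since $\Z_1$ has full row rank, and $(\Y_1^\top)^{+}\Y_1^\top = \I_r$ since $\Y_1^\top$ has full column rank. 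The inner block collapses to $\I_r$, leaving $\Lgnys = \Y^\top \Z = \L$.

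There is essentially no hard step here; the only subtlety worth stating carefully is that the pseudoinverse product identity requires precisely the full-rank conditions established in step one, and these follow from the hypothesis $\rank{\W}=r$ via the dimension bound $r\leq\min(d,l)$. Everything else is bookkeeping.
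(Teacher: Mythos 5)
Your proof is correct and follows essentially the same route as the paper's: both exploit $\L=\Y^\top\Z$, conclude from $\rank{\W}=r$ that $\Y_1^\top$ has full column rank and $\Z_1$ has full row rank, and then factor $\pinv{\W}$ through those two pieces to cancel the inner block. The paper simply writes $\pinv{\W}=\Z_1^\top(\Z_1\Z_1^\top)^{-1}(\Y_1\Y_1^\top)^{-1}\Y_1$ explicitly rather than invoking the pseudoinverse product identity by name, but this is the same computation.
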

\begin{proof}
By appealing to our factorized block decomposition, we may rewrite the
generalized \nys approximation as
$\Lgnys = \C\W^+\R = \Y^{\top}\Z_1(\Y_1^{\top}\Z_1)^+\Y_1^{\top}\Z$.
We first note that 
$\rank{\W}=r$ implies that $\rank{\Y_1}=r$ and
$\rank{\Z_1}=r$ so that $\Z_1\Z_1^{\top}$ and $\Y_1\Y_1^{\top}$ are full-rank.
Hence, $(\Y_1^{\top}\Z_1)^+ =
\Z_1^{\top}(\Z_1\Z_1^{\top})^{-1}(\Y_1\Y_1^{\top})^{-1}\Y_1,$ yielding
\begin{equation*}
	\Lgnys =
\Y^{\top}\Z_1\Z_1^{\top}(\Z_1\Z_1^{\top})^{-1}(\Y_1\Y_1^{\top})^{-1}\Y_1\Y_1^{\top}\Z
= \Y^{\top}\Z = \L.
\end{equation*}
\end{proof}

Prop.~\ref{prop:exact} allows us to lower bound the probability of exact
recovery with the probability of randomly selecting a rank-$r$ submatrix.  As
$\rank{\W}=r$ iff both $\rank{\Y_1}=r$ and $\rank{\Z_1}=r$, it suffices to
characterize the probability of selecting full rank submatrices of $\Y$ and
$\Z$.  Following the treatment of the \nys method in~\citet{TalwalkarRo10}, we
note that $\mSigma_{L}^{-\half}\Z = \V_{L}^{\top}$ and hence that
$\Z_1^{\top}\mSigma_{\L}^{-\half} = \V_{l}$ where $\V_{l}\in\reals^{l\times r}$
contains the first $l$ components of the leading $r$ right singular vectors of
$\L$.  It follows that $\rank{\Z_1} = \rank{\Z_1^{\top}\mSigma_{L}^{-\half}} =
\rank{\V_{l}}$.  Similarly, $\rank{\Y_1} =
\rank{\U_{d}}$ where $\U_{d}\in\reals^{d\times r}$ contains
the first $d$ components of the leading $r$ left singular vectors of $\L$.
Thus, we have 
\begin{align}
	\label{eq:rankprobZ}
	&\Parg{\rank{\Z_1}=r} = \Parg{\rank{\V_{l}}=r} \qquad \text{and}\\
	\label{eq:rankprobY}
	&\Parg{\rank{\Y_1}=r} = \Parg{\rank{\U_{d}}=r}.
\end{align}

Next we can apply the first result of Lem.~\ref{lem:eps} to lower bound the
RHSs of \eqref{eq:rankprobZ} and \eqref{eq:rankprobY} by selecting $\epsilon =
1$, $\S$ such that its diagonal entries equal 1, and $\beta
=\frac{1}{\mu_0(\V_L)}$ for the RHS of \eqref{eq:rankprobZ} and $\beta =
\frac{1}{\mu_0(\U_L)}$ for the RHS of \eqref{eq:rankprobY}.
In particular, given the lower bounds on $d$ and $l$ in the statement of the
corollary, the RHSs are each lower bounded by $\sqrt{1-\delta}$.
Furthermore, by the independence of row and column sampling and
\eqref{eq:rankprobZ} and \eqref{eq:rankprobY}, we see that
\begin{align*}
	1-\delta & \leq \Parg{\rank{\U_{d}} =
r}\Parg{\rank{\V_{l}} = r} \\ 
&  = \Parg{\rank{\Y_1} = r}\Parg{\rank{\Z_1} = r} \\
& = \Parg{\rank{\W}=r}.
\end{align*}
  Finally, Prop.~\ref{prop:exact} implies that
  $$\Parg{\L = \Lgnys} \geq \Parg{\rank{\W}=r}\geq 1-\delta,$$
  which proves the statement of the theorem.

\section{Proof of Corollary~\lowercase{\ref{cor:rp-main}}: Random Projection}
\label{sec:rp-proof}
Our proof rests upon the following random projection guarantee of  \citet{HaMaTr11}:

\begin{theorem}[{\cite[Thm.~10.7]{HaMaTr11}}]
\label{thm:rp-orig}
Given a matrix $\M\in\reals^{m\times n}$ and a rank-$r$ approximation
$\L\in\reals^{m\times n}$ with $r \ge 2$, choose an oversampling parameter $p
\ge 4$, where $r + p \le \minarg{m,n}$.  Draw an $n \times (r+p)$ standard
Gaussian matrix $\G$, let $\Y = \M \G$. For
all $u,t \ge 1$, 
$$\norm{\M - \P_Y\M}_F \leq (1+t \sqrt{12r/p})\norm{\M -\M_r}_F + ut \cdot \frac{e \sqrt{r+p}}{p+1} \norm{\M - \M_r}$$ 
with
probability at least $1-5t^{-p} - 2e^{-u^2/2}$.
\end{theorem}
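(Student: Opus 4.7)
The plan is to follow the strategy of \citet{HaMaTr11} used to establish their Theorem~10.7, decomposing the argument into a deterministic error identity, an exploitation of Gaussian invariance, and a concentration step. To set up, write the compact SVD $\M = \U\mSigma\V^\top$, split $\V^\top = [\V_1^\top; \V_2^\top]$ according to the top-$r$ and tail subspaces of $\M$, and set $\Omega_1 = \V_1^\top\G\in\reals^{r\times(r+p)}$ and $\Omega_2 = \V_2^\top\G\in\reals^{(n-r)\times(r+p)}$. The first step is the deterministic reconstruction bound (HMT Thm.~9.1): whenever $\Omega_1$ has full row rank,
\begin{equation*}
\norm{\M - \P_Y\M}_F^2 \;\leq\; \norm{\mSigma_2}_F^2 + \norm{\mSigma_2 \Omega_2 \pinv{\Omega_1}}_F^2,
\end{equation*}
where $\mSigma_2$ collects the tail singular values $\sigma_{r+1},\sigma_{r+2},\ldots$, so that $\norm{\mSigma_2}_F = \norm{\M-\M_r}_F$ and $\norm{\mSigma_2}_2 = \norm{\M-\M_r}$.

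Next I would invoke the rotational invariance of the standard Gaussian distribution: since $\V$ has orthonormal columns and $\G$ has i.i.d.\ standard normal entries, $\Omega_1$ and $\Omega_2$ are \emph{independent} standard Gaussians of sizes $r\times(r+p)$ and $(n-r)\times(r+p)$. Conditioning on $\Omega_1$, the matrix $\mSigma_2\Omega_2\pinv{\Omega_1}$ has i.i.d.\ Gaussian columns, so the Gaussian Lipschitz concentration inequality controls both its Frobenius and spectral norms around the conditional expectations $\norm{\mSigma_2}_F\norm{\pinv{\Omega_1}}_F$ and $\norm{\mSigma_2}_F\norm{\pinv{\Omega_1}}_2 + \norm{\mSigma_2}_2\norm{\pinv{\Omega_1}}_F$ respectively; this yields, conditionally on $\Omega_1$ and for $u\geq 1$,
\begin{equation*}
\norm{\mSigma_2\Omega_2\pinv{\Omega_1}}_F \;\leq\; \norm{\mSigma_2}_F\norm{\pinv{\Omega_1}}_F + u\,\norm{\mSigma_2}_2\norm{\pinv{\Omega_1}}_2
\end{equation*}
with probability at least $1-2e^{-u^2/2}$ over $\Omega_2$.

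The third step is to tame the pseudoinverse $\pinv{\Omega_1}$ of a fat Gaussian matrix via inverse Wishart tail bounds (HMT Prop.~10.2--10.4): for $t\geq 1$,
\begin{equation*}
\norm{\pinv{\Omega_1}}_F \;\leq\; t\sqrt{\tfrac{12 r}{p}} \cdot \tfrac{1}{\sqrt{\,\mathbb{E}\norm{\pinv{\Omega_1}}_F^{-2}\,}^{-1}},\qquad
\norm{\pinv{\Omega_1}}_2 \;\leq\; t\cdot\tfrac{e\sqrt{r+p}}{p+1},
\end{equation*}
each failing with probability at most a small multiple of $t^{-p}$; together these absorb into the $5t^{-p}$ budget in the statement. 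Substituting these tail bounds back into the conditional concentration inequality, applying a union bound with the $2e^{-u^2/2}$ term for $\Omega_2$, and plugging into the deterministic identity gives $\norm{\M-\P_Y\M}_F \leq (1+t\sqrt{12r/p})\norm{\M-\M_r}_F + ut\cdot\frac{e\sqrt{r+p}}{p+1}\norm{\M-\M_r}$ with the claimed probability.

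The main obstacle is the sharp tail bound on $\norm{\pinv{\Omega_1}}_F$: getting the specific constant $\sqrt{12 r/p}$ and failure probability of the form $t^{-p}$ requires a careful moment analysis of the inverse Wishart distribution (estimating $\mathbb{E}\norm{\pinv{\Omega_1}}_F^{2q}$ via Gaussian hypercontractivity together with Stirling-type estimates on chi-square moments, then applying Markov's inequality at the optimal order $q\asymp p$). The other ingredients---the deterministic Frobenius identity and Gaussian Lipschitz concentration---are essentially black-box, so this inverse Gaussian moment computation is the one place where constants really must be tracked.
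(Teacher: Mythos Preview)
The paper does not actually prove this statement: Thm.~\ref{thm:rp-orig} is quoted verbatim from \citet[Thm.~10.7]{HaMaTr11} and invoked as a black box in the proof of Cor.~\ref{cor:rp-main} (Sec.~\ref{sec:rp-proof}). So there is no ``paper's own proof'' to compare against---the authors simply cite HMT and immediately specialize the bound with a particular choice of $(u,t)$ and $p$.

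Your sketch is therefore not competing with anything in the present paper; it is a faithful outline of HMT's own argument (deterministic error identity Thm.~9.1, Gaussian rotational invariance to decouple $\Omega_1$ and $\Omega_2$, Lipschitz concentration over $\Omega_2$ via Prop.~10.3, and inverse-Wishart tail bounds on $\pinv{\Omega_1}$ via Prop.~10.4). That is the right route, and your identification of the $\norm{\pinv{\Omega_1}}_F$ tail bound as the place where constants must be tracked is accurate. One cosmetic issue: the displayed inequality in your third step for $\norm{\pinv{\Omega_1}}_F$ is garbled (the factor $1/\sqrt{\mathbb{E}\norm{\pinv{\Omega_1}}_F^{-2}}^{-1}$ does not parse); what you want is simply $\norm{\pinv{\Omega_1}}_F \leq t\sqrt{12r/p}$ with failure probability $\leq 4t^{-p}$, which together with the spectral bound $\norm{\pinv{\Omega_1}}_2 \leq t\,e\sqrt{r+p}/(p+1)$ failing with probability $\leq t^{-(p+1)}$ accounts for the $5t^{-p}$ term.
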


Fix $(u, t)  =(\sqrt{2\log(7/\delta)},e )$, and note that  
$$1-5e^{-p} - 2e^{-u^2/2} = 1-5e^{-p} - 2\delta/7 \geq 1-\delta,$$ 
since $p \geq \log(7/\delta)$.
Hence, Thm.~\ref{thm:rp-orig} implies that
\begin{align*}
\norm{\M - \P_Y\M}_F 
	&\leq (1+ e\sqrt{12r/p})\norm{\M -\M_r}_F + \frac{e^2 \sqrt{2(r+p)\log(7/\delta)}}{p+1} \norm{\M - \M_r}_2 \\
	&\leq \left(1+ e\sqrt{12r/p}+ \frac{e^2 \sqrt{2(r+p)\log(7/\delta)}}{p+1}\right) \norm{\M - \L}_F \\
	&\leq \left(1+ e\sqrt{12r/p}+ e^2 \sqrt{2r\log(7/\delta)/p}\right) \norm{\M - \L}_F \\
	&\leq \left(1+ 11 \sqrt{2r\log(7/\delta)/p}\right) \norm{\M - \L}_F 
	\leq (1+ \epsilon) \norm{\M - \L}_F 
\end{align*}
with probability at least  $1-\delta$,
where the second inequality follows from $\norm{\M - \M_r}_2 \leq \norm{\M - \M_r}_F \le \norm{\M - \L}_F$,
the third follows from $\sqrt{r+p}\sqrt{p} \leq (p+1)\sqrt{r}$ for all $r$ and $p$,
and the final follows from our choice of $p \ge 242\ r\log(7/\delta)/\epsilon^2$.

Next, we note, as in the proof of Thm.~9.3 of \citet{HaMaTr11}, that
$$\norm{\P_Y\M - \Lrp}_F \le \norm{\P_Y\M - \P_Y\M_r}_F \le \norm{\M - \M_r}_F
\le \norm{\M - \L}_F\,.$$  The first inequality holds because $\Lrp$ is by
definition the best rank-$r$ approximation to $\P_Y\M$ and $\rank{\P_Y\M_r} \le
r$. The second inequality holds since $ \norm{\M - \M_r}_F = \norm{\P_Y(\M -
\M_r)}_F + \norm{\P^\perp_Y(\M - \M_r)}_F$.  The final inequality holds since
$\M_r$ is the best rank-$r$ approximation to $\M$ and $\rank{\L} = r$.
Moroever, by the triangle inequality,
\begin{align}
\norm{\M - \Lrp}_F & \le \norm{\M - \P_Y \M}_F + \norm{\P_Y\M - \Lrp}_F \nonumber \\
\label{eqn:rp_rank_trunc}
& \le \norm{\M - \P_Y \M}_F + \norm{\M - \L}_F \,. 
\end{align}
Combining \eqref{eqn:rp_rank_trunc} with the first statement of the corollary
yields the second statement.

\section{Proof of Theorem~\lowercase{\ref{thm:master}}: Coherence Master Theorem}
\label{sec:proof-master}
\subsection{Proof of \projmf and \rpmf Bounds}
Let $\L_0 = [ \C_{0,1}, \ldots, \C_{0,t}]$ and $\tilde \L = [ \hat \C_{1},
\ldots, \hat \C_{t}]$.  Define  $A(\X)$ as the event that a matrix
$\X$ is $(\frac{r\mu^2}{1-\epsilon/2},r)$-coherent
and $K$ as the event $\norm{\tilde \L - \Lprojmf}_F
\le (1 + \epsilon) \norm{\L_0 - \tilde \L}_F$.  
When $K$ holds, we have that 
\begin{align*}
	\norm{\L_0 - \Lprojmf}_F &\le  \norm{\L_0 - \tilde \L}_F+\norm{\tilde \L - \Lprojmf}_F 
	\le (2 + \epsilon) \norm{\L_0 - \tilde \L}_F \\
	&= (2 + \epsilon) \sqrt{\textsum_{i=1}^t\norm{\C_{0,i} - \hat\C_i}_F^2},
\end{align*} by the triangle inequality, and hence
it suffices to lower bound $\Parg{K \cap \textbigcap_i A(\C_{0,i})}.$
Our choice of $l$, with a factor of $\log(2/\delta)$, implies that each
$A(\C_{0,i})$ holds with probability at least $1-\delta/(2n)$ by
Lem.~\ref{lem:sub-coh}, while $K$ holds with probability at least
$1-\delta/2$ by Cor.~\ref{cor:proj-main}.  Hence, by the union bound,
\begin{align*}
\Parg{K \cap \textbigcap_i A(\C_{0,i})} \geq 1-\Parg{K^c}-\textsum_{i}\Parg{A(\C_{0,i})^c} \geq 1-\delta/2 - t \delta/(2n)
\geq 1-\delta.
\end{align*}
An identical proof with Cor.~\ref{cor:rp-main} substituted for Cor.~\ref{cor:proj-main} yields the random projection result.

\subsection{Proof of \gnysmf Bound}
To prove the generalized \nys result, we redefine $\tilde \L$ and write it in block
notation as:
$$ \tilde \L = \begin{bmatrix} \hat \C_{1} & \hat \R_{2} \\
			        \hat \C_{2} & \L_{0,22} \end{bmatrix}\,, 
     \quad \text{where} \quad 
     \hat \C = \begin{bmatrix} \hat \C_{1} \\ \hat \C_{2} \end{bmatrix}
     \, , \quad 
     \hat \R = \begin{bmatrix} \hat \R_{1} & \hat \R_{2} \end{bmatrix} 
$$
and $\L_{0,22} \in \reals^{(m-d) \times (n-l)}$ is the bottom right submatrix
of $\L_0$.  
We further redefine $K$ as the event 
$\norm{\tilde \L - \Lgnysmf}_F \le (1 + \epsilon)^2 \norm{\L_0 - \tilde \L}_F$.  
As above,
$$\norm{\L_0 - \Lgnysmf}_F \le  \norm{\L_0 - \tilde \L}_F+\norm{\tilde \L - \Lgnysmf}_F 
\le (2+2\epsilon + \epsilon^2) \norm{\L_0 - \tilde \L}_F
\le (2 + 3\epsilon) \norm{\L_0 - \tilde \L}_F,$$
when $K$ holds, by the triangle inequality.  
Our choices of $l$ and 
$$d\geq cl\mu_0(\hat\C)\log(m)\log(4/\delta)/\epsilon^2 \geq cr\mu\log(m)\log(1/\delta)/\epsilon^2$$ 
imply that $A(\C)$ and $A(\R)$ hold with probability 
at least $1-\delta/(2n)$ and $1-\delta/(4n)$ respectively by Lem.~\ref{lem:sub-coh}, 
while $K$ holds with probability at least $(1-\delta/2)(1-\delta/4 - 0.2)$ by Cor.~\ref{cor:gnys-main}.
Hence, by the union bound,
\begin{align*}
\Parg{K \cap A(\C) \cap A(\R)} &\geq 1-\Parg{K^c}-\Parg{A(\C)^c}-\Parg{A(\R)^c} \\
&\geq 1 - (1 - (1-\delta/2)(1-\delta/4 -0.2)) - \delta/(2n) - \delta/(4n) \\
&\geq (1-\delta/2)(1-\delta/4 -0.2) - 3\delta/8 \\
&\geq (1-\delta)(1-\delta-0.2)
\end{align*}
for all $n \geq 2$ and $\delta \leq 0.8$.

\section{Proof of Corollary~\lowercase{\ref{cor:fast-mc-noise}}: \fastmf-MC under Incoherence}
\label{sec:fast-mc-noise}
\subsection{Proof of \projmf and \rpmf Bounds}
We begin by proving the \projmf bound.  
Let $G$ be the event that $$\norm{\L_0 - \Lprojmf}_F \leq (2+ \epsilon)c_e\sqrt{mn}\Delta,$$
$H$ be the event that $$\norm{\L_0 - \Lprojmf}_F \leq (2+ \epsilon)\sqrt{\textsum_{i=1}^t \norm{\C_{0,i} - \hat\C_i}_F^2},$$
$A(\X)$ be the event that a matrix $\X$ is $(\frac{r\mu^2}{1-\epsilon/2},r)$-coherent,
and, for each $i\in\{1,\ldots,t\}$, 
$B_i$ be the event that $\norm{\C_{0,i} - \hat \C_i}_F > c_e\sqrt{ml}\Delta$.

Note that, by assumption,
\begin{align*}
l &\geq \textstyle{c\mu^2 r^2(m+n)n\beta\log^2(m+n)}/(s\epsilon^2)
\geq cr\mu\log(n)2\beta\log(m+n)/\epsilon^2 \\
&\geq cr\mu\log(n)((2\beta-2)\log(\bar{n}) + \log(2))/\epsilon^2 
=cr\mu\log(n)\log(2\bar{n}^{2\beta-2})/\epsilon^2.
\end{align*}
Hence the Coherence Master Theorem (Thm.~\ref{thm:master}) guarantees that,
with probability at least $1 - \bar{n}^{2-2\beta}$, $H$ holds and
the event $A(\C_{0,i})$ holds for each $i$.
Since $G$ holds whenever $H$ holds and $B_i^c$ holds for each $i$, we have
\begin{align*}
\Parg{G} &\geq \Parg{H \cap \textbigcap_i B_i^c} \geq \Parg{H\cap\textbigcap_i A(\C_{0,i})\cap\textbigcap_i B_i^c} \\
&= \Parg{H\cap\textbigcap_i A(\C_{0,i})}\Parg{\textbigcap_i B_i^c \mid H\cap\textbigcap_i A(\C_{0,i})} \\
&= \Parg{H\cap\textbigcap_i A(\C_{0,i})}(1 - \Parg{\textbigcup_i B_i \mid H\cap\textbigcap_i A(\C_{0,i})}) \\
&\geq (1-\bar{n}^{2-2\beta}) (1 - \textsum_i \Parg{B_i \mid A(\C_{0,i})}) \\
&\geq 1 - \bar{n}^{2-2\beta} - \textsum_i \Parg{B_i \mid A(\C_{0,i})}.
\end{align*}
To prove our desired claim, it therefore suffices to show 
$$\Parg{B_i \mid A(\C_{0,i})} \leq 4 \log(\bar{n})\bar{n}^{2-2\beta} + \bar{n}^{-2\beta} \leq 5 \log(\bar{n})\bar{n}^{2-2\beta} $$ for each $i$.

For each $i$, let $D_i$ be the event that $s_i < 32 \mu'r(m+l)\beta'\log^2(m+l)$, 
where $s_i$ is the number of revealed entries in $\C_{0,i}$,
$$\mu' \defeq \frac{\mu^2 r}{1-\epsilon/2},\quad\quad
\text{and}\quad\quad \beta' \defeq \frac{\beta\log(\bar{n})}{\log(\maxarg{m,l})}.$$  
By Thm.~\ref{thm:convex-mc-noise} and our choice of $\beta'$,
\begin{align*}
\Parg{B_i \mid A(\C_{0,i})}  &\leq \Parg{B_i \mid A(\C_{0,i}), D_i^c} +
\Parg{D_i \mid A(\C_{0,i})} \\
&\leq 4 \log(\maxarg{m,l})\maxarg{m,l}^{2-2\beta'}+ \Parg{D_i}\\
&\leq 4 \log(\bar{n})\bar{n}^{2-2\beta}+ \Parg{D_i}.
\end{align*}
Further, since the support of $\S_0$ is uniformly distributed and of
cardinality $s$, the variable $s_i$ has a hypergeometric distribution with
$\mE(s_i) = \frac{sl}{n}$ and hence satisfies Hoeffding's inequality for the
hypergeometric distribution~\cite[Sec.~6]{Hoeffding63}: $$\Parg{s_i \leq
\mE(s_i) - st} \leq \exp{-2st^2}.$$ 
Since, by assumption,
$$s \geq \textstyle{c\mu^2 r^2(m+n)n\beta\log^2(m+n)}/(l\epsilon^2)
\geq 64 \mu' r(m+l)n\beta'\log^2(m+l)/l,$$
and
$$sl^2/n^2 \geq \textstyle{c\mu^2 r^2(m+n)l\beta\log^2(m+n)}/(n\epsilon^2)
\geq 4\log(\bar{n})\beta,$$
it follows that	
\begin{align*}
\Parg{D_i} &= \Parg{s_i < \mE(s_i) - s\left(\frac{l}{n} - \frac{32
\mu' r(m+l)\beta'\log^2(m+l)}{s} \right)}\\
&\leq \Parg{s_i < \mE(s_i) - s\left(\frac{l}{n} - \frac{l}{2n} \right)}
= \Parg{s_i < \mE(s_i) - s\frac{l}{2n}} \\
&\leq \exp{-\frac{sl^2}{2n^2}} \leq \exp{-2\log(\bar{n})\beta} =
\bar{n}^{-2\beta}.
\end{align*}
Hence, $\Parg{B_i \mid A(\C_{0,i})} \leq
4\log(\bar{n})\bar{n}^{2-2\beta} + \bar{n}^{-2\beta}$ for each $i$, and the \projmf result
follows.

Since, $p \geq 242\ r\log(14\bar{n}^{2\beta-2})/\epsilon^2$, the \rpmf bound follows in an identical manner
from the Coherence Master Theorem (Thm.~\ref{thm:master}).

\subsection{Proof of \gnysmf Bound}
For \gnysmf, let $B_C$ be the event that $\norm{\C_{0} - \hat \C}_F >
c_e\sqrt{ml}\Delta$ and $B_R$ be the event that $\norm{\R_{0} - \hat \R}_F >
c_e\sqrt{dn}\Delta$.  
The Coherence Master Theorem (Thm.~\ref{thm:master}) and our choice of 
$$d \geq \textstyle{cl\mu_0(\hat{\C})(2\beta-1)\log^2(4\bar{n})}\bar{n}/(n\epsilon^2)
\geq \textstyle{cl\mu_0(\hat{\C})\log(m)\log(4\bar{n}^{2\beta-2})}/\epsilon^2$$
guarantee that,
with probability at least $(1 - \bar{n}^{2-2\beta})(1 - \bar{n}^{2-2\beta} - 0.2) \geq 1 - 2\bar{n}^{2-2\beta} - 0.2$, 
$$\norm{\L_0 - \Lgnysmf}_F \leq (2+3\epsilon)\sqrt{\norm{\C_0-\hat\C}_F^2+\norm{\R_0-\hat\R}_F^2},$$ 
and both $A(\C)$ and $A(\R)$ hold.
Moreover, since 
\begin{align*}
d &\geq \textstyle{cl\mu_0(\hat{\C})(2\beta-1)\log^2(4\bar{n})}\bar{n}/(n\epsilon^2)
\geq \textstyle{c\mu^2 r^2(m+n)\bar{n}\beta\log^2(m+n)}/(s\epsilon^2),
\end{align*}
reasoning identical to the \projmf case yields $\Parg{B_C \mid
A(\C)} \leq  4 \log(\bar{n})\bar{n}^{2-2\beta} + \bar{n}^{-2\beta}$  and $\Parg{B_R \mid A(\R)}
\leq  4 \log(\bar{n})\bar{n}^{2-2\beta} + \bar{n}^{-2\beta}$, and the \gnysmf bound 
follows as above.

\section{Proof of Corollary~\lowercase{\ref{cor:fast-rpca-noise}}: \fastmf-RMF under Incoherence}
\label{sec:fast-rpca-noise}
\subsection{Proof of \projmf and \rpmf Bounds}
We begin by proving the \projmf bound.  
Let $G$ be the event that $$\norm{\L_0 - \Lprojmf}_F \leq (2+ \epsilon)c_e'\sqrt{mn}\Delta$$
for the constant $c_e'$ defined in Thm.~\ref{thm:rpca-noise},
$H$ be the event that $$\norm{\L_0 - \Lprojmf}_F \leq (2+ \epsilon)\sqrt{\textsum_{i=1}^t \norm{\C_{0,i} - \hat\C_i}_F^2},$$
$A(\X)$ be the event that a matrix $\X$ is $(\frac{r\mu^2}{1-\epsilon/2},r)$-coherent,
and, for each $i\in\{1,\ldots,t\}$, 
$B_i$ be the event that $\norm{\C_{0,i} - \hat \C_i}_F > c_e'\sqrt{ml}\Delta$.

We may take $\rho_r \leq 1$, and hence, by assumption,
\begin{align*}
l \geq cr^2\mu^2\beta\log^2(2\bar{n})/(\epsilon^2\rho_r) 
\geq cr\mu\log(n)\log(2\bar{n}^{\beta})/\epsilon^2.
\end{align*}
Hence the Coherence Master Theorem (Thm.~\ref{thm:master}) guarantees that,
with probability at least $1 - \bar{n}^{-\beta}$, $H$ holds and
the event $A(\C_{0,i})$ holds for each $i$.
Since $G$ holds whenever $H$ holds and $B_i^c$ holds for each $i$, we have
\begin{align*}
\Parg{G} &\geq \Parg{H \cap \textbigcap_i B_i^c} \geq \Parg{H\cap\textbigcap_i A(\C_{0,i})\cap\textbigcap_i B_i^c} \\
&= \Parg{H\cap\textbigcap_i A(\C_{0,i})}\Parg{\textbigcap_i B_i^c \mid H\cap\textbigcap_i A(\C_{0,i})} \\
&= \Parg{H\cap\textbigcap_i A(\C_{0,i})}(1 - \Parg{\textbigcup_i B_i \mid H\cap\textbigcap_i A(\C_{0,i})}) \\
&\geq (1-\bar{n}^{-\beta}) (1 - \textsum_i \Parg{B_i \mid A(\C_{0,i})}) \\
&\geq 1 - \bar{n}^{-\beta} - \textsum_i \Parg{B_i \mid A(\C_{0,i})}.
\end{align*}
To prove our desired claim, it therefore suffices to show 
$$\Parg{B_i \mid A(\C_{0,i})} \leq (c_p+1) \bar{n}^{-\beta}$$ for each $i$.

Define $\bar{m} \defeq \maxarg{m,l}$ and $\beta'' \defeq
\beta\log(\bar{n})/\log(\bar{m})\leq\beta'$.
By assumption, 
$$
r \leq \frac{\rho_r m}{2\mu^2r\log^2(\bar{n})}
\leq \frac{\rho_r m(1-\epsilon/2)}{\mu^2r\log^2(\bar{m})}
\quad\text{and}\quad
r \leq \frac{\rho_rl\epsilon^2}{c\mu^2r\beta\log^2(2\bar{n})}
\leq \frac{\rho_r l(1-\epsilon/2)}{\mu^2r\log^2(\bar{m})}.$$
Hence, by Thm.~\ref{thm:rpca-noise} and the definitions of $\beta'$ and $\beta''$,
\begin{align*}
\Parg{B_i \mid A(\C_{0,i})}  &\leq \Parg{B_i \mid A(\C_{0,i}), s_i \leq
(1-\rho_s\beta'') ml} +  \Parg{s_i > (1-\rho_s\beta'') ml \mid A(\C_{0,i})} \\
&\leq c_p\bar{m}^{-\beta''} + \Parg{s_i > (1-\rho_s\beta'') ml }\\
&\leq c_p\bar{n}^{-\beta} + \Parg{s_i > (1-\rho_s\beta') ml },
\end{align*}
where $s_i$ is the number of corrupted entries in $\C_{0,i}$.
Further, since
the support of $\S_0$ is uniformly distributed and of cardinality $s$, the
variable $s_i$ has a hypergeometric distribution with $\mE(s_i) = \frac{sl}{n}$
and hence satisfies Bernstein's inequality for the
hypergeometric~\cite[Sec.~6]{Hoeffding63}: $$\Parg{s_i \geq \mE(s_i) + st} \leq
\exp{-st^2/(2\sigma^2+2t/3)} \leq \exp{-st^2n/4l},$$ for all $0\leq t \leq
3l/n$ and $\sigma^2 \defeq \frac{l}{n}(1-\frac{l}{n}) \leq \frac{l}{n}$.  It
therefore follows that	
\begin{align*}
\Parg{s_i > (1-\rho_s\beta') ml } &= \Parg{s_i > \mE(s_i) +
s\left(\frac{(1-\rho_s\beta') ml}{s} -\frac{l}{n}\right)}\\
&= \Parg{s_i > \mE(s_i) +
s\frac{l}{n}\left(\frac{(1-\rho_s\beta')}{(1-\rho_s\beta_s)} -1\right)}\\
&\leq \exp{-s\frac{l}{4n}\left(\frac{(1-\rho_s\beta')}{(1-\rho_s\beta_s)}
-1\right)^2}\\	
&= \exp{-\frac{ml}{4}\frac{(\rho_s\beta_s -\rho_s\beta')^2}{(1-\rho_s\beta_s)}}
\leq \bar{n}^{-\beta}
\end{align*}
by our assumptions on $s$ and $l$ and the fact that
$\frac{l}{n}\left(\frac{(1-\rho_s\beta')}{(1-\rho_s\beta_s)} -1\right) \leq
3l/n$ whenever $4\beta_s-3/\rho_s\leq \beta'$.  Hence, $\Parg{B_i \mid
A(\C_{0,i})} \leq (c_p+1)\bar{n}^{-\beta}$ for each $i$, and the \projmf result
follows.

Since, $p \geq 242\ r\log(14\bar{n}^{\beta})/\epsilon^2$, the \rpmf bound follows in an identical manner
from the Coherence Master Theorem (Thm.~\ref{thm:master}).

\subsection{Proof of \gnysmf Bound}
For \gnysmf, let $B_C$ be the event that $\norm{\C_{0} - \hat \C}_F >
c_e'\sqrt{ml}\Delta$ and $B_R$ be the event that $\norm{\R_{0} - \hat \R}_F >
c_e'\sqrt{dn}\Delta$.  
The Coherence Master Theorem (Thm.~\ref{thm:master}) and our choice of 
$d\geq cl\mu_0(\hat\C)\beta\log^2(4\bar{n})/\epsilon^2$ guarantee that,
with probability at least $(1 - \bar{n}^{-\beta})(1 - \bar{n}^{-\beta} - 0.2) \geq 1-2\bar{n}^{-\beta} - 0.2$, 
$$\norm{\L_0 - \Lgnysmf}_F \leq (2+3\epsilon)\sqrt{\norm{\C_0-\hat\C}_F^2+\norm{\R_0-\hat\R}_F^2},$$ 
and both $A(\C)$ and $A(\R)$ hold.
Moreover, since 
\begin{align*}
d &\geq cl\mu_0(\hat\C)\beta\log^2(4\bar{n})/\epsilon^2
\geq \textstyle{c\mu^2 r^2\beta\log^2(\bar{n})}/(\epsilon^2\rho_r),
\end{align*}
reasoning identical to the \projmf case yields $\Parg{B_C \mid
A(\C)} \leq  (c_p+1)\bar{n}^{-\beta}$  and $\Parg{B_R \mid A(\R)}
\leq (c_p+1)\bar{n}^{-\beta}$, and the \gnysmf bound 
follows as above.

\section{Proof of Theorem~\lowercase{\ref{thm:convex-mc-noise}}: Noisy MC under Incoherence}
\label{sec:convex-mc-noise}
In the spirit of \citet{CandesPl10}, our proof will extend the noiseless
analysis of \citet{Recht11} to the noisy matrix completion setting.  As
suggested in \citet{GrossNe10}, we will obtain strengthened results, even in the
noiseless case, by reasoning directly about the without-replacement sampling
model, rather than appealing to a with-replacement surrogate, as done in
\citet{Recht11}.

For $\U_{L_0}\mSigma_{L_0}\V_{L_0}^\top$ the compact SVD of $\L_0$, we let $T =
\{\U_{L_0}\X + \Y\V_{L_0}^\top : \X\in\reals^{r\times n},\Y \in\reals^{m\times
r} \}$, $\proj_T$ denote orthogonal projection onto the space $T$, and
$\proj_{T^\bot}$ represent orthogonal projection onto the orthogonal complement
of $T$.  We further define $\mathcal{I}$ as the identity operator on
$\reals^{m\times n}$ and the spectral norm of an operator $\mc{A}:
\reals^{m\times n} \rightarrow \reals^{m\times n}$ as $\norm{\mc{A}}_2 =
\sup_{\norm{\X}_F\leq 1} \norm{\mc{A}(\X)}_F$.

We begin with a theorem providing sufficient conditions for our desired
estimation guarantee.
\begin{theorem}
\label{thm:suff-mc-noise}
Under the assumptions of Thm.~\ref{thm:convex-mc-noise}, suppose that
\begin{align}
\label{eq:A1}
\displaystyle\frac{mn}{s}\flexnorm{\proj_T\obsproj\proj_T -
\frac{s}{mn}\proj_T}_2\leq \half
\end{align}
and that there exists a $\Y=\obsproj(\Y)\in\reals^{m\times n}$ satisfying
\begin{align}
\label{eq:A2}
\norm{\proj_T(\Y) - \U_{L_0}\V_{L_0}^\top}_F \leq\sqrt{\frac{s}{32mn}}\quad
\text{and}\quad \norm{\proj_{T^\bot}(\Y)}_2 < \half.
\end{align}
Then, $$\norm{\L_0 - \hat{\L}}_F \leq 8\sqrt{\frac{2m^2n}{s}+m+\frac{1}{16}
}\Delta \leq c_e''\sqrt{mn}\Delta.$$
\end{theorem}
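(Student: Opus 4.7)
The plan is to carry out the standard dual-certificate analysis of noisy nuclear-norm matrix completion from~\citet{CandesPl10}, using the hypotheses~\eqref{eq:A1} and~\eqref{eq:A2} directly. Set $\H \defeq \hat\L - \L_0$. Feasibility of both $\L_0$ and $\hat\L$ for~\eqref{eqn:convex-mc-noise} together with the triangle inequality give $\norm{\obsproj(\H)}_F \leq 2\Delta$. Decompose $\H = \proj_T(\H)+\proj_{T^\bot}(\H)$ and write $\alpha\defeq\norm{\proj_T(\H)}_F$, $\beta\defeq\norm{\proj_{T^\bot}(\H)}_*$, so that $\norm{\H}_F^2 \leq \alpha^2+\beta^2$; the goal is to bound each of $\alpha$ and $\beta$ by a multiple of $\Delta$.

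The first main step is to extract a bound on $\beta$ from the optimality $\norm{\hat\L}_* \leq \norm{\L_0}_*$. Using the nuclear-norm subgradient $\U_{L_0}\V_{L_0}^\top + \W$ with $\W\in T^\bot$, $\norm{\W}_2\leq 1$, and $\langle\W,\proj_{T^\bot}(\H)\rangle = \beta$ yields $\beta \leq -\langle \U_{L_0}\V_{L_0}^\top,\H\rangle$. Introducing $\Y = \obsproj(\Y)$ and using that $\proj_T(\Y)-\U_{L_0}\V_{L_0}^\top \in T$ together with the self-adjointness of $\proj_T$ and $\obsproj$, one rewrites
\begin{align*}
-\langle \U_{L_0}\V_{L_0}^\top,\H\rangle
= \langle \proj_T(\Y)-\U_{L_0}\V_{L_0}^\top,\proj_T(\H)\rangle
- \langle \Y,\obsproj(\H)\rangle
+ \langle \proj_{T^\bot}(\Y),\proj_{T^\bot}(\H)\rangle.
\end{align*}
Cauchy--Schwarz on the first two terms, Schatten-norm H\"older duality on the third, and the two estimates in~\eqref{eq:A2} give $\beta \leq \sqrt{s/(32mn)}\,\alpha + 2\norm{\Y}_F\Delta + \half\beta$, hence $\beta \leq \sqrt{s/(8mn)}\,\alpha + 4\norm{\Y}_F\Delta$.

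The second main step uses~\eqref{eq:A1} as a near-isometry on $T$: the spectral-norm bound implies $\norm{\obsproj\proj_T(\H)}_F^2 = \langle \proj_T(\H),\proj_T\obsproj\proj_T(\H)\rangle \geq (s/(2mn))\,\alpha^2$, and $\obsproj(\H) = \obsproj\proj_T(\H)+\obsproj\proj_{T^\bot}(\H)$ together with $\norm{\obsproj(\X)}_F\leq\norm{\X}_F\leq\norm{\X}_*$ gives $\norm{\obsproj\proj_T(\H)}_F \leq 2\Delta+\beta$. Combining, $\alpha \leq \sqrt{2mn/s}\,(2\Delta+\beta)$; substituting the $\beta$-bound and using the cancellation $\sqrt{2mn/s}\cdot\sqrt{s/(8mn)} = \half$ absorbs a $\half\alpha$ on the right, producing $\alpha \leq (4+8\norm{\Y}_F)\sqrt{2mn/s}\,\Delta$ and then $\beta\leq (2+8\norm{\Y}_F)\Delta$. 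The final ingredient is a bound on $\norm{\Y}_F$ coming only from~\eqref{eq:A2}: $\norm{\proj_T(\Y)}_F \leq \sqrt{r}+\sqrt{s/(32mn)}$ by the triangle inequality, and since every element of $T^\bot$ has rank at most $\minarg{m,n}=m$, $\norm{\proj_{T^\bot}(\Y)}_F \leq \sqrt{m}\,\norm{\proj_{T^\bot}(\Y)}_2 < \sqrt{m}/2$. Substituting into $\alpha^2+\beta^2$ and carefully tracking constants yields $\norm{\H}_F \leq 8\sqrt{2m^2n/s+m+1/16}\,\Delta$.

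I expect the main obstacle to be the precise constant tracking needed to land the leading $8$ and the inner $1/16$. The critical algebraic moves are the absorption of the $\half\alpha$ term (enabled by the identity $\sqrt{2mn/s}\cdot\sqrt{s/(8mn)} = \half$) and the conversion of the operator-norm hypothesis $\norm{\proj_{T^\bot}(\Y)}_2 < \half$ into the Frobenius bound $\norm{\proj_{T^\bot}(\Y)}_F < \sqrt{m}/2$ via the rank constraint on $T^\bot$, which is what ultimately produces the $\sqrt{m}$ scale inside the final square root. Everything else is a direct consequence of~\eqref{eq:A1},~\eqref{eq:A2}, and the optimality of $\hat\L$.
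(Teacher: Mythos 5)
Your derivation is sound as far as it goes, but the final sentence---``substituting into $\alpha^2+\beta^2$ and carefully tracking constants yields $\norm{\H}_F \leq 8\sqrt{2m^2n/s+m+1/16}\,\Delta$''---is where the proof breaks. That constant does not come out of your chain of inequalities. Your bounds are $\alpha \leq (4+8\norm{\Y}_F)\sqrt{2mn/s}\,\Delta$ and $\beta \leq (2+8\norm{\Y}_F)\Delta$, so the numerical coefficient of $\Delta^2$ inside the square root is $(4+8\norm{\Y}_F)^2\,(2mn/s) + (2+8\norm{\Y}_F)^2$, and this must be compared against $64\bigl(2m^2n/s + m + 1/16\bigr)$. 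Your bound $\norm{\Y}_F \leq \sqrt{(\sqrt r+\sqrt{s/(32mn)})^2 + m/4}$ makes the coefficient of $\alpha^2$ roughly $(4 + 4\sqrt m + 8\sqrt r)^2\,(2mn/s)$, which does not dominate $128m^2n/s$ for small $m$ (already at $m=n=1$, $r=1$, $s=1$ your bound gives about $23.6\Delta$ versus the theorem's roughly $14\Delta$), and it picks up an unwanted additive $\sqrt r$ contribution that the stated constant simply does not contain. So the asserted constant is not a ``careful tracking'' endgame of your argument---it is false for your argument.

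The missing idea is the paper's decomposition of the error into on-$\Omega$ and off-$\Omega$ pieces: write $\hat\L = \L_0 + \G + \H$ with $\obsproj(\G)=\G$ and $\obsproj(\H)=\mathbf 0$, rather than working with the single difference $\hat\L - \L_0$. Because $\Y = \obsproj(\Y)$ is supported on $\Omega$ while $\H$ is supported off $\Omega$, you get $\langle\Y,\H\rangle = 0$ \emph{exactly}. In the dual-certificate expansion this kills precisely the term $-\langle\Y,\obsproj(\H)\rangle$ that forced you to introduce and bound $\norm{\Y}_F$; the certificate $\Y$ then enters only through the two quantities controlled directly by \eqref{eq:A2}. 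One shows $\norm{\L_0+\H}_* \geq \norm{\L_0}_* + \frac14\norm{\proj_{T^\bot}(\H)}_F$, pairs this with $\norm{\L_0}_* \geq \norm{\hat\L}_* \geq \norm{\L_0+\H}_* - \norm{\G}_*$ to get $\norm{\proj_{T^\bot}(\H)}_F \leq 4\norm{\G}_*$, and then uses the Frobenius-orthogonality $\norm{\L_0-\hat\L}_F^2 = \norm{\proj_T(\H)}_F^2 + \norm{\proj_{T^\bot}(\H)}_F^2 + \norm{\G}_F^2$ together with the $\obsproj(\H)=\mathbf 0$ identity (which gives $\sqrt{s/(2mn)}\,\norm{\proj_T(\H)}_F \leq \norm{\proj_{T^\bot}(\H)}_F$) and $\norm{\G}_* \leq \sqrt m\,\norm{\G}_F$, $\norm{\G}_F \leq 2\Delta$, to land exactly on $64(2m^2n/s + m + 1/16)\Delta^2$. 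Without that decomposition, the explicit constant of the theorem is out of reach given only \eqref{eq:A1} and \eqref{eq:A2}.
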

\begin{proof}
We may write $\hat{\L}$ as $\L_0 + \G + \H$, where $\obsproj(\G) = \G$ and
$\obsproj(\H) = \bf{0}$.  Then, under \eqref{eq:A1},
$$\norm{\obsproj\proj_{T}(\H)}_F^2 = \<\H,\proj_T\obsproj^2\proj_T(\H)\> \geq
\<\H,\proj_T\obsproj\proj_T(\H)\> \geq \frac{s}{2mn}\norm{\proj_T(\H)}^2_F.$$
Furthermore, by the triangle inequality, $0 = \norm{\obsproj(\H)}_F \geq
\norm{\obsproj\proj_T(\H)}_F - \norm{\obsproj\proj_{T^\bot}(\H)}_F.$ Hence, we
have
\begin{align}
\label{eq:bound-fro-T}
\sqrt{\frac{s}{2mn}}\norm{\proj_T(\H)}_F\leq \norm{\obsproj\proj_{T}(\H)}_F
\leq\norm{\obsproj\proj_{T^\bot}(\H)}_F \leq \norm{\proj_{T^\bot}(\H)}_F \leq
\norm{\proj_{T^\bot}(\H)}_*,
\end{align}
where the penultimate inequality follows as $\obsproj$ is an orthogonal
projection operator.
	 
Next we select $\U_{\bot}$ and $\V_{\bot}$ such that $[\U_{L_0},\U_{\bot}]$ and
$[\V_{L_0},\V_{\bot}]$ are orthonormal and
$\<\U_{\bot}\V_{\bot}^\top,\proj_{T^{\bot}}(\H)\>=\norm{\proj_{T^\bot}(\H)}_*$
and note that
\begin{align*}
\norm{\L_0+\H}_* &\geq \<\U_{L_0}\V_{L_0}^\top + \U_\bot\V_\bot^\top, \L_0+\H\> \\
&= \norm{\L_0}_* + \<\U_{L_0}\V_{L_0}^\top + \U_\bot\V_\bot^\top - \Y, \H\> \\
&= \norm{\L_0}_* + \<\U_{L_0}\V_{L_0}^\top - \proj_{T}(\Y), \proj_{T}(\H)\> +
\<\U_\bot\V_\bot^\top,\proj_{T^\bot}(\H)\>  - \<\proj_{T^\bot}(\Y),
\proj_{T^\bot}(\H)\> \\
&\geq \norm{\L_0}_* -\norm{\U_{L_0}\V_{L_0}^\top - \proj_{T}(\Y)}_F
\norm{\proj_{T}(\H)}_F+\norm{\proj_{T^\bot}(\H)}_* -
\norm{\proj_{T^\bot}(\Y)}_2\norm{\proj_{T^\bot}(\H)}_*\\
&> \norm{\L_0}_* + \half\norm{\proj_{T^\bot}(\H)}_*-
\sqrt{\frac{s}{32mn}}\norm{\proj_{T}(\H)}_F\\
&\geq \norm{\L_0}_* + \frac{1}{4}\norm{\proj_{T^\bot}(\H)}_F
\end{align*}  
where the first inequality follows from the variational representation of the
trace norm, $\norm{\A}_* = \sup_{\norm{\B}_2\leq1}\<\A,\B\>$, the first
equality follows from the fact that $\<\Y,\H\> = 0$ for $\Y= \obsproj(\Y)$, the
second inequality follows from H\"older's inequality for Schatten $p$-norms,
the third inequality follows from \eqref{eq:A2}, and the final inequality
follows from \eqref{eq:bound-fro-T}.
	 
Since $\L_0$ is feasible for \eqref{eqn:convex-mc-noise},
$\norm{\L_0}_*\geq\norm{\hat{\L}}_*$, and, by the triangle inequality,
$\norm{\hat{\L}}_* \geq \norm{\L_0+\H}_* - \norm{\G}_*$.  Since $\norm{\G}_*
\leq \sqrt{m}\norm{\G}_F$ and $\norm{\G}_F \leq
\norm{\obsproj(\hat{\L}-\M)}_F+\norm{\obsproj(\M-\L_0)}_F\leq 2\Delta$, we
conclude that
\begin{align*}
\norm{\L_0 - \hat{\L}}_F^2 &=
\norm{\proj_{T}(\H)}_F^2+\norm{\proj_{T^\bot}(\H)}_F^2 + \norm{\G}_F^2 \\
&\leq \left(\frac{2mn}{s}+1\right)\norm{\proj_{T^\bot}(\H)}_F^2 + \norm{\G}_F^2 \\
&\leq 16\left(\frac{2mn}{s}+1\right)\norm{\G}_*^2 + \norm{\G}_F^2\\
&\leq 64\left(\frac{2m^2n}{s}+m+\frac{1}{16}\right) \Delta^2.
\end{align*}
Hence $$\norm{\L_0 - \hat{\L}}_F\leq 8\sqrt{\frac{2m^2n}{s}+m+\frac{1}{16}
}\Delta \leq c_e''\sqrt{mn}\Delta$$
for some constant $c_e''$, by our assumption on $s$.
\end{proof}

To show that the sufficient conditions of Thm.~\ref{thm:suff-mc-noise} hold
with high probability,  we will require four lemmas.  The first establishes
that the operator $\proj_T\obsproj\proj_T$ is nearly an isometry on $T$ when
sufficiently many entries are sampled.
\begin{lemma}
\label{lem:mc-isometry}
For all $\beta > 1,$ $$\frac{mn}{s}\flexnorm{\proj_T\obsproj\proj_T -
\frac{s}{mn}\proj_T}_2 \leq \sqrt{\frac{16\mu r(m+n)\beta\log(n)}{3s}}$$ with
probability at least $1-2n^{2-2\beta}$ provided that $s > \frac{16}{3}\mu r
(n+m) \beta \log(n)$.
\end{lemma}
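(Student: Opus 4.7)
The plan is to realize $\proj_T\obsproj\proj_T - \frac{s}{mn}\proj_T$ as a centered sum of independent (or without-replacement) rank-one random operators on $T$, and then apply an operator Bernstein-type tail inequality.  First, I would decompose $\obsproj = \sum_{k=1}^{s} \mc{E}_{i_kj_k}$, where $(i_k,j_k)$ are the $s$ uniformly sampled entry locations and $\mc{E}_{ij}(\X)=\langle \evec_i\evec_j^\top,\X\rangle \evec_i\evec_j^\top$.  Each summand $\proj_T\mc{E}_{ij}\proj_T$ is a rank-one PSD operator on $T$ with operator norm equal to $\|\proj_T(\evec_i\evec_j^\top)\|_F^2$.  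Using $\proj_T(\evec_i\evec_j^\top) = (\proj_{U_{L_0}}\evec_i)\evec_j^\top + \evec_i(\proj_{V_{L_0}}\evec_j)^\top - (\proj_{U_{L_0}}\evec_i)(\proj_{V_{L_0}}\evec_j)^\top$ together with the $(\mu,r)$-coherence bounds $\|\proj_{U_{L_0}}\evec_i\|^2\leq \mu r/m$ and $\|\proj_{V_{L_0}}\evec_j\|^2\leq \mu r/n$, I would show that each summand has operator norm at most $R \defeq \mu r(m+n)/(mn)$.

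Second, I would verify the variance estimate.  Since each pair $(i,j)$ lies in $\Omega$ with marginal probability $s/(mn)$, one has $\mE[\proj_T\obsproj\proj_T] = \frac{s}{mn}\proj_T$.  Setting $Z_k \defeq \proj_T\mc{E}_{i_kj_k}\proj_T - \frac{1}{mn}\proj_T$, the bound $\mE[(\proj_T\mc{E}_{i_kj_k}\proj_T)^2] \preceq R\,\mE[\proj_T\mc{E}_{i_kj_k}\proj_T]$ (a consequence of each summand being rank-one with spectral radius at most $R$) yields the variance proxy $\sigma^2 \defeq \|\sum_k\mE[Z_k^2]\|_2 \leq sR/(mn)$.

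The main obstacle is that the locations are sampled uniformly \emph{without} replacement, so the $Z_k$ are not independent.  I would overcome this by invoking the observation of \citet{GrossNe10}, which shows that the Laplace-transform bound underlying matrix Bernstein transfers without loss from with-replacement to without-replacement sampling from a finite matrix ensemble; alternatively the reduction of \citet[Sec.~6]{Hoeffding63} may be used on the MGF.  With this in hand, I would apply the operator Bernstein inequality to $\sum_k Z_k$ with deviation $t = \frac{s}{mn}\sqrt{16\mu r(m+n)\beta\log(n)/(3s)}$ and effective dimension $\rank(\proj_T)\leq r(m+n)\leq 2rn$.

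Finally, I would verify that the hypothesis $s > \frac{16}{3}\mu r(m+n)\beta\log(n)$ forces $Rt \leq \sigma^2$, which places us in the sub-Gaussian regime where the variance term dominates in the Bernstein exponent $-t^2/(2\sigma^2 + 2Rt/3)$; a direct computation then gives exponent at most $-\tfrac{8}{3}\beta\log(n)$.  Combining this with the dimension prefactor $2rn\leq 2n^2$ and using $\beta>1$, the resulting failure probability is bounded by $2rn\cdot n^{-8\beta/3} \leq 2n^{2-2\beta}$, which is the stated bound.
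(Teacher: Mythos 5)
Your approach is essentially the same as the one the paper relies on. The paper proves this lemma by citation: Lemma~\ref{lem:mc-isometry} is exactly \citet[Thm.~6]{Recht11}, proved there under with-replacement sampling, with the remark (using \citet{GrossNe10}) that the noncommutative Bernstein inequality underlying that proof transfers to without-replacement sampling. Your proposal reconstructs the inside of Recht's proof rather than citing it: the decomposition of $\obsproj$ into rank-one sampling operators, the coherence bound
$\|\proj_T(\evec_i\evec_j^\top)\|_F^2 = \|\P_{U_{L_0}}\evec_i\|^2+\|\P_{V_{L_0}}\evec_j\|^2-\|\P_{U_{L_0}}\evec_i\|^2\|\P_{V_{L_0}}\evec_j\|^2 \le \mu r(m+n)/(mn)$
for the operator norm of each rank-one summand, the second-moment bound $\sigma^2\le sR/(mn)$ coming from $(\proj_T\mc{E}_{ij}\proj_T)^2 \preceq R\,\proj_T\mc{E}_{ij}\proj_T$, and the matrix Bernstein tail. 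You also correctly flag the without-replacement issue and invoke \citet{GrossNe10} (the paper's route) or the Hoeffding Sec.~6 reduction (the route the paper uses for the other concentration steps) as the fix. These are all the right ingredients and in the right order, so the plan would go through.

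One arithmetic slip worth noting: with $t^2/\sigma^2 = \tfrac{16}{3}\beta\log n$ and $Rt\le\sigma^2$ (which your hypothesis on $s$ guarantees), the Bernstein exponent $-t^2/(2\sigma^2+2Rt/3)$ is bounded above by $-t^2/(\tfrac{8}{3}\sigma^2) = -\tfrac{3}{8}\cdot\tfrac{16}{3}\beta\log n = -2\beta\log n$, not $-\tfrac{8}{3}\beta\log n$ as you wrote. This is a harmless overclaim: with the dimension prefactor bounded by $2rn \le 2n^2$ (using $m\le n$, $r\le n$), the correct exponent $-2\beta\log n$ still gives failure probability at most $2rn\cdot n^{-2\beta}\le 2n^{2-2\beta}$, which is exactly the stated bound. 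So the conclusion stands after fixing the constant.
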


The second states that a sparsely but uniformly observed matrix is close to a
multiple of the original matrix under the spectral norm.
\begin{lemma}
\label{lem:mc-spec}
Let $\Z$ be a fixed matrix in $\reals^{m\times n}$.  Then for all $\beta > 1,$
$$\flexnorm{\left(\frac{mn}{s}\obsproj - \mc{I}\right)(\Z)}_2 \leq
\sqrt{\frac{8\beta mn^2\log(m+n)}{3s}}\norm{\Z}_\infty$$ with probability at
least $1 - (m+n)^{1-\beta}$ provided that $s > 6\beta m \log(m + n).$
\end{lemma}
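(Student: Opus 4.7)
The strategy is to apply a non-commutative matrix Bernstein inequality to a linear decomposition of $\left(\tfrac{mn}{s}\obsproj - \mc{I}\right)(\Z)$. Enumerating the sampled locations as $(i_1,j_1),\ldots,(i_s,j_s)$, I would define
\begin{equation*}
	\X_k \defeq \tfrac{mn}{s}\Z_{i_k j_k}\evec_{i_k}\evec_{j_k}^\top - \tfrac{1}{s}\Z,
\end{equation*}
so that $\textsum_{k=1}^{s}\X_k = \left(\tfrac{mn}{s}\obsproj - \mc{I}\right)(\Z)$ and $\mE[\X_k] = 0$, since the marginal of each sampled location is uniform over $[m]\times[n]$. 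To bring i.i.d.\ concentration machinery to bear on the without-replacement sampling model, I would first invoke Hoeffding's classical reduction (used throughout the excerpt, e.g., in Sec.~\ref{sec:regress-proof}): expectations of convex functionals of without-replacement sums are dominated by their with-replacement counterparts, so bounding the spectral-norm tail in the i.i.d.\ surrogate suffices.

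Next I would extract the two Bernstein parameters. For the almost-sure bound, the triangle inequality together with $\norm{\Z}_2\leq\sqrt{mn}\norm{\Z}_\infty$ gives $\norm{\X_k}_2 \leq (mn+\sqrt{mn})\norm{\Z}_\infty/s \leq \tfrac{3mn}{2s}\norm{\Z}_\infty \defeq R$ for $mn\geq 4$. For the variance, the key calculation is
\begin{equation*}
	\mE[\X_k\X_k^\top] \;\preceq\; \mE[A_k A_k^\top] \;=\; \tfrac{mn}{s^2}\,\mathrm{diag}\!\Big(\textsum_{j}\Z_{ij}^2\Big)_i, \qquad A_k \defeq \tfrac{mn}{s}\Z_{i_k j_k}\evec_{i_k}\evec_{j_k}^\top,
\end{equation*}
where the dropped PSD term is $\tfrac{1}{s^2}\Z\Z^\top$. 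The coordinate bound $\textsum_{j}\Z_{ij}^2 \leq n\norm{\Z}_\infty^2$ yields $\norm{\mE[\X_k\X_k^\top]}_2 \leq mn^2\norm{\Z}_\infty^2/s^2$, and summing across the $s$ i.i.d.\ samples produces $\norm{\textsum_k\mE[\X_k\X_k^\top]}_2 \leq v \defeq mn^2\norm{\Z}_\infty^2/s$. A symmetric calculation gives $\norm{\textsum_k\mE[\X_k^\top\X_k]}_2 \leq m^2n\norm{\Z}_\infty^2/s$; the first of these two is the dominant case that drives the $n^2$ factor in the lemma statement (the reverse regime is handled by swapping the roles of $m$ and $n$).

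I would then invoke Tropp's rectangular matrix Bernstein inequality,
\begin{equation*}
	\Parg{\norm{\textsum_k \X_k}_2 \geq t} \;\leq\; (m+n)\exp\!\left(\tfrac{-t^2/2}{v + Rt/3}\right),
\end{equation*}
with $t = \sqrt{8\beta mn^2\log(m+n)/(3s)}\,\norm{\Z}_\infty$. A short substitution gives $Rt = (3/2)\sqrt{(8/3)\beta m\log(m+n)/s}\,v$, so the hypothesis $s > 6\beta m\log(m+n)$ is tuned exactly so that $Rt \leq v$ (and hence $v + Rt/3 \leq (4/3)v$). Plugging this into the exponent yields $t^2/(2(v+Rt/3)) \geq 3t^2/(8v) = \beta\log(m+n)$, and multiplying by the prefactor $(m+n)$ delivers the claimed failure probability $(m+n)^{1-\beta}$.

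The main delicate point is not the high-level strategy---a textbook matrix Bernstein application---but the careful bookkeeping of constants. Specifically, the precise ratio $8/3$ in the stated tail forces one to (i)~sharpen $R$ to $\tfrac{3mn}{2s}\norm{\Z}_\infty$ rather than the looser $\tfrac{2mn}{s}\norm{\Z}_\infty$ by exploiting $mn\geq 4$, and (ii)~use the coordinate bound $\max_i\textsum_j\Z_{ij}^2\leq n\norm{\Z}_\infty^2$ rather than a weaker Frobenius surrogate in the variance estimate. The constant $6$ in the hypothesis is then dictated exactly by the equality $(9/4)(8/3) = 6$, making the argument self-consistent.
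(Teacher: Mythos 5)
Correct, and essentially the same route as the paper: the paper's proof of this lemma is simply a citation to \citet[Thm.~7]{Recht11} together with the observation that the noncommutative Bernstein inequality carries over to sampling without replacement as in \citet{GrossNe10}, and your matrix-Bernstein calculation is precisely Recht's argument. Your constant bookkeeping---the sharpened bound $R=3mn\norm{\Z}_\infty/(2s)$, the variance proxy $v=mn^2\norm{\Z}_\infty^2/s$, and the threshold $s>6\beta m\log(m+n)$ tuned so that $Rt\leq v$---reproduces the cited derivation faithfully.
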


The third asserts that the matrix infinity norm of a matrix in $T$ does not
increase under the operator $\proj_T\obsproj$.
\begin{lemma}
\label{lem:mc-infty}
Let $\Z \in T$ be a fixed matrix.  Then for all $\beta > 2$
$$\flexnorm{\frac{mn}{s}\proj_T\obsproj(\Z) - \Z}_\infty \leq
\sqrt{\frac{8\beta\mu r (m+n)\log(n)}{3s}}\norm{\Z}_\infty$$ with probability
at least $1-2n^{2-\beta}$ provided that $s> \frac{8}{3}\beta\mu r(m
+n)\log(n).$
\end{lemma}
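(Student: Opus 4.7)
}

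The plan is to bound each individual entry $\bigl[\tfrac{mn}{s}\proj_T\obsproj(\Z)-\Z\bigr]_{ab}$ via a scalar Bernstein inequality and then union-bound over all $mn$ entries. Fix an entry $(a,b)$. Using self-adjointness of $\proj_T$ and the hypothesis $\Z\in T$ (so $\proj_T(\Z)=\Z$), I will write
\begin{equation*}
\Bigl[\tfrac{mn}{s}\proj_T\obsproj(\Z)-\Z\Bigr]_{ab}
=\Bigl\langle\proj_T(\evec_a\evec_b^\top),\,\tfrac{mn}{s}\obsproj(\Z)-\Z\Bigr\rangle.
\end{equation*}
Because without-replacement sampling concentrates at least as well as with-replacement sampling (Hoeffding's reduction, cited earlier in the excerpt in Sec.~\ref{sec:regress-proof}), it suffices to analyze the with-replacement surrogate: let $(i_k,j_k)$, $k=1,\ldots,s$, be i.i.d.\ uniform on $[m]\times[n]$, define
\begin{equation*}
X_k \;=\; \tfrac{mn}{s}\,Z_{i_k j_k}\,\bigl[\proj_T(\evec_a\evec_b^\top)\bigr]_{i_k j_k}-\tfrac{1}{s}\langle\proj_T(\evec_a\evec_b^\top),\Z\rangle,
\end{equation*}
so that the quantity of interest equals $\sum_{k=1}^s X_k$ with $\mathbb{E}[X_k]=0$.

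Next I would control the two ingredients needed for Bernstein's inequality. For the uniform bound, I use the coherence-based estimate
$\bigl\|\proj_T(\evec_a\evec_b^\top)\bigr\|_F^{\,2}\leq \tfrac{2\mu r(m+n)}{mn}$,
which follows from expanding $\proj_T(\evec_a\evec_b^\top)=\proj_{U_{L_0}}\evec_a\evec_b^\top+\evec_a\evec_b^\top\proj_{V_{L_0}}-\proj_{U_{L_0}}\evec_a\evec_b^\top\proj_{V_{L_0}}$ and invoking $\mu_0(\U_{L_0}),\mu_0(\V_{L_0})\leq\mu$. This yields $|X_k|\leq R\defeq\tfrac{2\sqrt{\mu r(m+n)/(mn)}}{s}\cdot mn\,\|\Z\|_\infty$ after using $\|\cdot\|_\infty\leq\|\cdot\|_F$. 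For the variance, I compute
\begin{equation*}
\mathbb{E}[X_k^2]\;\leq\;\tfrac{(mn)^2}{s^2}\,\|\Z\|_\infty^{\,2}\,\mathbb{E}\bigl[\bigl[\proj_T(\evec_a\evec_b^\top)\bigr]_{i_k j_k}^{\,2}\bigr]
\;=\;\tfrac{mn}{s^2}\,\|\Z\|_\infty^{\,2}\,\bigl\|\proj_T(\evec_a\evec_b^\top)\bigr\|_F^{\,2}\;\leq\;\tfrac{2\mu r(m+n)}{s^2}\|\Z\|_\infty^{\,2},
\end{equation*}
so $\sigma^2\defeq\sum_k\mathbb{E}[X_k^2]\leq \tfrac{2\mu r(m+n)}{s}\|\Z\|_\infty^{\,2}$. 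Bernstein's inequality then gives, for $t=\sqrt{8\beta\mu r(m+n)\log(n)/(3s)}\,\|\Z\|_\infty$, a tail bound of the form $\exp(-t^2/(2\sigma^2+2Rt/3))\leq n^{-\beta}$, where the lower bound $s>\tfrac{8}{3}\beta\mu r(m+n)\log(n)$ is exactly what makes the Bernstein denominator variance-dominated and drives the calculation through.

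Finally I will take a union bound over all $mn\leq n^2$ entries $(a,b)$, which costs a factor of $n^2$ and converts the per-entry failure probability $n^{-\beta}$ into the stated $2n^{2-\beta}$. The main obstacle is bookkeeping: getting the constants inside Bernstein's inequality tight enough so that the $\sqrt{8\beta\mu r(m+n)\log(n)/(3s)}$ deviation matches the sampling threshold $s>\tfrac{8}{3}\beta\mu r(m+n)\log(n)$ (which ensures that the variance term dominates the range term in the Bernstein denominator, yielding the clean sub-Gaussian regime). Once that alignment is achieved, transfer from with-replacement to without-replacement sampling via Hoeffding's reduction is immediate.
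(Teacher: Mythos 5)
There is a genuine gap in your bound on the Bernstein range $R$, and it changes the sampling condition from the stated $s>\tfrac{8}{3}\beta\mu r(m+n)\log n$ to the vacuous $s\gtrsim\beta mn\log n$.

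Your setup is correct: the entrywise decomposition $[\tfrac{mn}{s}\proj_T\obsproj(\Z)-\Z]_{ab}=\langle\proj_T(\evec_a\evec_b^\top),\tfrac{mn}{s}\obsproj(\Z)-\Z\rangle$, the with-replacement surrogate, the identification of the variance $\sigma^2\lesssim\tfrac{\mu r(m+n)}{s}\|\Z\|_\infty^2$, and the union bound over $mn\le n^2$ entries are all the right moves (and in fact this is precisely the structure of the proof of Lem.~8 in Recht~(2011), which the paper cites rather than re-proving; the paper handles without-replacement sampling by pointing to the Gross--Nesme remark that the noncommutative Bernstein inequality survives that sampling model, which subsumes your Hoeffding reduction at the scalar level). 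The problem is the line $|X_k|\leq R$ where you pass from $\|\proj_T(\evec_a\evec_b^\top)\|_\infty$ to $\|\proj_T(\evec_a\evec_b^\top)\|_F$. That step is far too lossy: it yields $R\approx\tfrac{\sqrt{\mu r(m+n)\,mn}}{s}\|\Z\|_\infty$, whereas the correct order is $R\approx\tfrac{\mu r(m+n)}{s}\|\Z\|_\infty$. To see the right bound, use the self-adjoint idempotence of $\proj_T$ together with Cauchy--Schwarz on the matrix inner product:
\begin{equation*}
\bigl|[\proj_T(\evec_a\evec_b^\top)]_{ij}\bigr|
=\bigl|\bigl\langle\proj_T(\evec_a\evec_b^\top),\,\proj_T(\evec_i\evec_j^\top)\bigr\rangle\bigr|
\le\bigl\|\proj_T(\evec_a\evec_b^\top)\bigr\|_F\bigl\|\proj_T(\evec_i\evec_j^\top)\bigr\|_F
\le\frac{\mu r(m+n)}{mn},
\end{equation*}
so the infinity norm is the \emph{square} of the Frobenius norm, not merely bounded by it. (Also, your Frobenius estimate has a spurious factor of $2$: the cross term $(\P_{U_{L_0}})_{aa}(\P_{V_{L_0}})_{bb}$ is nonnegative and is subtracted, so $\|\proj_T(\evec_a\evec_b^\top)\|_F^2\le\tfrac{\mu r(m+n)}{mn}$.)

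Why this matters and is not mere bookkeeping: in Bernstein's inequality the deviation $t$ must satisfy $t\lesssim\sigma^2/R$ for the tail to be variance-dominated. With the correct $R\approx\tfrac{\mu r(m+n)}{s}\|\Z\|_\infty$, one gets $\sigma^2/R\approx\|\Z\|_\infty$ (a constant multiple), so the target $t=\sqrt{\tfrac{8\beta\mu r(m+n)\log n}{3s}}\|\Z\|_\infty$ is admissible exactly when $s\gtrsim\beta\mu r(m+n)\log n$, which is the stated threshold. With your $R$, one gets $\sigma^2/R\approx\sqrt{\tfrac{\mu r(m+n)}{mn}}\|\Z\|_\infty$, and the admissibility condition becomes $s\gtrsim\beta\,mn\log n$---you would need to observe essentially every entry (up to logs), defeating the purpose of the lemma in the incoherent regime $\mu r\ll\min\{m,n\}$. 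So this is a gap in the exponent, not the constant; the fix is to replace the $\|\cdot\|_\infty\le\|\cdot\|_F$ step with the direct entrywise coherence estimate above.
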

These three lemmas were proved in \citet[Thm.~6, Thm.~7, and
Lem.~8]{Recht11} under the assumption that entry locations in $\obsset$ were
sampled \emph{with} replacement.  They admit identical proofs under the
sampling without replacement model by noting that the referenced Noncommutative
Bernstein Inequality \cite[Thm.~4]{Recht11} also holds under sampling without
replacement, as shown in \citet{GrossNe10}. 

Lem.~\ref{lem:mc-isometry} guarantees that \eqref{eq:A1} holds with high
probability.  To construct a matrix $\Y = \obsproj(\Y)$ satisfying
\eqref{eq:A2}, we consider a sampling with batch replacement scheme recommended
in \citet{GrossNe10} and developed in \citet{ChenXuCaSa11}.  Let $\batchset_1,
\ldots, \batchset_p$ be independent sets, each consisting of $q$ random entry
locations sampled without replacement, where $pq=s$.  Let $\batchset =
\cup_{i=1}^p \batchset_i$, and note that there exist $p$ and $q$ satisfying
$$q\geq \frac{128}{3}\mu r(m+n)\beta\log(m+n)\quad\text{and}\quad
p\geq\frac{3}{4}\log(n/2).$$ It suffices to establish \eqref{eq:A2} under this
batch replacement scheme, as shown in the next lemma.

\begin{lemma}
For any location set $\obsset_0\subset\{1,\ldots,m\}\times\{1,\ldots,n\}$, let
$A(\obsset_0)$ be the event that there exists
$\Y=\proj_{\obsset_0}(\Y)\in\reals^{m\times n}$ satisfying \eqref{eq:A2}.  If
$\obsset(s)$ consists of $s$ locations sampled uniformly without replacement
and $\batchset(s)$ is sampled via batch replacement with $p$ batches of size
$q$ for $pq=s$, then $\P(A(\batchset(s))) \leq \Parg{A(\obsset(s))}$.
\end{lemma}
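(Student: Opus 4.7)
The plan is to prove this via a monotonicity-plus-coupling argument. I first observe that the event $A(\obsset_0)$ is monotone in the observation set: if $\obsset_0\subseteq\obsset_0'$ and $A(\obsset_0)$ holds with witness $\Y = \proj_{\obsset_0}(\Y)$ satisfying \eqref{eq:A2}, then the support of $\Y$ is automatically contained in $\obsset_0'$, so $\Y = \proj_{\obsset_0'}(\Y)$ and $A(\obsset_0')$ holds with the same witness (note that only the constraint $\Y = \proj_{\obsset_0}(\Y)$ depends on $\obsset_0$; the bounds in \eqref{eq:A2} do not). Thus it suffices to couple $\batchset(s)$ and $\obsset(s)$ so that $\batchset(s)\subseteq\obsset(s)$ almost surely, since then $A(\batchset(s))\Rightarrow A(\obsset(s))$ pointwise.

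Next, I would exploit two features of the batch-replacement construction. First, a cardinality bound: $|\batchset(s)|\leq pq = s$, with strict inequality exactly when two batches collide. Second, a symmetry: since each $\batchset_i$ is an independent uniformly random size-$q$ subset, the joint distribution of $(\batchset_1,\ldots,\batchset_p)$ is invariant under any permutation of the $mn$ matrix entries, and hence so is the distribution of the union $\batchset(s)$. Consequently, conditioned on $|\batchset(s)|=s'$, the set $\batchset(s)$ is uniform over size-$s'$ subsets of $\{1,\ldots,m\}\times\{1,\ldots,n\}$.

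With these ingredients I would construct the coupling explicitly: draw $\batchset(s)$ via batch-replacement, then form $\tilde\obsset$ by adjoining $s - |\batchset(s)|$ additional locations sampled uniformly without replacement from the complement of $\batchset(s)$. Since, conditional on $|\batchset(s)|=s'$, the set $\batchset(s)$ is uniform over size-$s'$ subsets and the completion step is uniform over size-$(s-s')$ subsets of the complement, a short summation over $s'$ (weighted by $\Parg{|\batchset(s)|=s'}$) confirms that $\tilde\obsset$ is uniform over size-$s$ subsets and therefore distributed as $\obsset(s)$. By construction $\batchset(s)\subseteq\tilde\obsset$, so monotonicity gives $\Parg{A(\batchset(s))} \leq \Parg{A(\tilde\obsset)} = \Parg{A(\obsset(s))}$.

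The only mildly technical step is verifying the marginal of $\tilde\obsset$; this reduces to checking that for any fixed size-$s$ set $\obsset_0$, $\Parg{\tilde\obsset = \obsset_0}=\sum_{k=0}^{s}\binom{s}{k}q_k\binom{mn-k}{s-k}^{-1}$, where $q_k$ is the common probability that $\batchset(s)$ equals any particular size-$k$ set (well-defined by the symmetry above). The right-hand side depends only on $s$ and not on $\obsset_0$, which gives the required uniformity of $\tilde\obsset$. Beyond this bookkeeping, no further estimates are needed, which is what makes the coupling strategy clean.
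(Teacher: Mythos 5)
Your proof is correct and rests on the same two ingredients the paper uses: monotonicity of $A(\cdot)$ under set inclusion, and the fact that, conditioned on its size, $\batchset(s)$ is uniform over sets of that size. The paper packages the same argument as a law-of-total-probability sum over $|\batchset(s)|$ (using $\Parg{A(\obsset(i))} \le \Parg{A(\obsset(s))}$ for $i\le s$) rather than an explicit coupling, so the two presentations are essentially equivalent.
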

\begin{proof}
As sketched in \citet{GrossNe10}
\begin{align*}
\Parg{A(\tilde{\obsset(s)})} &= \sum_{i=1}^s\P(|\batchset|=i)\P(A(\batchset(i))
\mid |\batchset|=i)\\
&\leq \sum_{i=1}^s\P(|\batchset|=i)\Parg{A(\obsset(i))}\\ 
&\leq \sum_{i=1}^s\P(|\batchset|=i)\Parg{A(\obsset(s))} = \Parg{A(\obsset(s))},
\end{align*}
since the probability of existence never decreases with more entries sampled
without replacement and, given the size of $\batchset$, the locations of
$\batchset$ are conditionally distributed uniformly (without replacement).
\end{proof}

We now follow the construction of \citet{Recht11} to obtain $\Y =
\batchproj{}(\Y)$ satisfying \eqref{eq:A2}.  Let $\W_0 = \U_{L_0}\V_{L_0}^\top$
and define $\Y_k = \frac{mn}{q}\sum_{j=1}^k\batchproj{j}(\W_{j-1})$ and
$\W_k=\U_{L_0}\V_{L_0}^\top - \proj_T(\Y_k)$ for $k = 1,\ldots,p$.  Assume that 
\begin{align}
\label{eq:B1}
\frac{mn}{q}\flexnorm{\proj_T\batchproj{k}\proj_T - \frac{q}{mn}\proj_T}_2 \leq \half
\end{align}
for all $k$.  Then $$\norm{\W_k}_F =
\flexnorm{\W_{k-1}-\frac{mn}{q}\proj_T\batchproj{k}(\W_{k-1})}_F =
\flexnorm{(\proj_T - \frac{mn}{q}\proj_T\batchproj{k}\proj_T)(\W_{k-1})}_F \leq
\half \norm{\W_{k-1}}_F$$ and hence $\norm{\W_k}_F\leq 2^{-k}\norm{\W_0}_F=
2^{-k}\sqrt{r}.$ Since $p \geq \frac{3}{4}\log(n/2) \geq \half \log_2(n/2) \geq
\log_2\sqrt{32rmn/s}$, $\Y \defeq \Y_p$ satisfies the first condition of
\eqref{eq:A2}.  

The second condition of \eqref{eq:A2} follows from the assumptions
\begin{align}
\label{eq:B2}
\flexnorm{\W_{k-1} - \frac{mn}{q}\proj_T\batchproj{k}(\W_{k-1})}_\infty &\leq
\half\norm{\W_{k-1}}_\infty\\
\label{eq:B3}
\flexnorm{\left(\frac{mn}{q}\batchproj{k} -\mc{I}\right)(\W_{k-1})}_2 &\leq
\sqrt{\frac{8mn^2\beta\log(m+n)}{3q}}\norm{\W_{k-1}}_\infty
\end{align}
for all $k$, since \eqref{eq:B2} implies $\norm{\W_k}_\infty \leq
2^{-k}\norm{\U_{L_0}\V_{L_0}^\top}_\infty$, and thus
\begin{align*}
\norm{\proj_{T^\bot}(\Y_p)}_2 &\leq
\sum_{j=1}^p\flexnorm{\frac{mn}{q}\proj_{T^\bot}\batchproj{j}(\W_{j-1})}_2
= \sum_{j=1}^p\flexnorm{\proj_{T^\bot}(\frac{mn}{q}\batchproj{j}(\W_{j-1}) - \W_{j-1})}_2\\
&\leq \sum_{j=1}^p\flexnorm{(\frac{mn}{q}\batchproj{j} - \mc{I})(\W_{j-1})}_2\\
&\leq \sum_{j=1}^p\sqrt{\frac{8mn^2\beta\log(m+n)}{3q}}\norm{\W_{j-1}}_\infty\\
&= 2\sum_{j=1}^p2^{-j}\sqrt{\frac{8mn^2\beta\log(m+n)}{3q}}
\norm{\U_W\V_W^\top}_\infty <\sqrt{\frac{32\mu rn\beta\log(m+n)}{3q}} < 1/2
\end{align*}
by our assumption on $q$.  The first line applies the triangle inequality; the
second holds since $\W_{j-1}\in T$ for each $j$; the third follows because
$\proj_{T^\bot}$ is an orthogonal projection; and the final line exploits
$(\mu,r)$-coherence.

We conclude by bounding the probability of any assumed event failing.
Lem.~\ref{lem:mc-isometry} implies that \eqref{eq:A1} fails to hold with
probability at most $2n^{2-2\beta}$.  For each $k$, \eqref{eq:B1} fails to hold
with probability at most $2n^{2-2\beta}$ by Lem.~\ref{lem:mc-isometry},
\eqref{eq:B2} fails to hold with probability at most $2n^{2-2\beta}$ by
Lem.~\ref{lem:mc-infty},  and \eqref{eq:B3} fails to hold with probability at
most $(m+n)^{1-2\beta}$ by Lem.~\ref{lem:mc-spec}.  Hence, by the union bound,
the conclusion of Thm.~\ref{thm:suff-mc-noise} holds with probability at least
$$1-2n^{2-2\beta} - \frac{3}{4}\log(n/2)(4n^{2-2\beta}+(m+n)^{1-2\beta})\geq
1-\frac{15}{4}\log(n)n^{2-2\beta} \geq 1-4\log(n)n^{2-2\beta}.$$

\section{Proof of Lemma~\lowercase{\ref{lem:sub-spike}}: Conservation of Non-Spikiness}
\label{sec:sub-spike}
By assumption, $$\L_C\L_C^\top = \sum_{a=1}^l \L^{(j_a)}(\L^{(j_a)})^\top$$
where $\{j_1,\dots, j_l\}$ are random indices drawn uniformly and without replacement from $\{1,\dots,n\}$.
Hence, we have that
\begin{align*}
\Earg{\norm{\L_C}_F^2} &= \Earg{\Trarg{\L_C\L_C^\top}} = \Trarg{\Earg{ \sum_{a=1}^l \L^{(j_a)}(\L^{(j_a)})^\top} } \\
&= \Trarg{  \sum_{a=1}^l \frac{1}{n} \sum_{j=1}^n  \L^{(j)}(\L^{(j)})^\top } = \frac{l}{n} \Trarg{\L\L^\top} = \frac{l}{n}\norm{\L}_F^2.
\end{align*}

Since $\norm{\L^{(j)}}^4 \leq m^2\norm{\L}_\infty^4$ for all $j\in\{1,\dots,n\}$, 
Hoeffding's inequality for sampling without replacement~\cite[Sec.~6]{Hoeffding63} implies
\begin{align*}
	\Parg{(1-\epsilon)(l/n)\norm{\L}_F^2 \geq \norm{\L_C}_F^2 } &\leq \exp{-2\epsilon^2\norm{\L}_F^4l^2/(n^2lm^2\norm{\L}_\infty^4)} \\
	&= \exp{-2\epsilon^2l/\spikiness^4(\L)} \leq \delta,
\end{align*}
by our choice of $l$.
Hence, $$\sqrt{l}\frac{1}{\norm{\L_C}_F} \leq \frac{\sqrt{n}}{\sqrt{1-\epsilon}}\frac{1}{\norm{\L}_F}$$ with probability at least $1-\delta$.
Since, $\norm{\L_C}_\infty \leq \norm{\L}_\infty$ almost surely, we have that
$$\spikiness(\L_C) = \frac{\sqrt{ml}\norm{\L_C}_\infty}{\norm{\L_C}_F} \leq  \frac{\sqrt{mn}\norm{\L}_\infty}{\sqrt{1-\epsilon}\norm{\L}_F} = \frac{\spikiness(\L)}{\sqrt{1-\epsilon}}$$
with probability at least $1-\delta$ as desired.

\section{Proof of Theorem~\lowercase{\ref{cor:proj-main-spike}}: Column Projection under Non-Spikiness}
\label{sec:proj-spike}
We now give a proof of Thm.~\ref{cor:proj-main-spike}.  While the results of this
section are stated in terms of i.i.d.\ with-replacement sampling of
columns and rows, a simple argument due to~\cite[Sec.~6]{Hoeffding63} implies
the same conclusions when columns and rows are sampled without replacement.

Our proof builds upon two key results from the randomized matrix approximation literature.
The first relates column projection to randomized matrix multiplication:
\begin{theorem}[Thm.~2 of \citep{DrineasKaMa06b}]
	\label{thm:col-proj-additive}
	Let $\G\in\reals^{m\times l}$ be a matrix of $l$ columns of $\A\in\reals^{m\times n}$, and let $r$ be a nonnegative integer.
	Then,
	$$\norm{\A - \G_r\G_r^+\A}_F \leq \norm{\A - \A_r}_F+ \sqrt{r} \norm{\A\A^\top - (n/l)\G\G^\top}_F.$$
\end{theorem}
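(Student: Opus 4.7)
The plan is to reduce the stated Frobenius bound to a trace comparison of Rayleigh-type quantities. Writing the compact SVD $\G = \U_G\mSigma_G\V_G^\top$, I would first observe that $\G_r\G_r^+$ equals $\U_r\U_r^\top$, the orthogonal projection onto the span of the top-$r$ left singular vectors of $\G$ (since $\G_r = \U_r\mSigma_r\V_r^\top$ and hence $\G_r\G_r^+ = \U_r\mSigma_r\V_r^\top\V_r\mSigma_r^{-1}\U_r^\top = \U_r\U_r^\top$). The Pythagorean identity for orthogonal projections gives
\[
\norm{\A - \G_r\G_r^+\A}_F^2 = \norm{\A}_F^2 - \Trarg{\U_r^\top\A\A^\top\U_r},
\]
while the Eckart--Young theorem yields $\norm{\A - \A_r}_F^2 = \norm{\A}_F^2 - \textsum_{i=1}^r\sigma_i^2(\A)$. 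Subtracting, the proof reduces to controlling the nonnegative quantity $\Delta_0 \defeq \textsum_{i=1}^r\sigma_i^2(\A) - \Trarg{\U_r^\top\A\A^\top\U_r}$ by a multiple of $\norm{\A\A^\top - (n/l)\G\G^\top}_F$.

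The key step is a cross-decomposition with $(n/l)\G\G^\top$ as a reference Gram matrix, exploiting two complementary optimality properties: the projection $\U_r\U_r^\top$ maximizes $\langle(n/l)\G\G^\top, S\rangle$ among rank-$r$ orthogonal projections $S$ (since it diagonalizes the top-$r$ eigenspace of $(n/l)\G\G^\top$), while $\V_r\V_r^\top$ -- with $\V_r$ the top $r$ left singular vectors of $\A$ -- plays the same role for $\A\A^\top$. Adding and subtracting $(n/l)\Trarg{\U_r^\top\G\G^\top\U_r}$ and $(n/l)\Trarg{\V_r^\top\G\G^\top\V_r}$ inside $\Delta_0$, and discarding the nonnegative swap term $(n/l)(\Trarg{\U_r^\top\G\G^\top\U_r} - \Trarg{\V_r^\top\G\G^\top\V_r}) \geq 0$, collapses the remainder to $\Delta_0 \leq \langle\A\A^\top - (n/l)\G\G^\top,\, \V_r\V_r^\top - \U_r\U_r^\top\rangle$. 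Cauchy--Schwarz in the Frobenius inner product with $\norm{\V_r\V_r^\top - \U_r\U_r^\top}_F \leq \sqrt{2r}$ (the standard bound for two rank-$r$ orthogonal projections) then furnishes the quadratic estimate
\[
\norm{\A - \G_r\G_r^+\A}_F^2 \leq \norm{\A - \A_r}_F^2 + \sqrt{2r}\,\norm{\A\A^\top - (n/l)\G\G^\top}_F.
\]

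The main obstacle is sharpening this quadratic estimate into the linear form with the clean $\sqrt{r}$ coefficient stated in the theorem. The route I would pursue is to split $\Delta_0$ at $(n/l)\textsum_{i=1}^r\sigma_i^2(\G)$ into two pieces: the eigenvalue-difference piece $\textsum_{i=1}^r\sigma_i^2(\A) - (n/l)\textsum_{i=1}^r\sigma_i^2(\G)$, bounded by $\sqrt{r}\,\norm{\A\A^\top - (n/l)\G\G^\top}_F$ via the Lidskii--Wielandt--Hoffman inequality $\textsum_i|\lambda_i(\A\A^\top) - \lambda_i((n/l)\G\G^\top)|^2 \leq \norm{\A\A^\top - (n/l)\G\G^\top}_F^2$ combined with Cauchy--Schwarz over the top $r$ indices; and the Rayleigh-quotient piece $(n/l)\textsum_{i=1}^r\sigma_i^2(\G) - \Trarg{\U_r^\top\A\A^\top\U_r} = -\langle\A\A^\top - (n/l)\G\G^\top,\U_r\U_r^\top\rangle$, bounded by the same quantity via Cauchy--Schwarz with $\norm{\U_r\U_r^\top}_F = \sqrt{r}$. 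Matching each of these two $\sqrt{r}$ factors against one of the two cross-terms of the squared target right-hand side $(\norm{\A-\A_r}_F + \sqrt{r}\,\norm{\A\A^\top-(n/l)\G\G^\top}_F)^2$, and invoking $\sqrt{x+y}\leq\sqrt{x}+\sqrt{y}$ to transit from squared to additive form, should recover the advertised bound; the technical crux is bookkeeping signs and inner-product magnitudes so that no extraneous $\sqrt{2}$ constant survives.
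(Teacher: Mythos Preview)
The paper does not prove this statement at all: it is quoted as Theorem~2 of Drineas, Kannan, and Mahoney and used as a black box in the proof of Thm.~\ref{cor:proj-main-spike}. So there is no in-paper argument to compare against, and I will assess your proposal on its own terms.

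Your reduction is sound. The identification $\G_r\G_r^+ = \U_r\U_r^\top$, the Pythagorean identity, and the resulting expression
\[
\norm{\A - \G_r\G_r^+\A}_F^2 - \norm{\A - \A_r}_F^2 \;=\; \textsum_{i=1}^r\sigma_i^2(\A) - \Trarg{\U_r^\top\A\A^\top\U_r} \;=:\; \Delta_0
\]
are all correct. Both of your bounds on $\Delta_0$ are valid: the swap argument gives $\Delta_0 \leq \sqrt{2r}\,E$, and the Hoffman--Wielandt-plus-Rayleigh split gives $\Delta_0 \leq 2\sqrt{r}\,E$, where $E = \norm{\A\A^\top - (n/l)\G\G^\top}_F$. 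Either one yields the \emph{squared} inequality
\[
\norm{\A - \G_r\G_r^+\A}_F^2 \;\leq\; \norm{\A - \A_r}_F^2 + 2\sqrt{r}\,\norm{\A\A^\top - (n/l)\G\G^\top}_F,
\]
and this is in fact exactly what Theorem~2 of Drineas--Kannan--Mahoney asserts. The unsquared version quoted here appears to be a transcription slip in the present paper.

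Your final paragraph, however, does not and cannot close the gap to the linear form as written. Passing from $\text{LHS}^2 \leq a^2 + 2\sqrt{r}\,E$ to $\text{LHS} \leq a + \sqrt{r}\,E$ would require $2\sqrt{r}\,E \leq 2a\sqrt{r}\,E + rE^2$, i.e.\ $a + \tfrac{1}{2}\sqrt{r}\,E \geq 1$, which is false whenever both $a = \norm{\A - \A_r}_F$ and $E$ are small. This is not a matter of bookkeeping constants: the obstruction is dimensional, since the squared bound mixes a term of order $\norm{\cdot}_F^2$ with one of order $\norm{\cdot}_F$. In short, your argument correctly proves the actual DKM theorem (the squared one); the linear statement you were aiming for is not the right target.
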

The second allows us to bound $\norm{\A\A^\top - (n/l)\G\G^\top}_F$ in probability when entries are bounded:
\begin{lemma}[Lem.~2 of \citep{DrineasKaMa06a}]
\label{lem:mat-mult-bounded}
Given a failure probability $\delta\in(0,1]$ and matrices $\A\in\reals^{m\times k}$ and $ \B\in\reals^{k\times n}$ 
with $\norm{\A}_\infty \leq b$ and $\norm{\B}_\infty \leq b$, 
suppose that $\G$ is a matrix of $l$ columns drawn uniformly with replacement
from $\A$ and that $\H$ is a matrix of the corresponding $l$ rows of $\B$.
Then, with probability at least $1-\delta$, 
\begin{equation*}
|(\A\B)_{ij} - (n/l)(\G\H)_{ij}| \leq \frac{kb^2}{\sqrt{l}}\sqrt{8\log(2mn/\delta)}\quad \forall i,j.
\end{equation*}
\end{lemma}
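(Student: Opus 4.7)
The plan is to prove the uniform entrywise bound by a fixed-entry Hoeffding inequality followed by a union bound over all $mn$ index pairs. For a fixed pair $(i,j)$, observe that $(\A\B)_{ij} = \sum_{t=1}^{k} \A_{it}\B_{tj}$. Letting $i_1,\ldots,i_l$ denote the i.i.d.\ uniform indices in $\{1,\ldots,k\}$ used to select the columns of $\G$ (and correspondingly the rows of $\H$), define $Y_s \defeq \A_{i,i_s}\B_{i_s,j}$. Then $\Earg{Y_s} = (1/k)(\A\B)_{ij}$, so $(k/l)\sum_{s=1}^{l} Y_s = (k/l)(\G\H)_{ij}$ is an unbiased estimator of $(\A\B)_{ij}$ (I am reading the normalization in the stated bound as $k/l$, dictated by $(\A\B)_{ij}$ being a sum of $k$ terms).

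First I would record the uniform bound on each summand: since $\norm{\A}_\infty\leq b$ and $\norm{\B}_\infty\leq b$, we have $|Y_s|\leq b^2$ and hence $|(k/l)Y_s|\leq kb^2/l$ with range $2kb^2/l$. Next, because the $Y_s$ are i.i.d.\ (sampling is with replacement), Hoeffding's inequality yields
$$\Parg{\left|\tfrac{k}{l}(\G\H)_{ij}-(\A\B)_{ij}\right|\geq t}\leq 2\exp\!\left(-\tfrac{t^2 l}{2k^2 b^4}\right).$$
Setting the right-hand side equal to $\delta/(mn)$ and solving for $t$ gives a per-entry bound of the form $(kb^2/\sqrt{l})\sqrt{c\log(2mn/\delta)}$; absorbing the outer factor of $2$ from the two-sided Hoeffding bound into the logarithm (via $2\cdot(2mn/\delta)$) yields a constant of $8$ under the square root, recovering the stated $\sqrt{8\log(2mn/\delta)}$.

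The final step is a union bound over the $mn$ index pairs $(i,j)$, each of which fails with probability at most $\delta/(mn)$, producing the uniform entrywise guarantee with probability at least $1-\delta$. The main obstacle is purely bookkeeping: tracking constants so that the factor $\sqrt{8}$ emerges exactly (rather than the looser $\sqrt{2}$ one gets by using $\delta$ in place of $\delta/(2mn)$ inside the exponent), and reconciling the normalization in the stated bound with the natural $k/l$ rescaling.
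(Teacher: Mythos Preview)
The paper does not prove this lemma; it is quoted verbatim as Lemma~2 of \citet{DrineasKaMa06a} and used as a black box. Your Hoeffding-plus-union-bound argument is exactly the standard proof of that result, and it is correct in substance.

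Two small bookkeeping remarks. First, you are right that the normalization should be $k/l$ rather than $n/l$: with $\A\in\reals^{m\times k}$ and columns sampled uniformly from $k$ possibilities, the unbiased estimator of $(\A\B)_{ij}$ is $(k/l)(\G\H)_{ij}$; the $n/l$ in the displayed statement is a typo carried over in the paper's restatement. Second, your attempt to recover the constant $8$ exactly is unnecessary. Your derivation already yields
\[
\Parg{\left|\tfrac{k}{l}(\G\H)_{ij}-(\A\B)_{ij}\right|\geq t}\leq 2\exp\!\left(-\tfrac{t^2 l}{2k^2 b^4}\right),
\]
and setting the right side to $\delta/(mn)$ gives $t=(kb^2/\sqrt{l})\sqrt{2\log(2mn/\delta)}$. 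This is strictly tighter than the stated bound with $\sqrt{8}$; the looser constant in the cited lemma is simply slack in the original statement, not something your argument needs to reproduce. So there is no obstacle---your proof is complete once you note that $\sqrt{2}\leq\sqrt{8}$.
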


Under our assumption, $\norm{\M}_\infty$ is bounded by $\alpha/\sqrt{mn}$. 
Hence, Lem.~\ref{lem:mat-mult-bounded} with  $\A=\M$ and $\B = \M^\top$ guarantees
$$\norm{\M\M^\top - (n/l)\C\C^\top}_F^2 \leq \frac{m^2n^2\spikiness^48\log(2mn/\delta)}{m^2n^2l} \leq \epsilon^2/r $$
with probability at least $1-\delta$, by our choice of $l$.

Now, Thm.~\ref{thm:col-proj-additive} implies that
\begin{align*}
	\norm{\M - \C\C^+\M}_F \leq \norm{\M - \C_r\C_r^+\M}_F &\leq \norm{\M - \M_r}_F+ \sqrt{r} \norm{\M\M^\top - (n/l)\C\C^\top}_F\\
	&\leq \norm{\M - \L}_F + \epsilon
\end{align*}
with probability at least $1-\delta$, as desired.

\section{Proof of Theorem~\lowercase{\ref{thm:master-spike}}: Spikiness Master Theorem}
\label{sec:proof-master_spike}
Define $A(\X)$ as the event that a matrix
$\X$ is $(\spikiness\sqrt{1+\epsilon/(4\sqrt{r})})$-spiky.
Since $\sqrt{1+\epsilon/(4\sqrt{r})} \leq \sqrt{1.25}$ for all $\epsilon\in (0,1]$ and $r\geq 1$,
$\X$ is $(\sqrt{1.25}\spikiness)$-spiky whenever $A(\X)$ holds.

Let $\L_0 = [ \C_{0,1}, \ldots, \C_{0,t}]$ and $\tilde \L = [ \hat \C_{1},
\ldots, \hat \C_{t}]$, and define $H$ as the event $\norm{\tilde \L - \Lprojmf}_F
\le \norm{\L_0 - \tilde \L}_F + \epsilon$.  
When $H$ holds, we have that 
\begin{align*}
	\norm{\L_0 - \Lprojmf}_F &\le  \norm{\L_0 - \tilde \L}_F+\norm{\tilde \L - \Lprojmf}_F 
	\le 2\norm{\L_0 - \tilde \L}_F + \epsilon\\
	&= 2\sqrt{\textsum_{i=1}^t\norm{\C_{0,i} - \hat\C_i}_F^2} + \epsilon,
\end{align*} by the triangle inequality, and hence
it suffices to lower bound $\Parg{H \cap \textbigcap_i A(\C_{0,i})}.$

By assumption,
\begin{align*}
l\geq 13r\spikiness^4\log(4mn/\delta)/\epsilon^2 
  \geq \spikiness^4\log(2n/\delta)/(2\tilde\epsilon^2)
\end{align*}
where $\tilde\epsilon \defeq \epsilon/(5\sqrt{r})$. %
Hence, for each $i$, Lem.~\ref{lem:sub-spike} implies that $\spikiness(\C_{0,i})\leq 
\spikiness/\sqrt{1-\tilde\epsilon}$ with probability at least $1-\delta/(2n)$.
Since 
$$(1-\epsilon/(5\sqrt{r}))(1+\epsilon/(4\sqrt{r})) = 1 + {\epsilon}{}(1-\epsilon/\sqrt{r})/(20\sqrt{r}) \geq 1$$
it follows that
$$\frac{1}{\sqrt{1-\tilde\epsilon}} = \frac{1}{\sqrt{1-\epsilon/(5\sqrt{r})}} \leq \sqrt{1+\epsilon/(4\sqrt{r})},$$ 
so that each event $A(\C_{0,i})$ also holds with probability at least $1-\delta/(2n)$.

Our assumption that $\norm{\hat\C_i}_\infty \leq \sqrt{1.25}\spikiness/\sqrt{mn}$ for all $i$
implies that $\norm{\tilde \L}_\infty \leq \sqrt{1.25}\spikiness/\sqrt{mn}$.
Our choice of $l$, with a factor of $\log(4mn/\delta)$, therefore implies that 
$H$ holds with probability at least
$1-\delta/2$ by Thm.~\ref{cor:proj-main-spike}.  Hence, by the union bound,
\begin{align*}
\Parg{H \cap \textbigcap_i A(\C_{0,i})} \geq 1-\Parg{H^c}-\textsum_{i}\Parg{A(\C_{0,i})^c} \geq 1-\delta/2 - t \delta/(2n)
\geq 1-\delta.
\end{align*}

To establish the \rpmf bound, redefine $H$ as the event $\norm{\tilde \L - \Lrp}_F \le (2+\epsilon)\norm{\L_0 - \tilde \L}_F$.
Since $p \geq 242\ r\log(14/\delta)/\epsilon^2$, $H$ holds with probability at least
$1-\delta/2$ by Cor.~\ref{cor:rp-main}, and the \rpmf bound follows as above.

\section{Proof of Corollary~\lowercase{\ref{cor:fast-mc-spike}}: Noisy MC under Non-Spikiness}
\label{sec:fast-mc-spike}
\subsection{Proof of \projmf Bound}
We begin by proving the \projmf bound.  
Let $G$ be the event that $$\norm{\L_0 - \Lprojmf}_F \leq 2\sqrt{c_1\maxarg{({l/}{n})\nu^2, 1}/\beta} + \epsilon,$$
$H$ be the event that $$\norm{\L_0 - \Lprojmf}_F \leq 2\sqrt{\textsum_{i=1}^t \norm{\C_{0,i} - \hat\C_i}_F^2} + \epsilon,$$ 
$A(\X)$ be the event that a matrix $\X$ is $(\sqrt{1.25}\spikiness)$-spiky,
and, for each $i\in\{1,\ldots,t\}$, 
$B_i$ be the event that $\norm{\C_{0,i} - \hat \C_i}_F^2 > (l/n)c_1\maxarg{({l/}{n})\nu^2, 1}/\beta$.

By definition, $\norm{\hat\C_i}_\infty \leq \sqrt{1.25}\alpha/\sqrt{ml}$ for all $i$.
Furthermore, we have assumed that
\begin{align*}
l &\geq 13(c_3+1)\sqrt{\frac{(m+n)\log(m+n)\beta}{s}}nr\spikiness^4\log(4mn)/\epsilon^2 \\
&\geq13r\spikiness^4(\log(4mn)+c_3\log(m+n))/\epsilon^2
\geq13r\spikiness^4\log(4mn(m+l)^{c_3})/\epsilon^2.
\end{align*}
Hence the Spikiness Master Theorem (Thm.~\ref{thm:master-spike}) guarantees that,
with probability at least $1 - \exp{-c_3\log(m + l)}$, $H$ holds and
the event $A(\C_{0,i})$ holds for each $i$.
Since $G$ holds whenever $H$ holds and $B_i^c$ holds for each $i$, we have
\begin{align*}
\Parg{G} &\geq \Parg{H \cap \textbigcap_i B_i^c} \geq \Parg{H\cap\textbigcap_i A(\C_{0,i})\cap\textbigcap_i B_i^c} \\
&= \Parg{H\cap\textbigcap_i A(\C_{0,i})}\Parg{\textbigcap_i B_i^c \mid H\cap\textbigcap_i A(\C_{0,i})} \\
&= \Parg{H\cap\textbigcap_i A(\C_{0,i})}(1 - \Parg{\textbigcup_i B_i \mid H\cap\textbigcap_i A(\C_{0,i})}) \\
&\geq (1- \exp{-c_3\log(m + l)}) (1 - \textsum_i \Parg{B_i \mid A(\C_{0,i})}) \\
&\geq 1 - (c_2+1)\exp{-c_3\log(m + l)} - \textsum_i \Parg{B_i \mid A(\C_{0,i})}.
\end{align*}
To prove our desired claim, it therefore suffices to show 
$$\Parg{B_i \mid A(\C_{0,i})} \leq (c_2+1) \exp{-c_3\log(m + l)} $$ for each $i$.

For each $i$, let $D_i$ be the event that $s_i < 1.25\spikiness^2 \beta (n/l)r(m+l)\log(m+l),$ 
where $s_i$ is the number of revealed entries in $\C_{0,i}$.
Since $\rank{\C_{0,i}} \leq \rank{\L_0} = r$ and
$\norm{\C_{0,i}}_F \leq \norm{\L_0}_F \leq 1$,
Cor.~\ref{cor:mc-spike} implies that
\begin{align} \label{eq:submc-bound-spike}
\Parg{B_i \mid A(\C_{0,i})}  &\leq \Parg{B_i \mid A(\C_{0,i}), D_i^c} +
\Parg{D_i \mid A(\C_{0,i})} \notag \\ 
&\leq c_2\exp{-c_3\log(m+l)} + \Parg{D_i}.
\end{align}

Further, since the support of $\S_0$ is uniformly distributed and of
cardinality $s$, the variable $s_i$ has a hypergeometric distribution with
$\mE(s_i) = \frac{sl}{n}$ and hence satisfies Hoeffding's inequality for the
hypergeometric distribution~\cite[Sec.~6]{Hoeffding63}: $$\Parg{s_i \leq
\mE(s_i) - st} \leq \exp{-2st^2}.$$ 
Our assumption on $l$ implies that
\begin{align*}
\frac{l}{n} 
&\geq 169(c_3+1)^2\spikiness^8 \beta\frac{n}{ls}r^2(m+n)\log(m+n)\log^2(4mn)/\epsilon^4 \\
&\geq 1.25\spikiness^2 \beta\frac{n}{ls}r(m+l)\log(m+l) + \sqrt{c_3\log(m+l)/(2s)},
\end{align*}
and therefore
\begin{align*}
\Parg{D_i} 
&= \Parg{s_i < \mE(s_i) - s\left(\frac{l}{n} - 1.25\spikiness^2 \beta\frac{n}{ls}r(m+l)\log(m+l)\right)}\\
&= \Parg{s_i < \mE(s_i) - s\sqrt{c_3\log(m+l)/(2s)}}\\
&\leq \exp{-2sc_3\log(m+l)/(2s)} = \exp{-c_3\log(m + l)}.
\end{align*}
Combined with \eqref{eq:submc-bound-spike}, this yields 
$\Parg{B_i \mid A(\C_{0,i})} \leq (c_2+1)\exp{-c_3\log(m + l)}$ for each $i$, and the \projmf result
follows.

\subsection{Proof of \rpmf Bound}
Let $G$ be the event that $$\norm{\L_0 - \Lrpmf}_F \leq (2+\epsilon)\sqrt{c_1\maxarg{({l/}{n})\nu^2, 1}/\beta}$$ and
$H$ be the event that $$\norm{\L_0 - \Lrpmf}_F \leq (2+\epsilon)\sqrt{\textsum_{i=1}^t \norm{\C_{0,i} - \hat\C_i}_F^2}.$$ 
Since $p \geq 242\ r\log(14(m + l)^{c_3})/\epsilon^2$, the \rpmf bound follows in an identical manner from 
the Spikiness Master Theorem (Thm.~\ref{thm:master-spike}).

\ifdefined\jmlrformat\else
\subsubsection*{Acknowledgments}
Lester Mackey gratefully acknowledges the support of DARPA through the National
Defense Science and Engineering Graduate Fellowship Program.  Ameet Talwalkar
gratefully acknowledges support from NSF award No. 1122732.

\fi

\ifdefined\jmlrformat
	\bibliography{refs}
\else
	\bibliographystyle{plainnat}
	{\small{\bibliography{refs}}}
\fi

\end{document}